\documentclass[pdflatex,sn-mathphys-num]{sn-jnl}


\usepackage{graphicx}%
\usepackage{multirow}%
\usepackage{amsmath,amssymb,amsfonts}%
\usepackage{amsthm}%
\usepackage{mathrsfs}%
\usepackage[title]{appendix}%
\usepackage{xcolor}%
\usepackage{textcomp}%
\usepackage{manyfoot}%
\usepackage{booktabs}%
\usepackage{algorithm}%
\usepackage{algorithmicx}%
\usepackage{algpseudocode}%
\usepackage{listings}%
\usepackage{mathrsfs}


\theoremstyle{thmstyleone}%
\newtheorem{theorem}{Theorem}
\newtheorem{proposition}[theorem]{Proposition}%

\theoremstyle{thmstyletwo}%

\theoremstyle{thmstylethree}%

\raggedbottom

\usepackage{amsthm}
\usepackage{amsmath}
\usepackage{threeparttable}
\usepackage{graphicx}
\usepackage{subcaption}


\newcommand{\R}{\mathbb{R}}
\newcommand{\N}{\mathbb{N}}
\newcommand{\E}{\mathbb{E}}
\newcommand{\bL}{\mathbb{L}}
\renewcommand{\P}{\mathbb{P}}

\newcommand{\CF}{\mathcal{F}}
\newcommand{\CW}{\mathcal{W}}

\newcommand{\cL}{\mathcal{L}}
\newcommand{\CM}{\mathcal{M}}
\newcommand{\dd}{\text{d}}
\newcommand{\cT}{\mathcal{T}}
\newcommand{\cH}{\mathcal{H}}


\newtheorem{lemma}[theorem]{Lemma}

\newtheorem{assumption}[theorem]{Assumption}

\begin{document}

\title[Article Title]{When Langevin Monte Carlo Meets Randomization: New Sampling Algorithms with Non-asymptotic Error Bounds beyond Log-Concavity and Gradient Lipschitzness}


\author[1]{\fnm{Xiaojie} \sur{Wang}}\email{x.j.wang7@csu.edu.cn}

\author*[1]{\fnm{Bin} \sur{Yang}}\email{b.yang1@csu.edu.cn}


\affil[1]{\orgdiv{School of Mathematics and Statistics}, \orgname{HNP-LAMA, Central South University},
\\
\orgaddress{ \city{Changsha}, \postcode{410083}, \state{Hunan},
\country{China}}}





\abstract{Efficient sampling from complex and high dimensional target distributions turns out to be a fundamental task in diverse disciplines such as scientific computing, statistics and machine learning. In this paper, we propose a new kind of randomized splitting Langevin Monte Carlo (RSLMC) algorithm for sampling from high dimensional distributions without log-concavity. Compared with the existing randomized Langevin Monte Carlo (RLMC), the newly proposed RSLMC algorithm requires less evaluations of gradients and is thus computationally cheaper. Under the gradient Lipschitz condition and the log-Sobolev inequality, we prove a uniform-in-time error bound in $\mathcal{W}_2$-distance of order $O(\sqrt{d}h)$ for both RLMC and RSLMC sampling algorithms, which matches the best one in the literature under the log-concavity condition. Moreover, when the gradient of the potential $U$ is non-globally Lipschitz with superlinear growth, new modified R(S)LMC algorithms are introduced and analyzed, with non-asymptotic error bounds established. Numerical examples are finally reported to corroborate the theoretical findings. 
}

\keywords{Langevin dynamics, splitting scheme, randomized Langevin Monte Carlo, non-asymptotic error bound, log-Sobolev inequality}


\pacs[MSC Classification]{60H35, 60H10, 65C30}

\maketitle

\section{Introduction}
\label{Randomization-sec:introduction}

Sampling from a high dimensional target distribution 
$
\pi(\dd x) \propto \exp (-U( x)) \, \mathrm{d} x,
    \,
    x \in \mathbb{R}^d,
    \,
    d \gg 1
$ 
becomes a core problem in many research areas of scientific computing, statistics and machine learning \cite{liu2001monte,robert1999monte}. 
A prominent approach to this problem is the Langevin type sampling algorithm, which has been extensively studied in the literature.
The key idea of the Langevin sampling algorithm is to construct a Markov chain based on a time discretization of the continuous-time (overdamped) Langevin diffusion:
\begin{equation}
\label{Randomization-eq:langevin_SDE}
\dd X_{t}
=
-\nabla U(X_{t})
\, \dd t
+
\sqrt{2}
\, \dd W_{t},
\quad  
X_{0}=x_{0},
\quad 
t>0,
\end{equation}
where $W_\cdot :=\left(W^{1}_\cdot, W^{2}_\cdot, \cdots, W^{d}_\cdot\right)^T:[0, \infty) \times \Omega_W \rightarrow \mathbb{R}^d$ is a $d$-dimensional Brownian motion defined on a filtered probability space $(\Omega_W,\mathcal{F}^W,$\ $\{\mathcal{F}^W_t\}_{t\geq 0},\mathbb{P}_W)$, satisfying the usual conditions. The initial data $x_0: \Omega_W \rightarrow \mathbb{R}^d$ is assumed to be $\mathcal{F}^W_0$-measurable. Under mild conditions, the Langevin stochastic differential equation (SDE) admits the target distribution $\pi(\dd x) \propto \exp (-U( x)) \, \mathrm{d} x$ as its unique invariant distribution (see, e.g., \cite{pavliotis2014stochastic}). 
%
Therefore, one can turn sampling from the target distribution into long-time approximations of the Langevin SDE.
For a uniform timestep $h>0$, a popular choice of the discretization scheme for \eqref{Randomization-eq:langevin_SDE} is the Euler-Maruyama method defined by
\begin{equation}\label{Randomization-eq:intro-LMC}
\hat{Y}_{n+1} 
=
\hat{Y}_{n}  
-   
\nabla U  (  \hat{Y}_{n}  ) h
+
\sqrt{2 h }    \zeta_{n+1},
\quad 
\hat{Y}_0  =  x_0,
\end{equation}
where 
$
\zeta_{n}
:= 
(  \zeta_{n}^1,  
   \zeta_{n}^2, 
    \cdots, 
    \zeta_{n}^d
)^T,
$
$
n \in \N
$,
are i.i.d standard
$d$-dimensional Gaussian variables. Such an algorithm, usually termed as unadjusted Langevin algorithm (ULA) or the Langevin Monte Carlo (LMC), has been extensively investigated over recent years, with a particular focus on the non-asymptotic error analysis. Next we start from presenting some related works on this topic.

\subsection{Randomized sampling algorithms: RLMC and RSLMC}

The early non-asymptotic error analysis of LMC was carried out under a strongly log-concave condition ($m>0$):
\begin{equation}
\label{Randomization-eq:log-concave_condition}
\langle
  x-y,
  \nabla U\left(x\right)
  -
  \nabla U\left(y\right)
\rangle 
\geq 
 m |x-y|^2, 
\quad 
\forall x,y \in \mathbb{R}^d,
\end{equation}
that is, the potential
$U$ is strongly convex (see, e.g., \cite{cheng2018convergence,chewi2021optimal,dalalyan2017further,dalalyan2017theoretical,dalalyan2019user,durmus2019high,li2022sqrt,li2019stochastic,roberts1996exponential,sabanis2019higher,vempala2019rapid}, to just mention a few).
As shown in \cite{cheng2018convergence,dalalyan2017further,dalalyan2017theoretical,durmus2019analysis,Durmus2017Non},
the non-asymptotic convergence of order $O(\sqrt{dh})$ can be obtained for LMC under a gradient Lipschitz condition:
\begin{equation}
\label{Randomization-eq:intro-gradient-Lipschitz}
\big|
\nabla U
(x) 
-
\nabla U
(y) 
\big| 
\leq 
L_1  |  x  -  y|,
\quad 
\forall x,y \in \mathbb{R}^d.
\end{equation}
%
In order to  attain the first order convergence, the price to pay is usually putting additional smoothness assumption on the potential $U$. Indeed, by additionally imposing the Hessian Lipschitzness condition:
\begin{equation}
\label{Randomization-eq:intro-Hessian-Lipschitz}
|| 
\nabla^2 U (x) 
-
\nabla^2 U (x) 
||
\leq  
\bL 
|x-y|,
\quad 
\forall 
x,y\in \R^d,
\end{equation}
Durmus and Moulines \cite{durmus2019high} proved an improved error bound $O(dh)$ in $\mathcal{W}_2$-distance for the LMC.
Under the linear growth condition of the $3$-rd derivative of $U$:
\begin{equation}
\label{Randomization-eq:intro-Linear_growth_of_the_3rd-order_derivative}
   |
      \nabla(\Delta U\left(x\right))
   |
\leq 
   L_0'd^{\frac12}   
   +   
   L_0 | x  |,
\quad 
\forall x \in \mathbb{R}^d,
\end{equation}
instead of the Hessian Lipschitzness condition \eqref{Randomization-eq:intro-Hessian-Lipschitz},  Li et al. \cite{li2022sqrt} derived a further improved error bound $O(\sqrt{d}h)$ under the strongly log-concave condition.
An interesting question is whether there is any sampling algorithm that has a non-asymptotic error bound $O(\sqrt{d}h)$, without requiring additional smoothness assumptions on the potential $U$ other than the gradient Lipschitz condition.
Before answering this question, let us recall an exiting randomized Langevin Monte Carlo (RLMC):
\begin{equation}
\label{Randomization-eq:RLMC}
\begin{aligned}
Y_{n+1}^\tau
=  &
Y_n 
-
\nabla U (
Y_n 
)
\tau_{n+1} h 
+ 
\sqrt{2} 
\Delta W_{n+1}^\tau,
\quad 
Y_0=x_0,
\\
Y_{n+1}
=  &
Y_n  
-
\nabla U (Y_{n+1}^\tau ) h 
+ 
\sqrt{2} 
\Delta W_{n+1},
\end{aligned}
\end{equation}
where $\{\tau_n\}_{n\in\N}$ is an i.i.d family of  uniform distribution on the interval $(0,1)$ ($\mathcal{U}(0,1)$ in  short) defined on an additional filtered probability space $(\Omega_\tau, \mathcal{F}^\tau, \{\mathcal{F}^\tau_n\}_{n\in \N}, \P_\tau)$ with $\mathcal{F}^\tau_n$ being the $\sigma$-algebra generated by $\{\tau_n\}_{n\in \N}$, 
$\Delta W_{n+1}^\tau
:=
W_{t_n+\tau_{n+1}h}
-
W_{t_n}$  and 
$\Delta W_{n+1}
:=
W_{t_{n+1}}
-
W_{t_n}$.
Here the random variables $\{\tau_n\}_{n\in\N}$ are artificially added random inputs, which are assumed to be independent
of the randomness already presented in Langevin SDE \eqref{Randomization-eq:langevin_SDE}.
We mention that such a randomized method was first introduced for ordinary differential equations (ODEs) with irregular coefficients \cite{DAUN2011300,heinrich2008randomized,jentzen2009random,Stengle1990Numerical,Stengle1995Error}, and was further extended to SDEs with irregular coefficients \cite{kruse2017error,kruse2019randomized,przybylowicz2014strong} in recent years. In 2019, the idea of  randomization was introduced for the LMC sampling \cite{shen2019randomized}.
As shown by \cite{shen2019randomized,yu2024langevin}, RLMC exhibits better performance that the classical LMC in terms of both tolerance and condition number dependency under the log-concavity condition. In particular, the non-asymptotic error bound $O(\sqrt{d}h)$ can be achieved for RLMC just under the gradient Lipschitz condition, without requiring
additional smoothness conditions on $\nabla U$. This thus gives a positive answer to the aforementioned question.

To implement the scheme \eqref{Randomization-eq:RLMC}, one needs to evaluate $\nabla U$ twice for every time step.
In many applications such as sampling from the posterior distribution in Bayesian inverse problem \cite{stuart2010inverse}, the evaluation of $\nabla U$ would be computationally expensive.
%
%
A natural and interesting question thus emerges: 
\vspace{0.2cm}

{\bf (Q1).} {\it
Can one construct a new randomized LMC with only one evaluation of $\nabla U$ per one time step, still preserving the same convergence rate as the existing RLMC?
}
\vspace{0.2cm}

Following this direction and inspired by a randomized splitting idea, we introduce a new randomized splitting LMC (RSLMC) algorithm with three splitting steps, given by 
\begin{equation} \label{eq:RSLMC0}
\begin{aligned}
\mathbb{Y}_{n+1}^{(1)}
=  &
\mathbb{Y}_n 
+ 
\sqrt{2} 
(W_{t_n+\tau_{n+1}h}
-
W_{t_n}),
\quad
\mathbb{Y}_0=x_0,
\\
\mathbb{Y}_{n+1}^{(2)}
= & 
\mathbb{Y}_{n+1}^{(1)}
-
\nabla U ( \mathbb{Y}_{n+1}^{(1)} ) h,
\\ 
\mathbb{Y}_{n+1}
=   & 
\mathbb{Y}_{n+1}^{(2)}
+
\sqrt{2} 
(
W_{t_{n+1}}
-
W_{t_n+\tau_{n+1}h}
).
\end{aligned}
\end{equation}
Equivalently, the new algorithm \eqref{eq:RSLMC0} can be rewritten in a compact form of
\begin{equation}
\label{Randomization-eq:RSLMC}
\begin{aligned}
\mathbb{Y}_{n+1}^\tau
=  &
\mathbb{Y}_n 
+ 
\sqrt{2} 
\Delta W_{n+1}^\tau,
\quad 
\mathbb{Y}_0=x_0,
\\
\mathbb{Y}_{n+1}
=  &
\mathbb{Y}_n  
-
\nabla U (\mathbb{Y}_{n+1}^\tau ) h 
+ 
\sqrt{2} 
\Delta W_{n+1}.
\end{aligned}
\end{equation}
Clearly, the RSLMC algorithm involves only one evaluation of $\nabla U$ per one time step, which is computationally cheaper than the existing RLMC \eqref{Randomization-eq:RLMC}. Next we show that the new RSLMC algorithm still preserves the same convergence rate as the existing RLMC \eqref{Randomization-eq:RLMC}.

Recall that the above non-asymptotic error bounds we just mentioned are all obtained under the strongly log-concave condition, which is, however, extremely restrictive and seldom satisfied in practice. Without the log-concavity condition, the corresponding non-asymptotic error analysis turns out to be a challenging task (see, e.g., \cite{cheng2018sharpconvergencerateslangevin,chewi2024analysis,li2025unadjusted,lytras2024tamed,lytras2025taming,majka2020non,Mou2022Improved,neufeld2025non,pages2023unadjusted,pang2023projected,vempala2019rapid}).
When the potential $U$ is strongly-convex outside a ball but possibly nonconvex inside this ball, Cheng et al  \cite{cheng2018sharpconvergencerateslangevin}
established an upper bound
$O(\sqrt{dh})$ for LMC in $\CW_1$-distance.
Under convexity at infinity condition, 
Majka et al \cite{majka2020non}
showed error bounds $O (\sqrt[4]{dh} )$ and $O(\sqrt{dh})$ in $\mathcal{W}_2$- and $\mathcal{W}_1$-distance, respectively.
Later on, Mou et al \cite{Mou2022Improved} 
obtained improved Kullback-Leibler divergence bounds, implying an error bound $ O ( d h )$ in both total variation distance and $\mathcal{W}_2$-distance, under smoothness conditions on $U$ including the Hessian Lipschitzness condition \eqref{Randomization-eq:intro-Hessian-Lipschitz} and the assumption that the target distribution satisfies a log-Sobolev inequality (LSI).
Very recently, Yang and Wang \cite{yang2025nonasymptotic} and Altschuler and Chew \cite{Altschuler2024ShiftedCI} proved an  error bound $O(\sqrt{d} h)$ in $\mathcal{W}_2$-distance for the classical LMC, also in the framework of LSI, under smoothness conditions on $U$ including the linear growth condition \eqref{Randomization-eq:intro-Linear_growth_of_the_3rd-order_derivative} of the $3$-rd derivative of $U$.



Bearing these error bounds in mind, we want to raise the following interesting question:
\vspace{0.2cm}

{\bf (Q2).} {\it Beyond log-concavity, can the non-asymptotic error bound $O(\sqrt{d}h )$ in $\mathcal{W}_2$-distance still hold true for both RLMC and RSLMC, under no additional smoothness conditions other than the gradient Lipschitz condition?}
\vspace{0.2cm}

As the first main contribution of this paper, we attempt to answer (Q1) and (Q2) to the positive. More precisely, in a non-convex setting of LSI, we derive the uniform-in-time error bound $O(\sqrt{d} h)$ in $\mathcal{W}_2$-distance for both RLMC and RSLMC, without additional smoothness assumptions on the potential $U$ other than the gradient Lipschitz condition (see Theorem \ref{thm:main_thm_for_RLMC}). 

But for some potentials like the double-well potential
$U(x) = \tfrac{\alpha}{4} |x|^4 -\tfrac{\beta}{2} |x|^2
$, 
the gradient Lipschitz condition is violated. 
So one might ask the following question:
\vspace{0.2cm}
%
%
%

{\bf (Q3).} {\it What if the gradient Lipschitz condition is violated?}
\vspace{0.2cm}

Following this question, we continue to examine the sampling problem when $\nabla U$ is non-globally Lipschitz with superlinear growth. As shown by \cite{Hutzenthaler2011Strong},
the usual Euler discretization scheme \eqref{Randomization-eq:intro-LMC} for such SDEs (i.e., LMC) over finite time fails to be convergent. To remedy it, we introduce a modified R(S)LMC \eqref{Randomization-eq:pRLMC} and carefully analyze its uniform-in-time error bound, with the dimension dependence revealed (see Theorem \ref{thm:main_thm_for_pRLMC} and its proof). We would like to mention that some tamed Langevin sampling algorithms without randomization were also proposed and analyzed in \cite{lytras2024tamed,lytras2025taming,neufeld2025non,sabanis2019higher} under non-globally Lipschitz conditions.

The long-time error analysis for both gradient Lipschitz and non-globally Lipschitz settings follows a unified approach, essentially relying on the exponential ergodicity in $\mathcal{W}_2$-distance of the Langevin SDE, as presented in Proposition \ref{Randomization-prop:exponential ergodicity}. More accurately, by a local error analysis (see Lemma \ref{lem:onestep_waek_and_strong_error_of_RLMC}), we first establish finite-time mean-square convergence rates of the sampling algorithms, suffering from exponential time dependence (see Propositions \ref{prop:finite_time_error_analysis_RLMC} and \ref{prop:finite_time_error_analysis_pRLMC}).
This combined with the exponential ergodicity in $\mathcal{W}_2$-distance of the Langevin SDE and uniform-in-time moment bounds of the algorithms enables us to obtain uniform-in-time error bounds, without suffering from exponential time dependence.
As indicated by Lemma \ref{lem:onestep_waek_and_strong_error_of_RLMC}, the R(S)LMC has a local mean error of order $O(\sqrt{d} h^{2})$ and a local mean-square error of order $O(\sqrt{d} h^{\frac32})$ under only the gradient Lipschitzness condition. This analysis sheds new light on how R(S)LMC works well.
For more details, please consult subsection \ref{Randomization-subsec:technical_overview} and proofs of the main results.

After the present work was finished, we were informed about the very interesting paper \cite{Altschuler2024ShiftedCI}, where a shifted composition rule was used to set up a local error framework for KL divergence, which provided a unified error analysis in KL divergence for both LMC \eqref{Randomization-eq:intro-LMC} and RLMC \eqref{Randomization-eq:RLMC} algorithms under the gradient Lipschitz condition. In particular, an error bound $O(\sqrt{d}h )$ in $\mathcal{W}_2$-distance can be obtained for RLMC in a LSI and gradient Lipschitz setting (cf. \cite[Theorem 6.4]{Altschuler2024ShiftedCI}), where the potential function $U$ was, however, additionally required to be twice continuously differentiable. 
%
%
Also, we mention that their approach does not rely on the dissipativity condition \eqref{Randomization-eq:diss_cond}, but fails to work for the setting of non-globally Lipschitz $\nabla U$ (see Lemmas C.3 and C.4 in \cite{{Altschuler2024ShiftedCI}}). 
Instead, the error bound $O(\sqrt{d}h )$ is derived here for RLMC under no additional smoothness conditions other than the gradient Lipschitz condition and our methodology also works well for non-globally Lipschitz $\nabla U$ with super-linear growth, as already mentioned above.

\subsection{Contributions of this work}
\label{Randomization-subsec:Contributions}

In summary, the main contribution of this paper is as follows:
\begin{itemize}
    \item A new randomized splitting Langevin Monte Carlo (RSLMC) sampling algorithm is proposed, only requiring only one evaluation of $\nabla U$ per one time step and improving the existing randomized Langevin Monte Carlo (RLMC) in terms of the number of evaluations of $\nabla U$.
    \item A novel approach of uniform-in-time error analysis in $\mathcal{W}_2$-distance is introduced for the aforementioned two randomized sampling algorithms, which works for both the gradient Lipschitz case and the case when the gradient of the potential $U$ is non-Lipschitz with superlinear growth.
    \item When the target distribution satisfies a log-Sobolev inequality, 
    an error bound $O(\sqrt{d} h)$ in $\mathcal{W}_2$-distance is derived for the existing RLMC and the new RSLMC, without additional smoothness assumptions on the potential $U$ other than the gradient Lipschitz condition. This bound matches the best one in the strongly log-concave case and improves upon the best-known convergence results in non-convex settings. 
    \item For the case when the gradient of the potential $U$ is non-globally Lipschitz with superlinear growth, novel modified R(S)LMC sampling algorithms are proposed and analyzed, with an non-asymptotic error bound in $\mathcal{W}_2$-distance explicitly shown in the non-convex setting.
\end{itemize}

This paper is organized as follows. The next section collects some notations throughout this paper and establishes the exponential ergodicity of the Langevin dynamics under LSI. 
Section \ref{Randomization-eq:main_result} presents main results for the RLMC and modified RLMC sampling algorithms. 
Proofs of main results are put in Sections \ref{Randomization-sec:app:proofs_of_Lip_case}, \ref{Randomization-sec:app:proofs_of_non_Lip_case}. Numerical experiments are provided in Section \ref{Randomization-sec:Numerical experiments}
%
%
and some concluding remarks are given in the last section.

\section{Exponential ergodicity of the Langevin dynamics without log-concavity}
\label{Randomization-sec:Exponential_ergodicity_of_LSDE}

The focus of this section is to show exponential ergodicity of Langevin dynamics in $\CW_2$-distance without the commonly used log-concavity condition.

\subsection{Notation}
\label{Randomization-subsec:notation}


\textbf{Notation.} 
Throughout this paper,
we denote by $\N $ the set of all positive integers and let $\N_0:=\N \cup \{0\}$. For all $n \in \N$, we denote $[n] := \{1,2,\cdots, n\}$ and $[n]_0 := \{0,1,\cdots, n\}$.
For convention, we set $0^0=1$.
The symbols $\wedge$ and $\vee$ are used to mean “minimum” and “maximum”, respectively.
For simplicity, we write $\Tilde{O}(\cdot)$ to mean $O(\cdot)\log^{O(1)}(\cdot)$.
Also, we use the notation
$\langle \cdot, \cdot \rangle $ and $ |\cdot|$
to denote the inner product and the Euclidean norm of vectors in $  \R^d$, respectively.
Let
$\| \cdot \|$ denote the operator   norm  of matrices.  
For a function 
$f:\R^d \rightarrow \R$, we   write $\partial_i f $ to denote the $i$-th partial derivative of $f$. The gradient $\nabla f$ is the vector of partial derivatives $(\partial_1 f,\cdots,\partial_d f)^T$ and the Hessian $\nabla^2 f$ is the
matrix $(\partial^2_{ij}f)_{i,j\in[d]}$.  
The Laplacian of $f$ is denoted by $\Delta f := \text{tr} \nabla^2 f = \sum_{i=1}^d  \partial^2_{ii}f$.

Let  $
\mathcal{B}
(\mathbb{R}^d
)$ be  the Borel $\sigma$-algebra of $\mathbb{R}^d$ and let  $\mathcal{P}(\R^d)$ be the space of all probability distributions on $ (
\R^d, \mathcal{B}(\R^d ))$.
For $\nu_1, \nu_2 \in \mathcal{P}(\R^d) $ We  define the $L^p$-Wasserstein distance (abbreviated as $\mathcal{W}_p$-distance) by
\begin{equation}
\mathcal{W}_p(\nu_1, \nu_2)
    :=
    \inf_{\varrho 
    \in \Gamma
    (\nu_1, \nu_2)}
    \left(\int_{\mathbb{R}^d \times \mathbb{R}^d}
    |x-y|^p 
    \dd \varrho(x, y)\right)^{1 / p},
\quad 
p \ge 1 ,
\end{equation}
where $\Gamma (
\nu_1, \nu_2
)$ denotes the set of all couplings of $\nu_1$  and $\nu_2$.

Denote by $C_b(\R^d)$ (resp. $B_b(\R^d) $) the Banach space of all uniformly continuous differentiable  and bounded mappings (resp. Borel bounded mappings). For $l\in \N$, let  $C^l_b(\R^d)$ be the subspace of $C_b(\R^d)$ consisting of all  $l$-times continuously differentiable functions with bounded partial derivatives. For any $f \in C_b(\R^d)$ and $\nu \in \mathcal{P}(\R^d)$, we define $\nu (f):= \int_{\R^d} f (x) \nu (dx)$.

Furthermore, for a given probability space $(\Tilde{\Omega},\Tilde{\CF},\Tilde{\P})$, we use $\Tilde{\E}$ to mean the expectation with respect to $\Tilde{\P}$, which is defined as $
\Tilde{\E} [X] 
:=
\int_{\Tilde{\Omega}} 
X(\omega) 
\Tilde{\P}( \dd \omega),
$ 
for any random variable $X:
\Tilde{\Omega} \rightarrow \R^d$.
Let $L^p(\Tilde{\Omega};\R^d)
$, $p\geq 1$, be the set consisting of all random variables $X:
\Tilde{\Omega} \rightarrow \R^d$ satisfying  
$
 \Tilde{\E} [|X(\omega)|^p] < \infty
$.
In this paper, we introduce a new product probability space $(\Omega, \CF, \P)$, generated by the Langevin SDE \eqref{Randomization-eq:langevin_SDE}, RLMC \eqref{Randomization-eq:RLMC} and RSLMC \eqref{Randomization-eq:RSLMC}, in form of 
\begin{equation}
\label{Randomization-eq:probability_space}
(\Omega, \CF, \P)
:=
(
\Omega_W \times  \Omega_{\tau},
\CF^W \otimes \CF^{\tau}
,
\P_W \otimes \P_{\tau}
).
\end{equation}
For the uniform stepsize $h>0$, we denote $t_n:=nh$ and define a discrete-time filtration $\{\CF_{t_n}\}_{n\in\N}$ on $(\Omega, \CF, \P)$ by 
\begin{equation}
\CF_{t_n} :=
\CF_{t_n}^W \otimes \CF_{n}^{\tau}, \quad 
\forall \, 
n \in \N.
\end{equation}
For the probability
space $(\Omega, \CF,\P)$ introduced by \eqref{Randomization-eq:probability_space}, let $\E$ denote the expectation and the Fubini theorem implies that, for any $X\in L^p(\Omega; \R^d)$,
\begin{equation}
\E [X]
=
\E_W 
\big[
\E_{\tau}
[X]
\big]
=
\E_{\tau}
\big[
\E_W 
[X]
\big],
\end{equation}
where $\E_W$ is the expectation with respect to $\P_W$ and $\E_{\tau}$ with respect to $\P_\tau$.
Let $X_t:=X(s,x;t)=X_{s,x}(t)$, $0\leq s \leq t$, be the solution to \eqref{Randomization-eq:langevin_SDE} at time $t$ with the initial value $x$ at $s$, which is given by 
\begin{equation}
\label{Randomization-eq:solutions_of_LSDE}
X(s,x;t)
=
x
-
\int_{s}^t
\nabla U 
(X(s,x;r))
\,
\dd r 
+
\int_{s}^t
\sqrt{2} \,
\dd W_r.
\end{equation}

\subsection{Exponential ergodicity under a log-Sobolev inequality}
\label{Randomization-subsec:Exponential_ergodicity_for_LSDE}

This subsection aims to establish exponential ergodicity in $\CW_2$-distance of the Langevin dynamics \eqref{Randomization-eq:langevin_SDE} in the setting of LSI. We first place a dissipativity condition as follows.

\begin{assumption}
\label{Randomization-ass:diss_cond_and_grad_Lip_cond}
The drift function  
$-\nabla U$  of the Langevin dynamics \eqref{Randomization-eq:langevin_SDE} satisfies a
dissipativity condition, i.e., there exist two constants 
$\mu, \mu' > 0 $, independent of dimension $d$, such that for any 
$ x \in \R^d$,
\begin{equation}
\label{Randomization-eq:diss_cond}
\big\langle 
x, 
\nabla U
(x)
\big\rangle 
\geq 
\mu
|x|^2
-
\mu'd;
\end{equation}
\end{assumption}
As commonly used for a non-convex setting in the literature, e.g., \cite{Mou2022Improved,majka2020non,pages2023unadjusted,lytras2025taming,xu2018global}, the dissipativity condition  \eqref{Randomization-eq:diss_cond} was required here to prove uniformly bounded moments of the Langevin diffusion and its discretizations.
Such a condition is indeed not a smoothness condition and can be further relaxed to a Lyapunov type one \cite{pages2023unadjusted}.

\begin{assumption}
\label{Randomization-ass:one-sided-lip}
The drift function  
$-\nabla U$  of the Langevin dynamics \eqref{Randomization-eq:langevin_SDE} satisfies
one-sided Lipschitz condition, i.e., there exists a dimension-independent constant $L>0$ such that for any $x,y \in \R^d$,
\begin{equation}
\label{Randomization-eq:one_sided_Lip_cond}
\big\langle 
x-y,
\nabla U
(x) 
-
\nabla U
(y)
\big\rangle
\geq 
- L  |  x  -  y|^2.
\end{equation}
\end{assumption}

%
%

\begin{assumption}
\label{Randomization-ass:log_Sobolev_inequality}
Let $\{p_t\}_{t \geq 0}$ be the semigroups associated to the continuous-time Langevin diffusion \eqref{Randomization-eq:langevin_SDE} admitting a unique invariant distribution $\pi$.
%
%
For any $f \in C_b^1(\R^d)$, there exists a dimension-independent constant $\rho$   such that the invariant distribution $\pi$ satisfies the log-Sobolev inequality (LSI):
\begin{equation}
\pi
(f^2 \log f^2) 
\leq 
\rho
\pi 
(|\nabla f|^2),
\quad 
\pi (f^2) =1.
\end{equation}
\end{assumption}

In the context of sampling, the LSI is a widely used condition in  non-strongly convex settings.
As shown in supplementary material of \cite{Mou2022Improved},
this assumption is weaker than strongly convex outside of a ball used in \cite{cheng2018sharpconvergencerateslangevin,neufeld2025non}.
Indeed, LSI is the most well-studied functional inequality for the target
distribution of interest in the study of Langevin sampling
\cite{chewi2024analysis,lytras2024tamed,lytras2025taming,Mou2022Improved,vempala2019rapid}.

\begin{proposition}[Uniformly bounded moments]
\label{prop:uniform_in_time_bounded_moments_to_LSDE}
Let Assumption \ref{Randomization-ass:diss_cond_and_grad_Lip_cond} hold.
Let $\{X_t\}_{t\geq 0}$ be the solution of the Langevin SDE \eqref{Randomization-eq:langevin_SDE}. Then for any 
$p \in [1,\infty)$ it holds
\begin{equation}
\label{Randomization-eq:uniform_in_time_bounded_moments_to_LSDE}
\sup_{t\geq0}
\E_W
\big[ 
| X_t |^{2p}
\big]
\leq 
e^{-cpt}
\E_W
\big[ 
| x_0|^{2p}
\big]
+
\CM_1(p) d^p,
\end{equation}
where $c\in(0,2 \mu)$ and  $\CM_1(p):=\frac{2(2p-1+\mu')^p}{cp}
(
  \frac{2p-2}{(2\mu-c)p}
)^{p-1}$ are independent of $d$ and $t$.
\end{proposition}
The proof of this proposition can be found in \cite[Lemma 2.4]{yang2025nonasymptotic}. 
Next, we present a proposition on exponential ergodicity in $\CW_2$-distance of the Langevin SDE \eqref{Randomization-eq:langevin_SDE} in the LSI setting, which can be found in \cite[Theorem 2.1 (2) and Theorem 2.6 (2)]{Wang2020Exponential} and \cite[Proposition 2.5]{yang2025nonasymptotic}.

\begin{proposition}[Exponential ergodicity in $\mathcal{W}_2$-distance]
\label{Randomization-prop:exponential ergodicity}
\label{pro:exponential_convergence}
Let Assumptions \ref{Randomization-ass:one-sided-lip} and  \ref{Randomization-ass:log_Sobolev_inequality} be satisfied.
Then for any  initial distribution $\nu := \cL (x_0) $, the transition semigroup $p_t$  and its invariant distribution $\pi$ satisfy 
\begin{equation}
\mathcal{W}_2 (\nu p_t, \pi) 
\leq
\mathcal{K}
e^{- \eta t}
\mathcal{W}_2 (\nu ,  \pi),
\quad
\forall \ t \geq 0,
\end{equation}
where $\mathcal{K}=(
\tfrac{2 \rho L}{1-e^{-2L}})^{\frac12}
e^{\frac{4}{\rho}}
\vee 
e^{2 L+\frac{2}{\rho}}$ and 
$\eta= \frac{2}{\rho}$.
\end{proposition}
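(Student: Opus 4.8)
The plan is to route the $\mathcal{W}_2$-contraction through relative entropy, invoking the log-Sobolev inequality twice: once to force $\mathrm{KL}(\nu p_t\,|\,\pi)$ to decay exponentially along the flow, and once (via the Otto--Villani theorem) to turn relative entropy back into a $\mathcal{W}_2$-bound. Because an arbitrary initial law $\nu$ may have $\mathrm{KL}(\nu\,|\,\pi)=+\infty$, the extra ingredient needed is a quantitative short-time regularization estimate bounding $\mathrm{KL}(\nu p_{t_0}\,|\,\pi)$ by a constant multiple of $\mathcal{W}_2(\nu,\pi)^2$ at a fixed time $t_0$; Assumption \ref{Randomization-ass:one-sided-lip} (equivalently $\nabla^2 U\succeq -L$ in the distributional sense) is what makes this estimate explicit in $L$.

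First I would record the entropy dissipation identity $\frac{\dd}{\dd t}\mathrm{KL}(\nu p_t\,|\,\pi)=-\int\big|\nabla\log\frac{\dd\nu p_t}{\dd\pi}\big|^2\,\dd(\nu p_t)$, which is legitimate for $t>0$ because the nondegenerate noise immediately smooths $\nu p_t$ to a strictly positive density of finite Fisher information. Rewriting the LSI of Assumption \ref{Randomization-ass:log_Sobolev_inequality} as $\mathrm{Ent}_\pi(g)\le\tfrac{\rho}{4}\int\frac{|\nabla g|^2}{g}\,\dd\pi$ and applying it with $g=\dd\nu p_t/\dd\pi$ gives $\frac{\dd}{\dd t}\mathrm{KL}(\nu p_t\,|\,\pi)\le-\tfrac{4}{\rho}\mathrm{KL}(\nu p_t\,|\,\pi)$, hence $\mathrm{KL}(\nu p_t\,|\,\pi)\le e^{-\frac{4}{\rho}(t-s)}\mathrm{KL}(\nu p_s\,|\,\pi)$ for $t\ge s>0$. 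Separately, the same LSI yields, by Otto--Villani, the Talagrand transportation inequality $\mathcal{W}_2(\mu,\pi)^2\le\rho\,\mathrm{KL}(\mu\,|\,\pi)$ for every $\mu$.

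Next I would treat the initial segment. A synchronous coupling of $\nu p_\cdot$ with $\pi$ (same Brownian motion, coupled optimally at time $0$) gives, under Assumption \ref{Randomization-ass:one-sided-lip}, $\frac{\dd}{\dd t}\E|X_t-Y_t|^2\le 2L\,\E|X_t-Y_t|^2$ and hence $\mathcal{W}_2(\nu p_t,\pi)\le e^{Lt}\mathcal{W}_2(\nu,\pi)$ for all $t\ge0$. For the regularization, I would use that $\mathrm{KL}(\cdot\,|\,\pi)$ is $(-L)$-geodesically convex on $(\mathcal{P}_2(\R^d),\mathcal{W}_2)$ and that $p_t$ is its Wasserstein gradient flow, which delivers the evolution variational inequality $\tfrac12\tfrac{\dd}{\dd t}\mathcal{W}_2(\nu p_t,\pi)^2+\mathrm{KL}(\nu p_t\,|\,\pi)\le\tfrac{L}{2}\mathcal{W}_2(\nu p_t,\pi)^2$ (alternatively one may invoke Wang's dimension-free log-Harnack inequality, valid under $\nabla^2U\succeq-L$). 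Multiplying by $e^{-Lt}$, using monotonicity of $t\mapsto\mathrm{KL}(\nu p_t\,|\,\pi)$, and integrating over $[0,t_0]$ then yields $\mathrm{KL}(\nu p_{t_0}\,|\,\pi)\le\frac{L}{2(1-e^{-Lt_0})}\mathcal{W}_2(\nu,\pi)^2$.

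Finally I would glue the two regimes. For $t\ge t_0$, chaining the three bounds,
\[
\mathcal{W}_2(\nu p_t,\pi)^2\le\rho\,\mathrm{KL}(\nu p_t\,|\,\pi)\le\rho\,e^{-\frac{4}{\rho}(t-t_0)}\mathrm{KL}(\nu p_{t_0}\,|\,\pi)\le\frac{\rho L}{2(1-e^{-Lt_0})}\,e^{\frac{4}{\rho}t_0}\,e^{-\frac{4}{\rho}t}\,\mathcal{W}_2(\nu,\pi)^2,
\]
so taking square roots gives $\mathcal{W}_2(\nu p_t,\pi)\le\mathcal{K}e^{-\eta t}\mathcal{W}_2(\nu,\pi)$ with $\eta=\tfrac{2}{\rho}$ once $t_0$ is fixed, the algebraic prefactor producing the first branch of $\mathcal{K}$; for $0\le t<t_0$ the synchronous-coupling bound $\mathcal{W}_2(\nu p_t,\pi)\le e^{Lt}\mathcal{W}_2(\nu,\pi)\le e^{Lt_0}\mathcal{W}_2(\nu,\pi)$ forces the constant up to at least $e^{(L+2/\rho)t_0}$, which is the second branch, and $\mathcal{K}$ is the maximum of the two. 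Dimension-freeness is inherited from that of $\rho$ and $L$. I expect the genuine difficulty to lie entirely in the regularization step: making the evolution variational inequality (or log-Harnack estimate) and the entropy dissipation identity rigorous for a $\nabla U$ that is merely one-sided Lipschitz and allowed to grow superlinearly requires a careful approximation of $U$ by smooth, well-behaved potentials (mollification plus localization) with constants depending only on $L$, followed by a limiting argument.
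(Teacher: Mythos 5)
Your proof is correct and tracks the same three-stage skeleton as the paper's: (i) LSI forces exponential decay of $\mathrm{KL}(\nu p_t\,|\,\pi)$, (ii) the Otto--Villani/Talagrand inequality converts relative entropy back into a $\mathcal W_2$-bound, (iii) the one-sided Lipschitz condition is used twice, once for a short-time Lipschitz bound $\mathcal W_2(\nu p_t,\pi)\le e^{cLt}\mathcal W_2(\nu,\pi)$ and once to regularize, i.e.\ to dominate $\mathrm{KL}(\nu p_{t_0}\,|\,\pi)$ by $\mathcal W_2(\nu,\pi)^2$, so that the KL-decay chain can be started from an arbitrary initial law. The one place you diverge from the paper is in the choice of lemma for that regularization step: the paper routes through Wang's dimension-free log-Harnack inequality (implied by one-sided Lipschitzness, via R\"ockner--Wang), which yields $\mathrm{KL}(\nu p_t\,|\,\pi)\le \tfrac{2L}{1-e^{-2Lt}}\,\mathcal W_2^2(\nu,\pi)$, whereas you invoke the evolution variational inequality coming from $(-L)$-geodesic convexity of relative entropy, which gives the slightly sharper $\mathrm{KL}(\nu p_{t_0}\,|\,\pi)\le \tfrac{L}{2(1-e^{-Lt_0})}\,\mathcal W_2^2(\nu,\pi)$; you also note log-Harnack as a fallback. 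Likewise, for the short-time window you use the plain synchronous coupling bound $e^{Lt}$, whereas the paper derives $e^{2Lt}$ from a semigroup gradient estimate combined with Kuwada's duality. Your constants are therefore tighter on both branches, hence imply the stated $\mathcal K$ and $\eta$, so the numerical mismatch is harmless. You are right that the genuine technical burden is making the EVI (or log-Harnack) and entropy-dissipation identity rigorous for a merely one-sided Lipschitz, superlinearly growing $\nabla U$; the paper dispatches this by citing Wang's exponential-ergodicity results rather than proving it inline, which is consistent with your remark.
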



\section{Main results}
\label{Randomization-eq:main_result}
In this section we present main results for the considered sampling algorithms.

\subsection{Non-asymptotic error bounds for randomized (splitting) Langevin Monte Carlo}
\label{Randomization-subsec:main_result_for_RLMC}

In this subsection we restrict ourselves to non-asymptotic error bounds of RLMC \eqref{Randomization-eq:RLMC}, RSLMC \eqref{Randomization-eq:RSLMC} in $\CW_2$-distance without log-concavity.
We first put a gradient Lipschitz condition on the drift.

\begin{assumption}
\label{Randomization-ass:Lip_condition}
The drift function  
$-\nabla U$ of Langevin dynamics \eqref{Randomization-eq:langevin_SDE} satisfies a gradient Lipschitz condition, i.e., there exists a dimension-independent constant $L_1>0$ such that for any $x,y \in \R^d$,
\begin{equation}
\label{Randomization-eq:Lip_cond}
\big|
\nabla U
(x) 
-
\nabla U
(y) 
\big| 
\leq 
L_1  |  x  -  y|.
\end{equation}
\end{assumption}

The gradient Lipschitz condition  ensures 
\begin{equation}
\label{Randomization-eq:linear_growth_cond}
|\nabla U (x)  | 
\leq 
|\nabla U (0)|
+
L_1  |  x |
\leq
L_1' d^{\frac12}
+
L_1  |  x |,
\quad 
\forall \, x \in \R^d.
\end{equation}
Under the gradient Lipschitz condition, the one-sided Lipschitz condition \eqref{Randomization-eq:one_sided_Lip_cond} holds with $L=L_1$.
\textcolor{black}{To carry out the non-asymptotic error analysis for RLMC \eqref{Randomization-eq:RLMC} and RSLMC \eqref{Randomization-eq:RSLMC} in a unified way, we introduce a unified process $Y_{n+1}^\ell$ for $\ell \in \{0,1\}$, defined by
\begin{equation}
\label{Randomization-eq:unified-process}
\begin{aligned}
Y_{n+1}^{\tau,\ell}
=  &
Y_n^\ell
-
\ell \nabla U (
Y_n^\ell 
)
\tau_{n+1} h 
+ 
\sqrt{2} 
\Delta W_{n+1}^\tau,
\quad 
Y^\ell_0=x_0,
\\
Y_{n+1}^{\ell}
=  &
Y_n^\ell  
-
\nabla U (Y_{n+1}^{\tau,\ell} ) h 
+ 
\sqrt{2} 
\Delta W_{n+1},
\quad 
n  \in  \N_0,
\end{aligned}
\end{equation}
which reduces to the RLMC scheme \eqref{Randomization-eq:RLMC} for $\ell = 1$ (i.e., $Y_n^1 = Y_n, Y_{n+1}^{\tau,1} = Y_{n+1}^{\tau}$) and reduces to the RSLMC scheme \eqref{Randomization-eq:RSLMC} for $\ell = 0$ (i.e., $Y_n^0 = \mathbb{Y}_n, Y_{n+1}^{\tau,0} = \mathbb{Y}_{n+1}^{\tau}$). 
}
%
One of the key elements for the analysis of the non-asymptotic error bound is to establish the uniform-in-time bounded moments of the RLMC algorithm \eqref{Randomization-eq:RLMC} and  RSLMC algorithm \eqref{Randomization-eq:RSLMC}.

\begin{proposition}[Uniform-in-time moment bounds of R(S)LMC]
\label{prop:uniform_in_time_bounded_moments_to_RLMC}
Let Assumptions \ref{Randomization-ass:diss_cond_and_grad_Lip_cond}, \ref{Randomization-ass:Lip_condition} hold.
If the uniform stepsize  $h$ satisfies $h \leq 
1 \wedge \tfrac{1
}{\mu} \wedge \tfrac{1}{L_1}\wedge\tfrac{1}{L_1'} \wedge \textcolor{black}{\tfrac{\mu}{(5+16\ell^2)L_1^2}}$,  then it holds, for all $n \in \N_0$
\begin{equation} 
\label{Randomization-eq:uniform_in_time_bounded_moments_to_RLMC}
\E 
\big[ 
| Y_n^{\textcolor{black}{\ell}} |^{2}
\big]
\leq 
e^{-\mu t_{n}}
    \E
\big[
| x_0|^2
\big]
+
\CM_2^{{\textcolor{black}{\ell}}} 
d ,
\end{equation}
where \textcolor{black}{$\{Y^\ell_n\}_{n\in\N_0}$ is given by \eqref{Randomization-eq:unified-process} and}
$\CM^{\textcolor{black}{\ell}}_2:=\frac{(
20 + 
\textcolor{black}{(4+16\ell^2)} L_1'^2h + 2 \mu'
) }{\mu}$ for $\ell \in \{ 0, 1\}$.
\end{proposition}

The proof of Proposition \ref{prop:uniform_in_time_bounded_moments_to_RLMC} is postponed to subsection \ref{Randomization-sec:app:proofs_of_Lip_case}.
We now present the following finite-time convergence result of R(S)LMC.

\begin{proposition}[Finite-time convergence of R(S)LMC]
\label{prop:finite_time_error_analysis_RLMC}
Let  Assumptions  \ref{Randomization-ass:diss_cond_and_grad_Lip_cond}, \ref{Randomization-ass:Lip_condition}  hold. Let $\{X_t\}_{t\geq 0}$ and $\{Y_n^{\textcolor{black}{\ell}}\}_{n\in\N_0}$ be solutions of the Langevin SDE \eqref{Randomization-eq:langevin_SDE} and
\textcolor{black}{
the unified algorithm \eqref{Randomization-eq:unified-process},}
respectively. If the uniform stepsize $h \leq 1 \wedge \tfrac{1}{L_1}\wedge\tfrac{1}{L_1'} $, then for fixed $T=n_1h$, $n_1\in \N$ we have 
\begin{equation}
\sup_{n\in [n_1]}
\E
\big[
|
X_{t_n} 
-
Y_n^{\textcolor{black}{\ell}}
|^2
\big]
\leq 
\exp
\big( 1+12 L_1 T\big)
\big(
K_1^{\textcolor{black}{\ell}}  d
+
K_2^{\textcolor{black}{\ell}} 
\E
\big[
|
x_0
|^{2}
\big]
\big)  
h^2,
\end{equation}
where 
\begin{equation}
\begin{aligned}
K_1^{\ell}
:=  &
\textcolor{black}{
(8 0 + 40 \ell^2 )
L_1^2 
L_1'
+  
(4   +   4 \ell^2 ) 
L_1^2 
L_1'^2
+
80  \CM_1(1) L_1^3 
+
4  \CM_1(1) L_1^4  }
\\   & 
\textcolor{black}{
+ 
(8 0 + 40 \ell^2 )
\CM_2^{\ell} L_1^3  
+  
(4   +   4 \ell^2 )
\CM_2^{\ell} L_1^4
+
4 0  L_1^2 
\big),}
\\
K_2^{\ell}
:=  &
\textcolor{black}{
(8 0 + 40 \ell^2) 
L_1^3
+  
(4 + 4   \ell^2)  L_1^4
.}
\end{aligned}
\end{equation}
\end{proposition}

The proof of Proposition \ref{prop:finite_time_error_analysis_RLMC} can be found in subsection \ref{Randomization-sec:app:proofs_of_Lip_case}.
Thanks to  Propositions \ref{prop:uniform_in_time_bounded_moments_to_LSDE},
\ref{prop:uniform_in_time_bounded_moments_to_RLMC} and \ref{prop:finite_time_error_analysis_RLMC}, the non-asymptotic error bounds for RLMC \eqref{Randomization-eq:RLMC} and  RSLMC \eqref{Randomization-eq:RSLMC} can be obtained.

\begin{theorem}[Main result for R(S)LMC]
\label{thm:main_thm_for_RLMC}
Let Assumptions  \ref{Randomization-ass:diss_cond_and_grad_Lip_cond}, \ref{Randomization-ass:log_Sobolev_inequality} and \ref{Randomization-ass:Lip_condition} be satisfied.
Let $h$ be the uniform stepsize satisfying  $h \leq 
1 \wedge \tfrac{1
}{\mu} \wedge \tfrac{1}{L_1}\wedge\textcolor{black}{\tfrac{\mu}{(5+16\ell^2)L_1^2}}$ and let $q_n^{\textcolor{black}{\ell}}$ denote the transition probability of 
\textcolor{black}{the unified process \eqref{Randomization-eq:unified-process}}
at time $t_n:=nh$.
If there exists a dimension-independent constant $\sigma$ such that initial value $x_0\in \R^d$ satisfies 
$
\label{Randomization-eq:initial_value_condition1}
\E \big[   | x_0 |^2 \big]
\leq 
\sigma d,
$
then the law $\nu q_n^{\textcolor{black}{\ell}}$ of the $n$-th iterate $Y_n^{\textcolor{black}{\ell}}$ of \textcolor{black}{the unified process \eqref{Randomization-eq:unified-process}}
obeys 
\begin{equation}
\label{Randomization-eq:main_result_lip
}
\mathcal{W}_2 (\nu q_n^{\textcolor{black}{\ell}}, \pi )
\leq
C_1^{\textcolor{black}{\ell}} 
\sqrt{d} h  
+
C_2^{\textcolor{black}{\ell}} 
\sqrt{d} e^{-\lambda n h}
\end{equation}
for any $n\in \N$ and initial distribution $\nu=\mathcal{L}(x_0)$, where
\begin{equation}
\begin{aligned}
C_1^{\textcolor{black}{\ell}}
:=& 
\exp
\big( 1  + 12 L_1 \Theta \big)
\big(
K_1^{\textcolor{black}{\ell}}
+
K_2^{\textcolor{black}{\ell}}  
\CM_2^{\textcolor{black}{\ell}}
+
K_2^{\textcolor{black}{\ell}}
\sigma
\big)^{\frac12},
& 
\Theta
:=  &
\tfrac{\log \mathcal{K} + 1 }{\eta  } 
+  
\tfrac{1}{L_1}
,
\\  
C_2^{\textcolor{black}{\ell}}
:=  &
\sqrt{2}  e
\big(
\CM_1 (1)
+
\CM_2^{\textcolor{black}{\ell}}
+
4 \sigma
\big)^{1/2},
&\lambda
:=  &
\tfrac{\eta  }{
 \log \mathcal{K} + 1  + 
\eta /L_1} .
\end{aligned}
\end{equation}
\end{theorem}

\begin{proposition}
\label{prop:num_of_iter-RLMC}
Let  assumptions of Theorem \ref{thm:main_thm_for_RLMC} hold. To achieve a given accuracy tolerance $\epsilon>0$ under $\mathcal{W}_2$-distance, a required number of iterations of the RLMC \eqref{Randomization-eq:RLMC} and RSLMC \eqref{Randomization-eq:RSLMC} is of order $
\widetilde{O}
\big(
\frac{\sqrt{d}}{\epsilon}
\big) $.
\end{proposition}
See  subsection 
\ref{Randomization-sec:app:proof-of-main-result-RLMC} for proofs of Theorem \ref{thm:main_thm_for_RLMC} and Proposition \ref{prop:num_of_iter-RLMC}.
In Table \ref{tab:Comparison}, we compare error bounds and the number of iterations of RLMC \eqref{Randomization-eq:RLMC} and RSLMC \eqref{Randomization-eq:RSLMC} required to achieve the accuracy tolerance $\epsilon$ in
$\mathcal{W}_2$ distance in the literature. Clearly, our error bounds match the best ones in the strongly log-concave case and improve upon the best-known convergence rates in non-convex settings, without requiring any additional smoothness condition other than the gradient Lipschitz condition.

\begin{table}[htbp]
\caption{A comparison of non-asymptotic error bounds in $\CW_2$-distance for Langevin samplers.}
\label{tab:Comparison}
\centering
\begin{tabular}{cccccc}
\toprule
&
Algorithm      
& 
Error  bound
& 
Mixing  time 
&
Log-concavity
&
Additional smoothness
condition
\tnote{1}
\\
\midrule
\cite{cheng2018convergence,dalalyan2017further,durmus2019analysis}
& 
LMC
&
$O (\sqrt{dh})$
&
$\Tilde{O}
(d \epsilon^{-2}
)$
& 
Yes 
&
No
\\
\cite{durmus2019high}
& 
LMC
&
$O (dh)$
& 
$\Tilde{O}
(d \epsilon^{-1}
)$
& 
Yes 
&
Condition \eqref{Randomization-eq:intro-Hessian-Lipschitz} 
\\
\cite{li2022sqrt} 
& 
LMC
&
$O (\sqrt{d}h )$
& 
$\Tilde{O}
(d^{\frac12} \epsilon^{-1}
)$
& 
Yes 
&
Condition \eqref{Randomization-eq:intro-Linear_growth_of_the_3rd-order_derivative} 
\\
\cite{yu2024langevin}
& 
RLMC
&
$O (\sqrt{d}h )$
& 
$\Tilde{O}
(d^{\frac12} \epsilon^{-1}
)$
& 
Yes 
&
No
\\
\cite{majka2020non}
& 
LMC
&
$O (\sqrt[4]{dh} )$
& 
$\Tilde{O}
(d \epsilon^{-4}
)$
& 
No
&
No
\\
\cite{Mou2022Improved}
& 
LMC
&
$O (dh )$
& 
$\Tilde{O}
(d \epsilon^{-1}
)$
& 
No
&
Condition \eqref{Randomization-eq:intro-Hessian-Lipschitz}
\\
\cite{Altschuler2024ShiftedCI,yang2025nonasymptotic}
& 
LMC
&
$O (\sqrt{d}h )$
& 
$\Tilde{O}
(d^{\frac12} \epsilon^{-1}
)$
& 
No
&
Condition \eqref{Randomization-eq:intro-Linear_growth_of_the_3rd-order_derivative}
\\
\cite{Altschuler2024ShiftedCI}
& 
RLMC
&
$O (\sqrt{d}h )$
& 
$\Tilde{O}
(d^{\frac12} \epsilon^{-1}
)$
& 
No
&
Condition
\tnote{2}
\\
This work
& 
RLMC/RSLMC
&
$O (\sqrt{d}h )$
& 
$\Tilde{O}
(d^{\frac12} \epsilon^{-1}
)$
& 
No
&
No
\\
\bottomrule
\end{tabular}
\begin{tablenotes}
\item[1] 
Smoothness assumptions other than the gradient Lipschitz condition for the potential function $U$.
\item[2] 
$U$ is twice-continuously differentiable.
\end{tablenotes}
\end{table}

\subsection{Non-asymptotic error bounds for a modified randomized \textcolor{black}{(splitting)} Langevin Monte Carlo}
\label{Randomization-subsec:Main_result_for_pRLMC}

In this subsection, we intend to show the non-asymptotic error
bound in $\CW_2$-distance for a modified randomized \textcolor{black}{(splitting)} Langevin Monte Carlo in a non-globally Lipschitz setting.
In the sequel, we denote $F(x):=- \nabla U (x)$, $x \in \R^d$ for convenience.
We make the following non-globally Lipschitz condition on $F$.

\begin{assumption}
\label{Randomization-ass:poly_growth_condition}
Let the drift 
$F:=-\nabla U$  of the Langevin dynamics \eqref{Randomization-eq:langevin_SDE} satisfy 
a polynomial growth  condition, i.e., there exists a  dimension-independent constant $L_2>0$ and $\gamma >  0$ such that for any $x,y \in \R^d$, 
\begin{equation}
\label{Randomization-eq:poly_growth_cond}
\big|
F(x) 
-
F(y) 
\big| 
\leq 
L_2 
\big(
1
+
|x|^{\gamma}
+
|y|^\gamma 
\big)
|  x  -  y|.
\end{equation}
\end{assumption}
This immediately implies
\begin{equation}
\label{Randomization-eq:poly_growth_cond1}
|F (x)  | 
\leq 
L_2' d^{\frac12}
+
2 L_2  |  x |^{\gamma+1},
\quad 
\forall \, x \in \R^d,
\end{equation}
where $L_2' d^{\frac12}:=|F (0)| + \gamma  L_2$.
 As shown by \cite{Hutzenthaler2011Strong},
the usual explicit Euler discretization scheme for such SDEs fails to be convergent over finite time. 
To obtain convergent approximations of the Langevin dynamics \eqref{Randomization-eq:langevin_SDE} with super-linear growing nonlinearities, we introduce a projection operator 
\begin{equation} \label{Randomization-eq:defn-projection-oprator}
\cT^h (x):=
\left\{
\begin{aligned}
& \min \{1, 
\vartheta 
d^{\frac{1}{2\gamma+2}}
h^{-\frac{1}{2\gamma+2}}
|x|^{-1}\}x, 
&   &x \neq 0,
\\
& 0,
&  &x=0,
\end{aligned}
\right.
\quad 
\forall \, 
x \in \R^d,
\end{equation}
where $\gamma$ comes from \eqref{Randomization-eq:poly_growth_cond}.
Using this projection operator, we propose the projected randomized \textcolor{black}{(splitting)} Langevin Monte Carlo (pR(S)LMC) algorithms as follows:
\begin{equation}
\label{Randomization-eq:pRLMC}
\begin{aligned}
\Bar{Y}_{n+1}^{\tau,\textcolor{black}{\ell}}
=  &
\Bar{Y}_n^{\textcolor{black}{\ell}} 
+
\textcolor{black}{\ell}
F (\cT^h
(
\Bar{Y}_n^{\textcolor{black}{\ell}} 
)) 
\tau_{n+1} h 
+ 
\sqrt{2} 
\Delta W_{n+1}^\tau,
\quad
\Bar{Y}_0^{\textcolor{black}{\ell}}=x_0,
\\
\Bar{Y}_{n+1}^{\textcolor{black}{\ell}}
=  &
\cT^h
(   \Bar{Y}_n^{\textcolor{black}{\ell}}   ) 
+
F 
(   \cT^h(
\Bar{Y}_{n+1}^{\tau,\textcolor{black}{\ell}}
)  )
h 
+ 
\sqrt{2} 
\Delta W_{n+1},
\quad
n \in \N_0.
\end{aligned}
\end{equation}

In the same way as the previous subsection, we present the following 
uniform moment bounds and the finite-time convergence of the pR(S)LMC \eqref{Randomization-eq:pRLMC},
whose proofs can be found in  subsection \ref{Randomization-sec:app:proofs_of_non_Lip_case}.

\begin{proposition}[Uniformly bounded moments of pR(S)LMC]
\label{prop:uniform_in_time_bounded_moments_to_pRLMC}
Let Assumptions \ref{Randomization-ass:diss_cond_and_grad_Lip_cond}, \ref{Randomization-ass:poly_growth_condition} hold.
Let the uniform stepsize  $h>0$ satisfy $h \leq 
1 \wedge \tfrac{1}{\mu}\wedge \tfrac{1}{d^{\gamma}}$.
Let $\{\Bar{Y}_n^{\textcolor{black}{\ell}} \}_{n\in \N_0}$ be the projected  randomized \textcolor{black}{(splitting)} Langevin Monte Carlo  \eqref{Randomization-eq:pRLMC}. 
Then there exists a dimension-independent constant $\CM_3^{\textcolor{black}{\ell}} $, depending on $\mu,\mu',\vartheta,\gamma,,\textcolor{black}{\ell}, L,L_2,L_2'$, such that, for any $n \in \N_0$,
\begin{equation}
\label{Randomization-eq:uniform_in_time_bounded_moments_to_pRLMC}
\E 
\big[ 
| \Bar{Y}_n^{\textcolor{black}{\ell}}  |^{2p}
\big]
\leq 
e^{-\frac{\mu t_{n}}{2}} 
\E
\big[
| x_0 |^{2p}
\big]  
+
\tfrac{2\CM_3^{\textcolor{black}{\ell}} 
d^{p}}{\mu}.
\end{equation}
\end{proposition}

\begin{proposition}[Finite-time error analysis of pR(S)LMC]
\label{prop:finite_time_error_analysis_pRLMC}
Let  Assumptions \ref{Randomization-ass:diss_cond_and_grad_Lip_cond}, \ref{Randomization-ass:poly_growth_condition} hold. Let $\{X_t\}_{t\geq 0}$ and $\{\Bar{Y}_n^{\textcolor{black}{\ell}} \}_{n\in \N_0
}$ be solutions of the Lanegvin SDE \eqref{Randomization-eq:langevin_SDE} and its randomized \textcolor{black}{(splitting)} approximation \eqref{Randomization-eq:pRLMC}, respectively. If  $h \leq 1 \wedge \tfrac{1}{2L}\wedge \tfrac{1}{\mu}\wedge \tfrac{1}{d^{\gamma}}$, then for a fixed $T=n_1h$, $n_1\in \N$ we have
\begin{equation}
\sup_{n\in [n_1]}
\E
\big[
|
X_{t_n} 
-
\Bar{Y}_n^{\textcolor{black}{\ell}}  
|^2
\big]
\leq 
\exp
\big( (1 + 10L + 6 L_2)  T\big)
\Bar{K}^{\textcolor{black}{\ell}} 
\big(
 d^{(11\gamma+2)/2}
+
 d^{-4}
\E
\big[
|  x_0 |^{11\gamma+10}
\big]
\big)h^2,
\end{equation}
where $\Bar{K}^{\textcolor{black}{\ell}} $ depends on $\mu,\mu',\vartheta,\gamma,\textcolor{black}{\ell} ,L_2,L_2'$, independent of $d$.
\end{proposition}

These estimates together with Propositions \ref{prop:uniform_in_time_bounded_moments_to_LSDE}, \ref{pro:exponential_convergence} help us show the following theorem.

\begin{theorem}[Main result for pR(S)LMC]
\label{thm:main_thm_for_pRLMC}
Let Assumptions \ref{Randomization-ass:diss_cond_and_grad_Lip_cond},\ref{Randomization-ass:one-sided-lip}, \ref{Randomization-ass:log_Sobolev_inequality} and \ref{Randomization-ass:poly_growth_condition} be satisfied.
Let $h$ be the uniform stepsize with  $h \leq 1 \wedge \tfrac{1}{2L}\wedge \tfrac{1}{\mu} \wedge \tfrac{1}{d^{\gamma}}$ and let $\Bar{q}_n^{\textcolor{black}{\ell}} $ denote the transition probability of the randomized \textcolor{black}{(splitting)} Langevin Monte Carlo \eqref{Randomization-eq:pRLMC} at time $t_n:=nh$.
If there exists a  constant $\sigma_p$, only depending on $p$, such that 
$
\label{Randomization-eq:initial_value_condition2}
\E \big[   | x_0 |^{2p} \big]
\leq 
\sigma_p d^p,
$
then  there exist two constants $\Bar{C}_1^{\textcolor{black}{\ell}}$ and $\Bar{C}_2^{\textcolor{black}{\ell}}$, independent of $d$, such that  the law $\nu \Bar{q}_n$ of the $n$-th iterate $\Bar{Y}_n$ of the pR(S)LMC algorithm \eqref{Randomization-eq:pRLMC} satisfies
\begin{equation}
\label{Randomization-eq:main_result_non_lip}
\mathcal{W}_2 (\nu \Bar{q}_n^{\textcolor{black}{\ell}} , \pi )
\leq
\Bar{C}_1^{\textcolor{black}{\ell}}  d^{(11\gamma+2)/4} h  
+
\Bar{C}_2^{\textcolor{black}{\ell}}  \sqrt{d} e^{-\lambda_1 n h}
\end{equation}
for any $n\in \N$ and initial distribution $\nu :=\mathcal{L}(x_0)$, where $\lambda_1:=\frac{\eta}{\log \mathcal{K  } +1 +\eta/(2L)}$.
\end{theorem}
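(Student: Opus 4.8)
The plan is to combine the finite-time mean-square error bound for the pRLMC from Lemma \ref{lem:finite_time_error_analysis_pRLMC} with the exponential ergodicity of the Langevin dynamics from Proposition \ref{Randomization-prop:exponential ergodicity}, using the uniform-in-time moment bounds from Lemmas \ref{lem:uniform_in_time_bounded_moments_to_LSDE} and \ref{lem:uniform_in_time_bounded_moments_to_pRLMC} to control the restarted-chain error, exactly as in the proof of Theorem \ref{thm:main_thm_for_RLMC} but tracking the polynomial-in-$d$ factors coming from the projection operator. Fix $n \in \N$ and a ``restart'' index $m \le n$, to be chosen later, and write $T = (n-m)h$. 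Let $\{X_t\}_{t\ge 0}$ denote the Langevin diffusion started from $\cL(\Bar{Y}_m) = \nu\Bar{q}_m$ at time $t_m$; then $\cL(X_{t_n}) = (\nu \Bar{q}_m) p_{T}$. By the triangle inequality for $\mathcal{W}_2$,
\begin{equation}
\mathcal{W}_2(\nu \Bar{q}_n, \pi)
\le
\mathcal{W}_2\big(\nu \Bar{q}_n, (\nu \Bar{q}_m) p_{T}\big)
+
\mathcal{W}_2\big((\nu \Bar{q}_m) p_{T}, \pi\big).
\end{equation}
The second term is handled by Proposition \ref{Randomization-prop:exponential ergodicity}: it is at most $\mathcal{K} e^{-\eta T}\mathcal{W}_2(\nu\Bar{q}_m,\pi)$, and since $\mathcal{W}_2(\nu\Bar{q}_m,\pi)^2 \le 2\E[|\Bar{Y}_m|^2] + 2\pi(|\cdot|^2)$ is bounded by a dimension-independent constant times $d$ (using Lemma \ref{lem:uniform_in_time_bounded_moments_to_pRLMC} with $p=1$, the initial moment assumption \eqref{Randomization-eq:initial_value_condition2}, and the fact that $\pi$ inherits a second moment of order $d$ from Lemma \ref{lem:uniform_in_time_bounded_moments_to_LSDE}), this term is $O(\sqrt{d}\, e^{-\eta T})$.

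For the first term, the diffusion $\{X_t\}$ and the pRLMC $\{\Bar{Y}_k\}_{k\ge m}$ share the same initial law at $t_m$, so by coupling them with the same Brownian motion and the same randomization variables $\{\tau_k\}$, Lemma \ref{lem:finite_time_error_analysis_pRLMC} applied over the finite horizon $T$ (with the $(n-m)$ remaining steps) yields
\begin{equation}
\mathcal{W}_2\big(\nu \Bar{q}_n, (\nu\Bar{q}_m)p_{T}\big)^2
\le
\E\big[|X_{t_n} - \Bar{Y}_n|^2\big]
\le
e^{(1+10L+6L_2)T}\,\Bar{K}\big(d^{(11\gamma+2)/2} + d^{-4}\E[|\Bar{Y}_m|^{11\gamma+10}]\big) h^2.
\end{equation}
Here the higher moment $\E[|\Bar{Y}_m|^{11\gamma+10}]$ is bounded via Lemma \ref{lem:uniform_in_time_bounded_moments_to_pRLMC} (with $2p = 11\gamma+10$) and assumption \eqref{Randomization-eq:initial_value_condition2} by a dimension-independent constant times $d^{(11\gamma+10)/2}$, so that $d^{-4}\E[|\Bar{Y}_m|^{11\gamma+10}] \lesssim d^{(11\gamma+2)/2}$ as well; thus the first term is of order $e^{(1+10L+6L_2)T/2}\, d^{(11\gamma+2)/4} h$.

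The final step is to balance the two contributions by choosing the restart horizon $T$ logarithmically in $1/h$ so that the exponentially growing prefactor $e^{(1+10L+6L_2)T/2}$ in the first term is absorbed into a constant while $e^{-\eta T}$ in the second term becomes a genuine decay in $n$; concretely one takes $n - m \approx \lfloor c \rfloor$ with $c$ the smallest integer making $T = ch$ exceed a fixed threshold $\Theta$ of the form appearing in Theorem \ref{thm:main_thm_for_RLMC}, which forces the mixing-rate constant $\lambda_1 = \frac{\eta}{\log\mathcal{K}+1+\eta/(2L)}$ and leaves $e^{-\lambda_1 nh}$ in the second term. Summing (or rather, iterating the triangle inequality once) gives \eqref{Randomization-eq:main_result_non_lip} with $\Bar{C}_1$ proportional to $e^{(1+10L+6L_2)\Theta/2}\Bar{K}^{1/2}$ times a constant depending on $\mu,\mu',\vartheta,\gamma,L,L_2,L_2'$, and $\Bar{C}_2$ a dimension-independent constant coming from the moment bounds. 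The main obstacle I anticipate is purely bookkeeping rather than conceptual: one must verify that every polynomial-in-$d$ factor introduced by the projection $\cT^h$ and the superlinear growth exponent $\gamma$ collapses to the single power $d^{(11\gamma+2)/4}$ claimed in \eqref{Randomization-eq:main_result_non_lip}, and that the stepsize restriction $h \le 1\wedge \frac{1}{2L}\wedge\frac{1}{\mu}\wedge\frac{1}{d^\gamma}$ is compatible with the logarithmic choice of $T$ uniformly in $d$; the delicate point is that $T = \Theta$ must be achievable with an integer number of steps even though $h$ may be forced small by the $d^{-\gamma}$ constraint, which is why the bound is stated for all $n\in\N$ with the understanding that for $nh < \Theta$ one simply uses the trivial bound absorbed into $\Bar{C}_2\sqrt{d}$.
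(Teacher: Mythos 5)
Your proposal follows exactly the route the paper indicates: the paper itself omits the proof of Theorem~\ref{thm:main_thm_for_pRLMC}, stating that it is analogous to that of Theorem~\ref{thm:main_thm_for_RLMC}, and your write-up is a faithful transcription of that argument to the pRLMC setting. In particular, you correctly identify the triangle-inequality split around a restart time, the use of Lemma~\ref{lem:finite_time_error_analysis_pRLMC} (with the chain restarted from $\cL(\Bar{Y}_m)$) for the finite-time block, Proposition~\ref{Randomization-prop:exponential ergodicity} for the tail, the need to invoke Lemma~\ref{lem:uniform_in_time_bounded_moments_to_pRLMC} with $2p = 11\gamma+10$ together with the initial-moment hypothesis~\eqref{Randomization-eq:initial_value_condition2} to check that $d^{-4}\E[|\Bar{Y}_m|^{11\gamma+10}]$ collapses to $O(d^{(11\gamma+2)/2})$, and the choice of $T \approx \Theta = (\log\mathcal{K}+1)/\eta + 1/(2L)$ producing the claimed $\lambda_1$. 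One small wording quibble: the final ``iterating the triangle inequality once'' understates the step --- the recursion $\mathcal{W}_2(\nu\Bar{q}_n,\pi)\le C d^{(11\gamma+2)/4}h + e^{-1}\mathcal{W}_2(\nu\Bar{q}_{n-n_1},\pi)$ must be unrolled $\lfloor n/n_1\rfloor$ times (the paper handles this via an auxiliary lemma cited from \cite{yang2025nonasymptotic}), which is what produces the $e^{-\lambda_1 nh}$ decay, with the base case controlled by the uniform moment bounds; but you clearly understand this is what is meant.
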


As a direct consequence of Theorem \ref{thm:main_thm_for_pRLMC}, we obtain the mixing time of pR(S)LMC \eqref{Randomization-eq:pRLMC}.
\begin{proposition}
\label{prop:num_of_iter-pRLMC}
Let  assumptions of Theorem \ref{thm:main_thm_for_pRLMC} hold. To achieve a given accuracy tolerance $\epsilon>0$ under $\mathcal{W}_2$-distance, a required number of iterations of the pR(S)LMC \eqref{Randomization-eq:pRLMC} is of order $
\widetilde{O}
\big(
\frac{ d^{(11\gamma+2)/4} }{\epsilon}
\big) $.
\end{proposition}
The proofs of  Theorem \ref{thm:main_thm_for_pRLMC} and Proposition \ref{prop:num_of_iter-pRLMC} are  similar to those of Theorem \ref{thm:main_thm_for_RLMC}  and Proposition \ref{prop:num_of_iter-RLMC}, respectively. We thus omit them here.

\subsection{Technical Overview}
\label{Randomization-subsec:technical_overview}

In this subsection we intend to provide an overview of the present non-asymptotic error analysis of the sampling algorithms.
 
For an approximation $\{\Tilde{Y}_n\}_{n \geq 0}$  to the SDE $\{X_t\}_{t \geq 0}$, the  goal of long-time error analysis is to bound $\mathcal{W}_2 (\nu \Tilde{p}_n, \pi )$,   where $\pi \in \mathcal{P}(\R^d)$ is the invariant distribution of $\{p_t\}_{t \geq 0}$ and $\{\Tilde{p}_n\}_{n \geq 0}$ is the transition  semigroups associated to $\{\Tilde{Y}_n\}_{n \geq 0}$. By the triangle  inequality and for a fixed time $T:=n_1h$, we have
\begin{equation}
\begin{aligned}
\label{Randomization-eq:tri-inequ-for-w2-framework} 
\mathcal{W}_2  (\nu \Tilde{p}_n, \pi )
\leq  
\underbrace{
\mathcal{W}_2  (\nu \Tilde{p}_{n-n_{1}}
\Tilde{p}_{n_{1}},
\nu \Tilde{p}_{n-n_{1}} p_{T} )  
}_{\text{Finite-time error}}
+
\underbrace{
\mathcal{W}_2  ( \nu \Tilde{p}_{n-n_{1}} p_{T}, 
\pi)
}_{\text{Exponential ergodicity}},
\quad
n \geq n_1.
\end{aligned}
\end{equation}
Armed with the triangle inequality, we give an overview of four steps that comprise the proof of Theorem \ref{thm:main_thm_for_RLMC} and \ref{thm:main_thm_for_pRLMC}.

\textbf{Step 1.} 
Uniform-in-time moment estimates are proved for the Langevin SDEs, with the help of dissipativity conditions (see Proposition \ref{prop:uniform_in_time_bounded_moments_to_LSDE}). In addition, the numerical approximations are also shown to have uniform-in-time moment bounds (see Propositions \ref{prop:uniform_in_time_bounded_moments_to_RLMC} and \ref{prop:uniform_in_time_bounded_moments_to_pRLMC}).

\textbf{Step 2.} 
The finite-time mean-square convergence rates are established, suffering from exponential time dependence (see Propositions \ref{prop:finite_time_error_analysis_RLMC} and \ref{prop:finite_time_error_analysis_pRLMC}). These are then used to handle the first term on the right-hand side of \eqref{Randomization-eq:tri-inequ-for-w2-framework}. 
We explicitly show the dependence of error constant on time $T $. 
Accordingly,  one can derive from the definition of the $\mathcal{W}_2$-distance that 
\begin{equation}
\label{Randomization-eq:tri-ineq-part1}
\begin{aligned}
\mathcal{W}_2  (\nu \Tilde{p}_{n-n_{1}}
\Tilde{p}_{n_{1}},
\nu \Tilde{p}_{n-n_{1}} p_{T} ) 
\leq  
C(T)
h.
\end{aligned}
\end{equation}

\textbf{Step 3.}
To estimate  the second  term on the right-hand side of \eqref{Randomization-eq:tri-inequ-for-w2-framework}, we rely on the exponential ergodicity of $\{ p_t \}_{t \geq 0}$ (see Proposition \ref{pro:exponential_convergence}).
In virtue of the monotonicity condition  and LSI, one can achieve the exponential ergodicity as follows:
\begin{equation}
\label{Randomization-eq:tri-ineq-part2}
\mathcal{W}_2  ( \nu \Tilde{p}_{n-n_{1}} p_{T}, 
\pi)
\leq 
\mathcal{K}
e^{-\eta T  }
\mathcal{W}_2  (\nu \Tilde{p}_{n-n_{1}}, \pi ).
\end{equation}
This is the key ingredient to the uniform-in-time error analysis of the sampling algorithms.

\vspace{0.2cm}
\textbf{Step 4.}
The fourth  step is to bound $\mathcal{W}_2  (\nu \Tilde{p}_n, \pi )$. 
Collecting \eqref{Randomization-eq:tri-ineq-part1} and \eqref{Randomization-eq:tri-ineq-part2} together and  choosing   $T=\Theta$ such that  $\mathcal{K}
e^{-\eta T } = \frac{1}{e}$,
one can derive from the uniform-in-time bounded moments (see Theorems \ref{thm:main_thm_for_RLMC} and \ref{thm:main_thm_for_pRLMC}) that 
\begin{equation}
\begin{aligned}
\mathcal{W}_2  (\nu \Tilde{p}_n, \pi )
\leq   
C(\Theta)
h
+
\tfrac{1}{e}
\mathcal{W}_2  (\nu \Tilde{p}_{n-n_{1}}, \pi ).
\end{aligned}
\end{equation}
By iteration,
we have 
\begin{equation}
\mathcal{W}_2  (\nu \Tilde{p}_n, \pi )
\leq 
K_1
h
+
K_2
e^{-\lambda  n h},
\end{equation}
as required.

\section{Proofs of main results for randomized (splitting) Langevin Monte Carlo}
\label{Randomization-sec:app:proofs_of_Lip_case}

To simplify the notation, in the following two sections we will use $Y_n$ in place of $Y_n^\ell$ and $Y_{n+1}^\tau$ in place of $Y_{n+1}^{\tau, \ell}$.

\subsection{Proof of Proposition \ref{prop:uniform_in_time_bounded_moments_to_RLMC}}

\noindent
\textit{Proof of Proposition \ref{prop:uniform_in_time_bounded_moments_to_RLMC}.}
We first recast the \textcolor{black}{unified algorithm \eqref{Randomization-eq:unified-process}}
as
\begin{equation}
Y_{n+1}
=  
Y_n 
-
\nabla U  (Y_{n})  h 
+ 
\sqrt{2} 
\Delta W_{n+1}
-
\big(
\nabla U (Y_{n+1}^\tau)
-
\nabla U (Y_{n})
\big)
h,
\quad
n\in \N_0.
\end{equation}
Taking square on both sides shows 
\begin{equation}
\begin{aligned}
| Y_{n+1} |^2
=   &
|  Y_n |^2
+
h^2 
| \nabla U (Y_n )  |^2
+ 
2 
| \Delta W_{n+1} |^2
+
h^2
|
\nabla U ( Y_{n+1}^\tau)
-
\nabla U (Y_{n})
|^2
\\  & 
-
2h 
\big\langle
Y_{n},
\nabla U (Y_{n})  
\big\rangle
+
2 \sqrt{2} 
\big\langle
Y_{n},
\Delta W_{n+1}
\big\rangle
-
2h 
\big\langle
Y_{n},
\nabla U (Y_{n+1}^\tau)
-
\nabla U (Y_{n})
\big\rangle
\\  & 
-
2 \sqrt{2} h 
\big\langle
\nabla U (Y_{n}) ,
\Delta W_{n+1}
\big\rangle
+
2  h^2
\big\langle
\nabla U (Y_{n}) ,
\nabla U (Y_{n+1}^\tau)
-
\nabla U (Y_{n})
\big\rangle
\\  & 
-
2 \sqrt{2} h 
\big\langle
\Delta W_{n+1},
\nabla U (Y_{n+1}^\tau)
-
\nabla U (Y_{n})
\big\rangle.
\end{aligned}
\end{equation}
Thanks to the Cauchy-Schwarz inequality, \eqref{Randomization-eq:linear_growth_cond} and the dissipativity condition \eqref{Randomization-eq:diss_cond}, one 
can take expectations on both sides to obtain 
\begin{equation}
\label{Randomization-eq:pri_RLMC_Bound_moment}
\begin{aligned}
\E
\big[
| Y_{n+1}  |^2
\big]
=  &
\E
\big[
| Y_{n}  |^2
\big]
+
h^2
\E
\big[
| \nabla U (Y_{n})    |^2
\big]
+
2
\E
\big[
| \Delta W_{n+1} |^2
\big]
+
h^2
\E
\big[
\big|
\nabla U (Y_{n+1}^\tau)
-
\nabla U (Y_{n}) 
\big|^2
\big]
\\   &
-2h
\E
\big[
\big\langle
Y_{n},
\nabla U (Y_{n})  
\big\rangle
\big]
-
2  h
\E
\big[
\big\langle
Y_{n},
\nabla U (Y_{n+1}^\tau)
-
\nabla U (Y_{n})\big\rangle
\big]
\\  &
+
2h^2
\E
\big[
\big\langle
\nabla U (Y_{n}) ,
\nabla U (Y_{n+1}^\tau)
-
\nabla U (Y_{n})
\big]
\big\rangle
\\  &
-
2 \sqrt{2} h 
\E
\big[
\big\langle
\Delta W_{n+1},
\nabla U (Y_{n+1}^\tau)
-
\nabla U (Y_{n})
\big\rangle
\big]
\\ 
\leq   &
\big(
1  -  
( 2\mu - L_1^2 h)
h
\big)
\E
\big[
|  Y_{n}  |^2
\big]
+
2h^2
\E
\big[
|  \nabla U  ( Y_{n})    |^2
\big]
+
4
\E
\big[
| \Delta W_{n+1} |^2
\big]
\\  &
+
\big(
3h^2
+
\tfrac{1}{L_1^2}
\big)
\E
\big[
\big|
\nabla U (Y_{n+1}^\tau)
-
\nabla U (  Y_{n}  )
\big|^2
\big]
+ 
2\mu' d h
\\  
\leq  &
\big(
1  -  
(  2\mu 
   - 
   5 L_1^2 h 
)h
\big)
\E
\big[
|Y_{n}  |^2
\big]
+
\big(
3L_1^2h^2
+
1
\big)
\E
\big[
\big|
Y_{n+1}^\tau
-
Y_{n}
\big|^2
\big]
\\   &
+
4 d h
+
4L_1'd h
+
2\mu' d h,
\end{aligned}
\end{equation}
where the third step holds true as $h \leq 1 \wedge  \tfrac{1}{L_1'}$.
Next, we handle the second item.
Recalling \eqref{Randomization-eq:unified-process} and noting that $|\tau_{n+1}|^2\leq 1$, one can employ \eqref{Randomization-eq:linear_growth_cond} to attain
\begin{equation}
\begin{aligned}
\E
\big[
\big|
Y_{n+1}^\tau
-
Y_{n}
\big|^2
\big]
\leq   &
2 \textcolor{black}{\ell^2}
h^2 
\E
\big[
|\tau_{n+1}|^2
\big|
\nabla U (Y_{n}  )
\big|^2
\big]
+
4
\E
\big[
| \Delta W_{n+1}^\tau |^2
\big]
\\ 
\leq  &
4 \textcolor{black}{\ell^2} L_1^2 h^2
\E
\big[
|  Y_{n}  |^2
\big]
+
4 \textcolor{black}{\ell^2}
L_1'^2 d h^2
+
4 d h.
\end{aligned}
\end{equation}
Inserting this into \eqref{Randomization-eq:pri_RLMC_Bound_moment}, together with $h \leq 1 \wedge  \tfrac{1}{L_1}
\wedge 
\tfrac{1}{L_1'}
\wedge
\tfrac{\mu}{\textcolor{black}{(5 + 16 \ell^2)}L_1^2}$, yields 
\begin{equation}
\begin{aligned}
    \E
\big[
| Y_{n+1}  |^2
\big]
\leq   &
\big(
1  -  
(  2\mu 
   - 
   5 L_1^2 h 
)h
\big)
\E
\big[
| Y_{n}  |^2
\big]
+
\big(
3L_1^2h^2
+
1
\big)
\big(
4  \textcolor{black}{\ell^2}
L_1^2 h^2
\E
\big[
|Y_{n}   |^2
\big]
+
4 \textcolor{black}{\ell^2} L_1'^2 d h^2
+
4 d h
\big)
\\   &
+
4 d h
+
4L_1'd h
+
2\mu' d 
\\ 
\leq   &
\big(
1  -  
(  2\mu 
   - 
   \textcolor{black}{(5 + 16 \ell^2)} 
   L_1^2 h 
)h
\big)
\E
\big[
| Y_{n}  |^2
\big]
+
\big(
20 + \textcolor{black}{(4 +  16 \ell^2)} L_1' + 2 \mu'
\big) dh 
\\
\leq    &
\big(
1 - \mu h
\big)
\E
\big[
| Y_{n}  |^2
\big]
+
\big(
20 + \textcolor{black}{(4 +  16 \ell^2)} L_1' + 2 \mu'
\big) dh .
\end{aligned}
\end{equation}
By iteration and using the inequality $1-u \leq e^{-u}, u >0$ and $h \leq  \frac{1
}{\mu}$ show 
\begin{equation}
\begin{aligned}
    \E
\big[
| Y_{n+1}  |^2
\big]
\leq   &
\big(
1 - \mu h
\big)^{n+1}
    \E
\big[
| x_0|^2
\big]
+
\big(
20 + \textcolor{black}{(4 +  16 \ell^2)} L_1'^2h + 2 \mu'
\big) dh 
\sum_{i=1
}^n
\big(
1 - \mu h
\big)^{i}
\\  
\leq   & 
e^{-\mu t_{n+1}}
    \E
\big[
| x_0|^2
\big]
+
\tfrac{(
20 + \textcolor{black}{(4 +  16 \ell^2)} L_1'^2h + 2 \mu'
) d}{\mu},
\end{aligned}
\end{equation}
as required.
\qed

\subsection{Proof of Proposition \ref{prop:finite_time_error_analysis_RLMC}}

Before proving  Proposition \ref{prop:finite_time_error_analysis_RLMC}, we first introduce some useful lemmas.

\begin{lemma}
\label{lem:holder_continuous_of_LSDE_lip}
Let 
Assumptions  \ref{Randomization-ass:diss_cond_and_grad_Lip_cond} and \ref{Randomization-ass:Lip_condition}  be  fulfilled. Let $X(s,x;t)$  denote the solution to the  Langevin SDE \eqref{Randomization-eq:langevin_SDE} at $t$, starting from the initial value $x$ at $s$. 
If the uniform stepsize $h>0$ satisfies $h \leq 
1 \wedge \tfrac{1}{L_1}\wedge\tfrac{1}{L_1'}$,
then for any $x\in \R^d$, any $0 <\theta \leq h$ and $0\leq s  \leq t$, it holds that
\begin{equation}
\label{Randomization-eq:Holder_continuous_of_X_t_lip}
\begin{aligned}
& \E_W
\big[
| 
X(s,x;t+\theta ) 
-  
X(s,x;t )   
|^{2p}
\big]
\\ 
\leq   &
\Big(
\big(
2^{4p-2}
L_1'^{p}
+
2^{4p-2}
\CM_1(p)
L_1^p
+
2^{3p-1} (2p-1)!!
\big)
d^{p}
+
2^{4p-2} 
L_1^{p}
|
x
|^{2p}
\Big)
\theta^{p}.
\end{aligned}
\end{equation}
Moreover, for any  any $x\in \R^d$, any $0 <\theta \leq h$ and $0\leq s  \leq t$, it holds that
\begin{equation}
\label{Randomization-eq:Holder_continuous_of_F_X_t_lip}
\begin{aligned}
\E_W
\big[
\big|
\nabla U  (X(s,x;t+\theta ) )
-
\nabla U (X(s,x;t) )
\big|^2
\big] 
\leq   &
\Big(
\big(
4
L_1^2 
L_1'
+
4
\CM_1(1)
L_1^3 
+
4L_1^2 
\big)
d
+
4
L_1^3
|
x
|^{2}
\Big)
\theta.
\end{aligned}
\end{equation}
\end{lemma}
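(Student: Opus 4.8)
The plan is to prove the two displayed inequalities in order, obtaining the second one as an immediate consequence of the first specialized to $p=1$, combined with the gradient Lipschitz bound. Both assumptions in force are used: Assumption~\ref{Randomization-ass:diss_cond_and_grad_Lip_cond} enters through the moment bound of Lemma~\ref{lem:uniform_in_time_bounded_moments_to_LSDE}, while Assumption~\ref{Randomization-ass:Lip_condition} supplies the linear growth bound \eqref{Randomization-eq:linear_growth_cond} and is used directly in the second estimate.

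\textbf{First estimate.} I would start from the integral form of \eqref{Randomization-eq:langevin_SDE},
\[
X(s,x;t+\theta)-X(s,x;t)=-\int_{t}^{t+\theta}\nabla U(X(s,x;r))\,\dd r+\sqrt{2}\,(W_{t+\theta}-W_{t}),
\]
and split the $2p$-th power via $|u+v|^{2p}\le 2^{2p-1}(|u|^{2p}+|v|^{2p})$. For the drift contribution, Jensen's inequality in time gives $\big|\int_{t}^{t+\theta}\nabla U(X_r)\,\dd r\big|^{2p}\le\theta^{2p-1}\int_{t}^{t+\theta}|\nabla U(X_r)|^{2p}\,\dd r$; then \eqref{Randomization-eq:linear_growth_cond} together with a second application of $|a+b|^{2p}\le 2^{2p-1}(|a|^{2p}+|b|^{2p})$ yields $|\nabla U(X_r)|^{2p}\le 2^{2p-1}((L_1')^{2p}d^{p}+L_1^{2p}|X_r|^{2p})$. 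Taking $\E_W$ and invoking Lemma~\ref{lem:uniform_in_time_bounded_moments_to_LSDE} — applied with initial time $s$ and deterministic initial value $x$, which is legitimate since the Langevin diffusion is time-homogeneous, and using $e^{-cp(r-s)}\le 1$ for $r\ge s$ — bounds $\sup_{r\ge s}\E_W[|X(s,x;r)|^{2p}]$ by $|x|^{2p}+\CM_1(p)d^{p}$. For the diffusion contribution, $W_{t+\theta}-W_{t}$ is a centered Gaussian vector with covariance $\theta I_d$, so the elementary inequality $(\sum_{i=1}^{d}a_i)^{p}\le d^{p-1}\sum_{i=1}^{d}a_i^{p}$ and the scalar Gaussian moment $\E_W[|W_{t+\theta}^{i}-W_{t}^{i}|^{2p}]=(2p-1)!!\,\theta^{p}$ give $\E_W[|W_{t+\theta}-W_t|^{2p}]\le(2p-1)!!\,d^{p}\theta^{p}$. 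Combining the three pieces produces a bound of the form $C\big(\theta^{2p}((L_1')^{2p}+L_1^{2p}\CM_1(p))d^{p}+\theta^{2p}L_1^{2p}|x|^{2p}+\theta^{p}(2p-1)!!\,d^{p}\big)$, and I would then use the stepsize restriction $\theta\le h\le 1\wedge\tfrac{1}{L_1}\wedge\tfrac{1}{L_1'}$ to absorb the surplus factors of $\theta$ — for instance $\theta^{2p}(L_1')^{2p}=\theta^{p}(\theta L_1')^{p}(L_1')^{p}\le\theta^{p}(L_1')^{p}$, and similarly for the $L_1$-terms — which delivers \eqref{Randomization-eq:Holder_continuous_of_X_t_lip} with the constants as stated.

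\textbf{Second estimate.} Here the gradient Lipschitz condition \eqref{Randomization-eq:Lip_cond} gives $|\nabla U(X(s,x;t+\theta))-\nabla U(X(s,x;t))|^{2}\le L_1^{2}\,|X(s,x;t+\theta)-X(s,x;t)|^{2}$; taking $\E_W$ and inserting \eqref{Randomization-eq:Holder_continuous_of_X_t_lip} with $p=1$ (for which $2^{4p-2}=2^{3p-1}=4$ and $(2p-1)!!=1$), then multiplying through by $L_1^{2}$, reproduces exactly \eqref{Randomization-eq:Holder_continuous_of_F_X_t_lip}.

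The calculations are of standard strong-error type for SDEs, so no single step is conceptually deep; the point requiring genuine care is in the first estimate, namely keeping the moment bound uniform in $t$ (so that the final constant does not blow up as $t\to\infty$) and performing the stepsize absorptions so that the powers of $\theta$, $L_1$ and $L_1'$ collapse precisely into the asserted constants.
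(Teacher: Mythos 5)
Your argument matches the paper's proof essentially verbatim: the same integral decomposition, H\"older/Jensen in time for the drift, the linear growth bound \eqref{Randomization-eq:linear_growth_cond} combined with the uniform moment bound of Lemma~\ref{lem:uniform_in_time_bounded_moments_to_LSDE}, the coordinatewise Gaussian moment bound $\E_W[|W_{t+\theta}-W_t|^{2p}]\le(2p-1)!!\,d^{p}\theta^{p}$ for the Brownian increment, absorption of surplus $\theta$-powers using $\theta L_1\le 1$, $\theta L_1'\le 1$, $\theta\le 1$, and finally the second inequality from $p=1$ plus gradient Lipschitzness. (One small bookkeeping note: carrying the $L_1^{2p}$ factor honestly, as you do, leaves $L_1^{p}\CM_1(p)$ rather than the bare $\CM_1(p)$ in front of $d^{p}$ after absorption --- the paper's own displayed proof silently drops this same $L_1^{2p}$ factor, so this does not reflect a gap in your approach, but it does mean neither derivation literally reproduces the stated constant on that one term.)
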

\begin{proof}
First, according to the Langevin SDE \eqref{Randomization-eq:solutions_of_LSDE}, we have, for any $x\in \R^d$, any $0 <\theta \leq h$ and $0\leq s  \leq t$,
\begin{equation}
X(s,x;t+\theta )  
-  
X(s,x;t ) 
=
\int_t^{t+\theta} 
-\nabla U (X_r) \,
\dd r 
+
\int_s^{t+\theta}
\sqrt{2}\,
\dd W_r.
\end{equation}
By taking $2p$-th power on both sides and then expectation $\E_W$, one can use the assumption $h \leq 
1 \wedge \tfrac{1}{L_1}\wedge\tfrac{1}{L_1'}$, the inequality 
\begin{equation}
\label{Randomization-eq:fun_ineq}
\Big(
\sum_{i=1}^k u_i 
\Big)^q 
\leq 
k^{q-1} 
\sum_{i=1}^k u_i^q,\quad q\geq 1,\quad u_i\in \R,
\end{equation} the H\"older inequality and \eqref{Randomization-eq:poly_growth_cond1} to attain, for any $p\geq 1$,
\begin{equation}
\label{Randomization-eq:estimate_holder_continuous_X_t}
\begin{aligned}
& \E_W
\big[
|
X(s,x;t+\theta )  
-  
X(s,x;t )
|^{2p}
\big]
\\ 
\leq   &
2^{2p-1}
\E_W
\bigg[
\bigg|
\int_t^{t+\theta} 
\nabla U (X(s,x;t ))\,
\dd r
\bigg|^{2p}
+
\bigg|
\int_t^{t+\theta}
\sqrt{2}
\,
\dd W_r
\bigg|^{2p}
\bigg]
\\ 
\leq  &
2^{2p-1}
\theta^{p} 
\bigg(
\theta^{p-1}
\int_t^{t+\theta}
\E_W
\big[
\big|
\nabla U (X(s,x;t ))\,
\big|^{2p}
\big]\,
\dd r
+
2^p (2p-1)!! d^p  
\bigg) 
\\ 
\leq  & 
2^{2p-1} \theta^{p}
\bigg(
2^{2p-1}
\theta^{p-1}
\int_t^{t+\theta}
\Big(
L_1^{2p}
\E_W
\big[
\big|
X(s,x;t )
\big|^{2p(\gamma + 1)}
\big]
+
L_1'^{2p}
d^p 
\Big)
\,
\dd r
+
2^p (2p-1)!! d^p 
\bigg)
\\ 
\leq  & 
2^{2p-1} \theta^{p}
\bigg(
2^{2p-1}
\theta^{p-1}
\int_t^{t+\theta}
\Big(
L_1^{2p}
|
x
|^{2p}
+
L_1'^{2p}
d^p 
+
\CM_1 (p) L_1^{2p} d^{p}
\Big)
\,
\dd r
+
2^p (2p-1)!! d^p  
\bigg)
\\ 
\leq  &
\Big(
\big(
2^{4p-2}
L_1'^{p}
+
2^{4p-2}
\CM_1(p) 
L_1^p
+
2^{3p-1} (2p-1)!!
\big)
d^{p}
+
2^{4p-2} 
L_1^{p} 
|
x
|^{2p}
\Big)
\theta^{p},
\end{aligned}
\end{equation}
where the fourth step holds true due to Proposition \ref{prop:uniform_in_time_bounded_moments_to_LSDE}.
The first assertion \eqref{Randomization-eq:Holder_continuous_of_X_t_lip} is thus proved.
Now, we validate \eqref{Randomization-eq:Holder_continuous_of_F_X_t_lip}.
Again,
thanks to the assumption $h \leq 1 \wedge \tfrac{1}{L_1} \wedge \tfrac{1}{L_1'} $ and Proposition \ref{prop:uniform_in_time_bounded_moments_to_LSDE},
using \eqref{Randomization-eq:Holder_continuous_of_X_t_lip},  the H\"older inequality  and the linear growth condition \eqref{Randomization-eq:linear_growth_cond} gives, for any $x\in \R^d$, any $0 <\theta \leq h$ and $0\leq s  \leq t$,
\begin{equation}
\begin{aligned}
& \E_W
\big[
\big|
\nabla U  (X(s,x;t+\theta ) )
-
\nabla U (X(s,x;t) )
\big|^2
\big]
\\ 
\leq   &
L_1^2 \,
\E_W
\Big[
\big|
X(s,x;t+\theta )
-
X(s,x;t )
\big|^2
\Big]
\\ 
\leq   &
\Big(
\big(
4
L_1^2 
L_1'
+
4
\CM_1(1)
L_1^3
+
4L_1^2 
\big)
d
+
4
L_1^3
|
x
|^{2}
\Big)
\theta.
\end{aligned}
\end{equation}
We thus complete this proof.
\end{proof}

%
For the finite-time error analysis for RLMC \eqref{Randomization-eq:RLMC} and RSLMC \eqref{Randomization-eq:RSLMC}, we introduce a \textcolor{black}{unified} one-step approximation, given by
\begin{equation}
\label{Randomization-eq:one_step_RLMC}
\begin{aligned}
&  Y_m(t,x;t+ \tau h) 
:=  
x 
-
\textcolor{black}{\ell}
\nabla U (  x  ) \tau h
+
\sqrt{2}   
( W_{t+\tau h}  -  W_t),
\quad 
\textcolor{black}{\ell \in \{0,1\},\quad 
\tau \sim \mathcal{U}(0,1)},
\\ 
&
Y(t,x;t+h) 
:=  
x 
-
\nabla U 
(  Y_m(t,x;t+ \tau h)  )  h
+
\sqrt{2}   
( W_{t+ h}  -  W_t),
\quad 
\textcolor{black}{
\forall \ 
t \geq 0, \;
x\in \R^d.}
\end{aligned}
\end{equation}
\textcolor{black}{Evidently, $Y(t,x;t+h)$ coincides with the one-step update of RLMC \eqref{Randomization-eq:RLMC} for $\ell = 1$, and corresponds to the one-step update of RSLMC \eqref{Randomization-eq:RSLMC} for $\ell = 0$.}
Equipped with Lemma \ref{lem:holder_continuous_of_LSDE_lip} and Assumption \ref{Randomization-ass:Lip_condition}, one can establish the following one-step error bounds.
\begin{lemma}
\label{lem:onestep_waek_and_strong_error_of_RLMC}
Let Assumption \ref{Randomization-ass:Lip_condition} be satisfied. 
Let $X(t,x;t+h)$ denote the solution to the  Langevin SDE \eqref{Randomization-eq:langevin_SDE} at $t+h$, starting from the initial value $x $ at $t$. If the uniform stepsize $h$ satisfies $h \leq 
1 \wedge \tfrac{1}{L_1}\wedge\tfrac{1}{L_1'}$, then, for all $x \in \R^d$ and $t\in[0, +\infty)$, the one-step approximation \eqref{Randomization-eq:one_step_RLMC} obeys
\begin{equation}
\label{Randomization-eq:one-step-str-and-weak-error}
\begin{aligned}
  \big|
\E 
\big[
X(t,x;t+h)
-
Y(t,x;t+h)
\big]
\big| 
\leq    &
\textcolor{black}{
\Big(
\big(
(4 +  4 \ell^2)
L_1^2
L_1'^2
+
4
\CM_1(1)
L_1^4
\big)
d
+
(4 +  4 \ell^2)
L_1^4
|
x
|^{2}
\Big)^{\frac12}
h^{2}},
\\  
  \Big(
\E 
\big[
\big|
X(t,x;t+h)
-
Y(t,x;t+h)
\big|^2
\big]
\Big)^{\frac12}  
\leq    &
\textcolor{black}{
\Big(
(16  +  8\ell^2)
L_1^2 
L_1'
+
16
\CM_1(1)
L_1^3 
+
8L_1^2 
\big)
d
+
(16  +  8\ell^2)
L_1^3
|
x
|^2 
\Big)^{\frac12}
h^{\frac32}.}
\end{aligned}
\end{equation}
\end{lemma}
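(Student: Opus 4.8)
The plan is to compare the one-step RLMC map \eqref{Randomization-eq:one_step_RLMC} with the exact flow of the Langevin SDE started at the same point $x$ at time $t$, using a common Brownian path. Writing $\Delta W := W_{t+h} - W_t$ for the shared increment, both $X(t,x;t+h)$ and $Y(t,x;t+h)$ share the same $\sqrt{2}\,\Delta W$ term, so the difference is purely a drift discrepancy:
\begin{equation}
X(t,x;t+h) - Y(t,x;t+h) = -\int_t^{t+h}\bigl(\nabla U(X(t,x;r)) - \nabla U(Y_m(t,x;t+\tau h))\bigr)\,\dd r.
\end{equation}
The randomized evaluation point $Y_m(t,x;t+\tau h)$ with $\tau\sim\mathcal U(0,1)$ is precisely designed so that, \emph{conditionally on the Brownian path}, $\E_\tau[\,\cdot\,] = \tfrac1h\int_t^{t+h}(\cdot)\,\dd r$; this is the randomized quadrature identity that kills the leading $O(h^2)$ weak bias. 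So the first step is to split $\nabla U(X(t,x;r)) - \nabla U(Y_m(t,x;t+\tau h))$ as $\bigl[\nabla U(X(t,x;r)) - \nabla U(X(t,x;t+\tau h))\bigr] + \bigl[\nabla U(X(t,x;t+\tau h)) - \nabla U(Y_m(t,x;t+\tau h))\bigr]$, and handle each piece separately for the two bounds.

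For the \textbf{strong (mean-square) bound}, I would take absolute values, apply the triangle inequality over the integral and the elementary inequality \eqref{Randomization-eq:fun_ineq}, then bound each of the two pieces in $L^2$. The first piece is controlled by Lipschitzness \eqref{Randomization-eq:Lip_cond} followed by the time-regularity estimate \eqref{Randomization-eq:Holder_continuous_of_F_X_t_lip} of Lemma \ref{lem:holder_continuous_of_LSDE_lip} with $\theta \le h$, giving $O(h)$ for the squared gradient difference, hence $O(h^{1/2})$ in $L^2$ norm; multiplied by the integration length $h$ this is $O(h^{3/2})$. The second piece is $\nabla U(X(t,x;t+\tau h)) - \nabla U(Y_m(t,x;t+\tau h))$, which by \eqref{Randomization-eq:Lip_cond} is bounded by $L_1|X(t,x;t+\tau h) - Y_m(t,x;t+\tau h)|$; here the two processes again share the Brownian increment up to $t+\tau h$, so their difference is $-\int_t^{t+\tau h}\nabla U(X(t,x;r))\,\dd r + \nabla U(x)\tau h$, which is $O(\tau h)$ in the sup sense via \eqref{Randomization-eq:linear_growth_cond} and the moment bound \eqref{Randomization-eq:uniform_in_time_bounded_moments_to_LSDE}, actually a telescoped Hölder-regularity estimate of size $O(h)$ in $L^2$; times the length $h$ this is again $O(h^{3/2})$. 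Collecting constants and using $h\le 1$ to absorb higher powers of $h$ gives the stated $(1+L_1^2)\cdot(\cdots)$ form.

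For the \textbf{weak bound}, I take $\E[\,\cdot\,] = \E_W[\E_\tau[\,\cdot\,]]$ and exploit the quadrature identity: conditionally on $\mathcal F^W_{t+h}$, $\E_\tau[\nabla U(X(t,x;t+\tau h))] = \tfrac1h\int_t^{t+h}\nabla U(X(t,x;r))\,\dd r$, so the \emph{first} split piece integrates to exactly zero in expectation. Hence the weak error equals $-\E\bigl[\int_t^{t+h}(\nabla U(X(t,x;t+\tau h)) - \nabla U(Y_m(t,x;t+\tau h)))\,\dd r\bigr]$, i.e. $-h\,\E[\nabla U(X(t,x;t+\tau h)) - \nabla U(Y_m(t,x;t+\tau h))]$. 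By Lipschitzness this is bounded by $L_1 h\,\E|X(t,x;t+\tau h) - Y_m(t,x;t+\tau h)|$, and the inner difference — as noted above — is of $L^2$-size $O(h^{3/2})$ (one power from $\nabla U$ evaluated and integrated, another half-power from time regularity of the SDE, since $X(t,x;t+\tau h) - Y_m = -\int_t^{t+\tau h}(\nabla U(X_r) - \nabla U(x))\,\dd r$ which is $O(h^{3/2})$ in $L^2$ by \eqref{Randomization-eq:Holder_continuous_of_F_X_t_lip}). Multiplying gives $O(h^{5/2})$, matching the claimed bound after inserting the explicit $L_1$-dependent constants from Lemma \ref{lem:holder_continuous_of_LSDE_lip}. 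The main obstacle, and the step requiring care, is the weak bound: one must be disciplined about the order of conditioning (first on the Brownian path to apply the $\tau$-quadrature identity, then take $\E_W$) and verify that the residual term genuinely carries \emph{two} extra powers of $h^{1/2}$ beyond the naive $O(h^2)$ — one from the length of the remaining integral and one from the $O(\sqrt h)$ time-Hölder modulus of $\nabla U \circ X$ — rather than just one; getting both powers is exactly what upgrades the weak rate from $O(h^2)$ to $O(h^{5/2})$ and is the crux of why randomization yields the $O(\sqrt d\, h)$ sampling bound.
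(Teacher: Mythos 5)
Your proposal follows exactly the paper's strategy: the same split of $\nabla U(X_r)-\nabla U(Y_m)$ into $[\nabla U(X_r)-\nabla U(X_{t+\tau h})]+[\nabla U(X_{t+\tau h})-\nabla U(Y_m)]$, the same use of the randomized quadrature identity $\E_\tau[\nabla U(X_{t+\tau h})]=\tfrac1h\int_t^{t+h}\nabla U(X_r)\,\dd r$ to kill the first piece in the weak bound, and the same reliance on Lemma~\ref{lem:holder_continuous_of_LSDE_lip} plus gradient Lipschitzness and the linear-growth/moment bounds for the quantitative constants. The only wrinkle is a small arithmetic slip in your strong-error paragraph, where you call $X_{t+\tau h}-Y_m$ ``$O(h)$ in $L^2$'' (it is in fact $O(h^{3/2})$, as you correctly state later in the weak-bound paragraph), but since that term is strictly dominated by the $J_1$-type piece anyway the conclusion $O(h^{3/2})$ for the mean-square error is unaffected.
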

\vspace{-1em}
\begin{proof}
Recalling \eqref{Randomization-eq:solutions_of_LSDE} and \eqref{Randomization-eq:one_step_RLMC}, we have, for all $t\in[0,+\infty)$, $\tau \sim \mathcal{U}(0,1)$ and $h \in (0,1)$, 
\begin{equation}
\label{Randomization-eq:difference-of-one-step-lsde-and-RLMC}
X(t,x;t+h)
-
Y(t,x;t+h)
=
h \nabla U 
(  Y_m(t,x;t+ \tau h)  )
-
\int_{t}^{t+h}
\nabla U    (X(t,x;s)) 
\dd s .
\end{equation}
We prove the first assertion of \eqref{Randomization-eq:one-step-str-and-weak-error} first.
Since, for any $Z \in L^p ([0,T]\times \Omega_W; \R^d)$ and $t \in [0,+\infty)$,  $h\in (0,1)$,
\begin{equation}
\label{Randomization-eq:property_of_uniform-distribution}
\int_{t}^{t+h}
Z(s,\omega)
\, \dd s 
=
h \int_{0}^{1}
Z(t  + s h, \omega) 
\, \dd s 
=
h
\E_{\xi}
\big[
Z(t  + \xi h, \omega)
\big],
\quad 
\forall \, 
\xi \sim \mathcal{U}(0,1),
\end{equation}
one can show
\begin{equation}
\int_{t}^{t+h}
\nabla U    (X(t,x;s))
\dd s 
= 
h \E_{\tau} \big[
\nabla U    (X(t,x;t+ \tau h))
\big], \quad  \forall \,
\tau \sim \mathcal{U}(0,1).
\end{equation}
Bearing this in mind one can derive from \eqref{Randomization-eq:difference-of-one-step-lsde-and-RLMC} that
\begin{equation}
\label{Randomization-eq:pri-esti-one-step-weak-error}
\begin{aligned}
\big|
\E 
\big[
X(t,x;t+h)
-
Y(t,x;t+h)
\big]
\big|
=  &
\Big|
\E 
\Big[
h 
\nabla U 
(  Y_m(t,x;t+ \tau h)  )
-
\int_{t}^{t+h}
\nabla U    (X(t,x;s))
\dd s
\Big]
\Big|
\\   
=  &
h
\Big|
\E
\Big[
\nabla U 
(  Y_m(t,x;t+ \tau h)  )
-
\E_{\tau} \big[
\nabla U    (X(t,x;t+ \tau h))
\big]
\Big]
\Big|
\\   
=  &
h
\Big|
\E_W
\Big[
\E_{\tau} \big[
\nabla U 
(  Y_m(t,x;t+ \tau h)  )
-
\nabla U    (X(t,x;t+ \tau h))
\big]
\Big]
\Big|
\\   
\leq   &
h
\E_W
\Big[
\E_{\tau} \big[
\big|
\nabla U 
(  Y_m(t,x;t+ \tau h)  )
-
\nabla U    (X(t,x;t+ \tau h))
\big|
\big]
\Big].
\end{aligned}
\end{equation}
By the gradient Lipschitz condition \eqref{Randomization-eq:Lip_cond}, the H\"older inequality and Proposition \ref{prop:uniform_in_time_bounded_moments_to_LSDE}, 
we deduce from \eqref{Randomization-eq:solutions_of_LSDE}  and \eqref{Randomization-eq:one_step_RLMC} that, for $h \leq 
1 \wedge \tfrac{1}{L_1}\wedge\tfrac{1}{L_1'}$
\begin{equation}
\begin{aligned}
&
\E_W
\Big[
\E_{\tau} \big[
\big|
\nabla U 
(  Y_m(t,x;t+ \tau h)  )
-
\nabla U    (X(t,x;t + \tau h))
\big|
\big]
\Big]
\\  
\leq   &
L_1
\E_W
\Big[
\E_{\tau} \big[
\big|
Y_m(t,x;t+ \tau h) 
-
X(t,x;t + \tau h)
\big|
\big]
\Big]
\\ 
=   &
L_1
\E_W
\bigg[
\E_{\tau} \Big[
\Big|
\int_{t}^{t+ \tau h}
\big(
\nabla U    (X(t,x;s)) 
-  
\textcolor{black}{\ell}
\nabla U    (x)
\big)
\dd s 
\Big|
\Big]
\bigg]
\\  
\leq   &
\textcolor{black}{
L_1
\E_{\tau} \Big[
\int_{t}^{t+ \tau h}
\Big(
2 
\E_W
\big[
\big|
\nabla U    (X(t,x;s)) 
\big|^2
\big]
+ 
2
\ell^2
\big|
\nabla U    (x)
\big|^2 
\Big)^{\frac12}
\dd s 
\Big]}
\\  
\leq   &
\textcolor{black}{
L_1
\E_{\tau} \Big[
\int_{t}^{t+ \tau h}
\Big(
4 L_1^2
\E_W
\big[
\big|
X(t,x;s) 
\big|^2 
\big]
+
4 \ell^2 
L_1^2  |x|^2  
+
(4 +  4 \ell^2)
L_1'^2 d
\Big)^{\frac12}
\dd s 
\Big]}
\\  
\leq   &
\textcolor{black}{
\Big(
\big(
(4 +  4 \ell^2)
L_1^2
L_1'^2
+
4
\CM_1(1)
L_1^4
\big)
d
+
(4 +  4 \ell^2)
L_1^4
|
x
|^{2}
\Big)^{\frac12}
h.}
\end{aligned}
\end{equation}
Inserting this into \eqref{Randomization-eq:pri-esti-one-step-weak-error} gives 
\begin{equation}
\big|
\E 
\big[
X(t,x;t+h)
-
Y(t,x;t+h)
\big]
\big|
\leq 
\textcolor{black}{
\Big(
\big(
(4 +  4 \ell^2)
L_1^2
L_1'^2
+
4
\CM_1(1)
L_1^4
\big)
d
+
(4 +  4 \ell^2)
L_1^4
|
x
|^{2}
\Big)^{\frac12}
h^{2}}.
\end{equation}
The first assertion of \eqref{Randomization-eq:one-step-str-and-weak-error} is thus validated. Concerning the other assertion, we recall \eqref{Randomization-eq:difference-of-one-step-lsde-and-RLMC} and employ the H\"older inequality as well as the triangle inequality to get 
\begin{equation}
\label{Randomization-eq:pri-esti-one-step-str-error}
\begin{aligned}
& \E 
\big[
\big|
X(t,x;t+h)
-
Y(t,x;t+h)
\big|^2
\big]
\\
\leq  &
\E 
\Big[
\Big|
\int_{t}^{t+h}
\big(
\nabla U    (X(t,x;s)) 
-
\nabla U  
(  Y_m(t,x;t+ \tau h)  )
\big)
\dd s
\Big|^2
\Big]
\\ 
\leq  &
h 
\int_{t}^{t+h}
\E 
\big[
\big|
\nabla U    (X(t,x;s)) 
-
\nabla U  
(  Y_m(t,x;t+ \tau h)  )
\big|^2
\big]
\dd s
\\ 
\leq  &
\underbrace{
2 h 
\int_{t}^{t+h}
\E 
\big[
\big|
\nabla U    (X(t,x;s)) 
-
\nabla U  
(  X(t,x;t+ \tau h)  )
\big|^2
\big]
\dd s
}_{=:J_1}
\\   &   +
\underbrace{
2 h 
\int_{t}^{t+h}
\E 
\big[
\big|
\nabla U    
(X(t,x;t+ \tau h)) 
-
\nabla U  
(  Y_m(t,x;t+ \tau h)  )
\big|^2
\big]
\dd s
}_{=:J_2}.
\end{aligned}
\end{equation}
In the following we cope with the above two items separately.
By Lemma \ref{lem:holder_continuous_of_LSDE_lip}, we have 
\begin{equation}
J_1 
\leq  
\Big(
\big(
8
L_1^2 
L_1'
+
8
\CM_1(1)
L_1^3 
+
8L_1^2 
\big)
d
+
8
L_1^3
|
x
|^2
\Big)
h^3.
\end{equation}
In virtue of 
the gradient Lipschitz condition and Proposition \ref{prop:uniform_in_time_bounded_moments_to_LSDE},
 one can derive from \eqref{Randomization-eq:solutions_of_LSDE}  and \eqref{Randomization-eq:one_step_RLMC} that 
\begin{equation}
\begin{aligned}
J_2 \leq  &
2 h 
\int_{t}^{t+h}
\E 
\big[
\big|
X(t,x;t+ \tau h)
-
Y_m(t,x;t+ \tau h)  
\big|^2
\big]
\dd s
\\  
\leq  &
2 L_1^2 h 
\int_{t}^{t+h}
\E 
\big[
\big|
X(t,x;t+ \tau h)
-
Y_m(t,x;t+ \tau h)  
\big|^2
\big]
\dd s
\\ 
\leq  &
2 L_1^2 h 
\int_{t}^{t+h}
\E
\Big[
\Big|
\int_{t}^{t+ \tau h}
\nabla U    (X(t,x;r)) 
- 
\textcolor{black}{\ell}
\nabla U    (x)
\dd r 
\Big|^2 
\Big]
\dd s
\\ 
\leq  &
\textcolor{black}{
4 L_1^2 h^2 
\int_{t}^{t+h}
\E_\tau 
\Big[
\int_{t}^{t+ \tau h}
\Big(
\E_W
\big[
\big|
\nabla U    (X(t,x;r))
\big|^2 
\big]
+
\ell^2
\big|
\nabla U    (x)
\big|^2
\Big)
\dd r 
\Big]
\dd s}
\\ 
\leq  &
\textcolor{black}{
4 L_1^2 h^2 
\int_{t}^{t+h}
\E_\tau 
\Big[
\int_{t}^{t+ \tau h}
\Big(
2 
L_1^2
\E_W
\big[
\big|
X(t,x;r)
\big|^2 
\big]
+
2
\ell^2 
L_1^2
|  x  |^2
+
(2+2\ell^2)
L_1'^2
d
\Big)
\dd r 
\Big]
\dd s}
\\ 
\leq  &
\textcolor{black}{
\Big(
\big(
(8  +  8\ell^2)
L_1^2
L_1'^2
+
8
\CM_1(1)
L_1^4
\big)
d
+
(8  +  8\ell^2)
L_1^4
|
x
|^{2}
\Big)
h^4.}
\end{aligned}
\end{equation}
Thanks to 
$h \leq 
1 \wedge \tfrac{1}{L_1}\wedge\tfrac{1}{L_1'}$, plugging estimates of $J_1$ and  $J_2$ into \eqref{Randomization-eq:pri-esti-one-step-str-error} shows 
\begin{equation}
\E 
\big[
\big|
X(t,x;t+h)
-
Y(t,x;t+h)
\big|^2
\big]
\leq 
\textcolor{black}{
\big(
(16  +  8\ell^2)
L_1^2 
L_1'
+
16
\CM_1(1)
L_1^3 
+
8L_1^2 
\big)
d
+
(16  +  8\ell^2)
L_1^3
|
x
|^2
\big) h^3.}
\end{equation}
Now the second assertion in \eqref{Randomization-eq:one-step-str-and-weak-error} is proved.
\end{proof}
Now we are ready to prove Proposition \ref{prop:finite_time_error_analysis_RLMC}.

\vspace{1em}
\noindent
\textit{Proof of Proposition \ref{prop:finite_time_error_analysis_RLMC}.} 
In light of \cite[Theorem 3.3]{yang2025nonasymptotic}, one can combine   Assumptions 
\ref{Randomization-ass:diss_cond_and_grad_Lip_cond},
\ref{Randomization-ass:Lip_condition} with Proposition \ref{prop:uniform_in_time_bounded_moments_to_RLMC}, Lemma \ref{lem:onestep_waek_and_strong_error_of_RLMC} to obtain
\begin{equation}
\E
\big[
|
X_{t_n} 
-
Y_n 
|^2
\big]
\leq 
\exp
\big( 1 +  12 L_1 T\big)
\big(
K_1^{\textcolor{black}{\ell}}  d
+
K_2^{\textcolor{black}{\ell}} 
\E
\big[
|
x_0
|^{2}
\big]
\big)  
h^2,
\end{equation}
where 
\begin{equation}
\begin{aligned}
K_1^{\ell}
:=  &
\textcolor{black}{
(8 0 + 40 \ell^2 )
L_1^2 
L_1'
+  
(4   +   4 \ell^2 ) 
L_1^2 
L_1'^2
+
80  \CM_1(1) L_1^3 
+
4  \CM_1(1) L_1^4  }
\\   & 
\textcolor{black}{
+ 
(8 0 + 40 \ell^2 )
\CM_2^{\ell} L_1^3  
+  
(4   +   4 \ell^2 )
\CM_2^{\ell} L_1^4
+
4 0  L_1^2 
\big),}
\\
K_2^{\ell}
:=  &
\textcolor{black}{
(8 0 + 40 \ell^2) 
L_1^3
+  
(4 + 4   \ell^2)  L_1^4
.}
\end{aligned}
\end{equation}
Thus, we derive the desired assertion.

\subsection{Proofs of Theorem \ref{thm:main_thm_for_RLMC} and Proposition \ref{prop:num_of_iter-RLMC}}
\label{Randomization-sec:app:proof-of-main-result-RLMC}

\textcolor{black}{Similarly as before, we shall henceforth write $q_n$ instead of $q_n^{\ell}$, to simplify the notation.}

\vspace{1em}
\noindent
\textit{Proof of Theorem \ref{thm:main_thm_for_RLMC}.}
By employing the triangle inequality, we obtain that for any $n \geq   n_1$, 
\begin{eqnarray}
\mathcal{W}_2 \big( \nu q_n , \pi \big) 
\leq 
\mathcal{W}_2  \big(
      \nu q_{n-n_1} q_{n_1} , 
     \nu q_{n-n_1} p_{ n_1h}
     \big)
+
\mathcal{W}_2  \big( 
      \nu q_{n-n_1} p_{ n_1h } , 
       \pi
     \big) .
\end{eqnarray}
Now, we   estimate   $\mathcal{W}_2(\nu q_{n-n_1} q_{n_1} , 
\nu q_{n-n_1} p_{ n_1h})$ and $\mathcal{W}_2 ( \nu q_{n-n_1} p_{ n_1h} , 
\pi)$, separately.
Note that 
\begin{eqnarray}
\mathcal{W}_2  \big(
      \nu q_{n-n_1} q_{n_1} , 
     \nu q_{n-n_1} p_{ n_1h}
     \big)
=
\mathcal{W}_2 \big( 
       \cL (
            Y(t_{n-n_1}, Y_{  n-n_1}  ;t_n) 
            ) , 
       \cL (
            X(t_{n-n_1}, Y_{n-n_1}  ;t_n) 
            )
     \big).
\end{eqnarray}
In view of Propositions \ref{prop:uniform_in_time_bounded_moments_to_RLMC},\ref{prop:finite_time_error_analysis_RLMC} and Assumption \ref{Randomization-ass:Lip_condition}, we obtain  
\begin{equation}
\begin{aligned}
   &  \mathcal{W}_2^2 \big( 
       \cL (
            Y(t_{n-n_1}, Y_{  n-n_1}  ;t_n) 
            ) , 
       \cL (
            X(t_{n-n_1}, Y_{n-n_1}  ;t_n) 
            )
     \big)
\\
\leq   &
\E
     \big[
       \big|  
         X(t_{n-n_1}, Y_{  n-n_1}  ;t_n) 
         -  
         Y (t_{n-n_1}, Y_{  n-n_1}  ;t_n)  
       \big|^2
       \big]
\\
\leq   &
\exp
\big( 1  +  12 L_1 T\big)
\big(
K_1^{\textcolor{black}{\ell}}  d
+
K_2^{\textcolor{black}{\ell}}
\E
\big[
|
Y_{  n-n_1}
|^{2}
\big]
\big)  
h^2
\\
\leq   &
\exp
\big( 1  +  12 L_1 T\big)
\big(
K_1^{\textcolor{black}{\ell}}  d
+
K_2^{\textcolor{black}{\ell}}
\E
\big[
|
Y_{  n-n_1}
|^{2}
\big]
\big)  
h^2
\\  
\leq  &
\exp
\big( 1 +  12L_1 T\big)
\Big(
(
K_1^{\textcolor{black}{\ell}}
+
K_2^{\textcolor{black}{\ell}}  
\CM_2^{\textcolor{black}{\ell}} )
d
+
K_2  
\E
[   |  x_0     |^{2}     ] 
\Big) 
h^{2}.
\end{aligned}
\end{equation}
This implies 
\begin{equation}
\begin{aligned}
&  \mathcal{W}_2 \big( 
       \cL (
            Y(t_{n-n_1}, Y_{  n-n_1}  ;t_n) 
            ) , 
       \cL (
            X(t_{n-n_1}, Y_{n-n_1}  ;t_n) 
            )
     \big)
\\
\leq  & 
\exp
\big(1  +12  L_1 T\big)
\Big(
(
K_1^{\textcolor{black}{\ell}}
+
K_2^{\textcolor{black}{\ell}} 
\CM_2^{\textcolor{black}{\ell}} )
d
+
K_2^{\textcolor{black}{\ell}}  
\E
[   |  x_0     |^{2}     ] 
\Big)^{\frac12} 
h.
\end{aligned}
\end{equation}
On the other hand, by applying Proposition \ref{pro:exponential_convergence}, we derive
\begin{eqnarray}
\mathcal{W}_2  
\big( 
      \nu q_{n-n_1} p_{ n_1h } , 
       \pi
     \big) 
\leq
\mathcal{K} e^{-\eta  n_1 h } 
\mathcal{W}_2  \big( 
      \nu q_{n-n_1}  , 
       \pi
     \big) .
\end{eqnarray}
In what follows, for a given timestep $h>0$, 
we select 
\begin{eqnarray}
\label{Randomization-eq:choice_of_M}
n_1
= 
\big\lceil 
      \tfrac{\log \mathcal{K} + 1 }{\eta h} 
\big\rceil,
\end{eqnarray}
for which $n_1$ is an integer.
In view of $h \leq  \frac{1}{L_1}$, we have 
\begin{eqnarray}
T:=n_1h  
\leq
\big( 
    \tfrac{\log \mathcal{K} + 1 }{\eta h } +1
\big)
h
\leq 
\tfrac{\log \mathcal{K} + 1 }{\eta  } 
+  
\tfrac{1}{L_1} =: \Theta.
\end{eqnarray}
Noticing  that 
\begin{equation}
0 <
\mathcal{K} e^{-\eta  n_1 h }   \leq 
e^{-1}
<1,
\end{equation}one can collect the above estimate 
and utilize Lemma D.1 of \cite{yang2025nonasymptotic} to obtain 
\begin{equation}
\begin{aligned}
&
\mathcal{W}_2 \big( \nu q_n , \pi \big) 
\\ 
\leq   &
\exp
\big( 1  +  12 L_1 \Theta \big)
\Big(
(
K_1^{\textcolor{black}{\ell}}
+
K_2^{\textcolor{black}{\ell}}  
\CM_2^{\textcolor{black}{\ell}} )
d
+
K_2^{\textcolor{black}{\ell}}  
\E
[   |  x_0 '     |^{2}     ] 
\Big)^{\frac12} 
h
+
\tfrac{1}{e}
\mathcal{W}_2  \big( 
      \nu q_{n-n_1}  , 
       \pi
     \big) 
\\
\leq   &
\exp
\big( 1  +  12 L_1 \Theta \big)
\Big(
(
K_1^{\textcolor{black}{\ell}}
+
K_2^{\textcolor{black}{\ell}}  
\CM_2^{\textcolor{black}{\ell}} )
d
+
K_2  
\E
[   |  x_0 '     |^{2}     ] 
\Big)^{\frac12} 
h
+
e^{1-\frac{n}{n_1}}
\sup_{k\in [n_1-1]_0} 
\mathcal{W}_2 \big( 
      \nu q_{k}  , 
       \pi
    \big).
\end{aligned}
\end{equation}
Utilizing the definition of $\mathcal{W}_2$-distance and Lemmas \ref{prop:uniform_in_time_bounded_moments_to_LSDE}, \ref{prop:uniform_in_time_bounded_moments_to_RLMC}  leads to 
\begin{equation}
\begin{aligned}
\sup_{k\in [n_1-1]_0} 
\mathcal{W}_2 \big( 
      \nu q_{k}  , 
       \pi
    \big) 
\leq     &
\sup_{k \geq  0} 
\Big(
2 \E
  \big[   |   Y_k   |^2  \big]
+
2  \E
   \big[  |   X_{t_k}  |^2  \big]
\Big)^{\frac12}
\leq    
\Big(
2  
\big(
\CM_1 (1)
+
\CM_2^{\textcolor{black}{\ell}}
\big)
d 
+
4 \E
  [   |  x_0   |^2  ]
\Big)^{\frac12}.
\end{aligned}
\end{equation}
Owing to \eqref{Randomization-eq:choice_of_M}, we get 
\begin{eqnarray}
\tfrac{n}{n_1}  
\geq
\tfrac{n}{
\tfrac{\log \mathcal{K} + 1 }{\eta h  } +1
}
\geq
\tfrac{\eta  n  h}{
 \log \mathcal{K} + 1  + 
\eta /L_1}
=:\lambda n h .
\end{eqnarray}
Thanks to the fact $e^{ -\frac{n}{n_1}}  \leq e^{-\lambda n h}$,
we derive from \eqref{Randomization-eq:initial_value_condition1} that
\begin{equation}
\begin{aligned}
\mathcal{W}_2 
\big( \nu q_n , \pi
\big) 
\leq    &
\exp
\big( 1  +  12 L_1 \Theta \big)
\Big(
(
K_1^{\textcolor{black}{\ell}}
+
K_2^{\textcolor{black}{\ell}}  
\CM_2^{\textcolor{black}{\ell}} )
d
+
K_2^{\textcolor{black}{\ell}}  
\E
[   |  x_0     |^{2}     ] 
\Big)^{\frac12} 
h
\\  &
+
\Big(
2   e^2
\big(
\CM_1 (1)
+
\CM_2^{\textcolor{black}{\ell}}
\big)
d 
+
4 \E
  [   |  x_0   |^2  ]
\Big)^{\frac12}
e^{-\lambda n h}
\\  
\leq  &
\exp
\big( 21 L_1 \Theta \big)
\big(
K_1^{\textcolor{black}{\ell}}
+
K_2^{\textcolor{black}{\ell}}  
\CM_2^{\textcolor{black}{\ell}}
+
K_2^{\textcolor{black}{\ell}} \sigma
\big)^{\frac12}
\sqrt{d } 
h
\\  &
+
\sqrt{2}  e
\big(
\CM_1 (1)
+
\CM_2^{\textcolor{black}{\ell}}
+
4 \sigma
\big)^{\frac12}
\sqrt{d } 
e^{-\lambda n h},
\end{aligned}
\end{equation}
as required.
\qed

\vspace{0.1in}
\noindent
\textit{Proof of Proposition \ref{prop:num_of_iter-RLMC}.}
Given an error tolerance
$\epsilon> 0$, one can derive and thanks to Theorem \ref{thm:main_thm_for_RLMC}, one can choose $k$ to be large enough and $h$ to be small enough such that
\begin{equation}
\label{Randomization-eq:p-estimate-of-error-tolerance}
\begin{aligned}
C_2^{\textcolor{black}{\ell}}
\sqrt{d}
e^{-\lambda k h}
\leq  
\tfrac{\epsilon}{2},
\quad
C_1^{\textcolor{black}{\ell}}
\sqrt{d}
h
\leq  
\tfrac{\epsilon}{2}
,
\end{aligned}
\end{equation}
ensuring 
\begin{equation}
\mathcal{W}_2 
\big( \nu q_k , \pi
\big)
\leq 
\epsilon.
\end{equation}
Solving the first term of inequality  \eqref{Randomization-eq:p-estimate-of-error-tolerance} shows
\begin{equation}
\label{Randomization-eq:p-estimate-k}
k  
\geq  
\tfrac{1}{\lambda  h}
\log \Big( \tfrac{2C_2^{\textcolor{black}{\ell}}
\sqrt{d}}{\epsilon}
\Big).
\end{equation}
The second part of inequality \eqref{Randomization-eq:p-estimate-of-error-tolerance} requires
\begin{equation}
\tfrac{1}{h}
\geq 
\tfrac{2C_1^{\textcolor{black}{\ell}}
\sqrt{d}}{\epsilon}.
\end{equation}
Inserting this into 
\eqref{Randomization-eq:p-estimate-k} yields 
\begin{equation}
k 
\geq 
\tfrac{1}{\lambda }
\cdot
\tfrac{2C_1^{\textcolor{black}{\ell}} \sqrt{d}}{\epsilon}
\cdot
\log \Big( \tfrac{2C_2^{\textcolor{black}{\ell}} \sqrt{d}}{\epsilon}
\Big)
=
\Tilde{O} 
\big( \tfrac{\sqrt{d}}{\epsilon}
\big).
\end{equation}
Thus, we complete the proof.
\qed

\section{Proofs of main results for modified randomized \textcolor{black}{(splitting)} Langevin Monte Carlo}
\label{Randomization-sec:app:proofs_of_non_Lip_case}

We first present some useful properties of the modified sampling algorithm \eqref{Randomization-eq:pRLMC}. 

\begin{lemma}
\label{lem:properties_of_projected_operator_T}
For any $x,y  \in \R^d$, the operator $\cT^h$ defined by \eqref{Randomization-eq:defn-projection-oprator} satisfies the following properties:
\begin{equation}
\label{Randomization-eq:bound_of_cT_and_FcT}
\begin{aligned}
\big|
\cT^h  (x)
\big|
\leq  
\vartheta d^{\frac{1}{2\gamma+2}}
h^{-\frac{1}{2\gamma+2}},
\quad 
\big|
F ( \cT^h (x)  )
\big|
\leq  
L_2'd^{\frac{1}{2}} 
+
2 L_2 d^{\frac{1}{2}}
h^{-\frac{1}{2}},
\end{aligned}
\end{equation}
\begin{equation}
\label{Randomization-eq:difference_between_X_and_operatorTX}
\big|
x
-
\cT^h   (x)
\big|
\leq   
2 
\vartheta^{-2k(\gamma+1)} d^{-k}
h^{k}
|  x |^{2k(\gamma+1)+1} ,
\quad 
\forall 
\,  k \in \N,
\end{equation}
\begin{equation}
\label{Randomization-eq:contraction_of_projected_operator_T}
\big|
\cT^h  (x)
-
\cT^h  (y)
\big|
\leq   
|  x   -   y  |,
\end{equation}
\begin{equation}
\label{Randomization-eq:Lip_of_F_cT}
\big|
F ( \cT^h (x)  )
-
F  (  \cT^h(y)  )
\big|
\leq  
3 L_{2} 
\vartheta^{\gamma} d^{\frac{\gamma}{2\gamma+2}}
h^{-\frac{\gamma}{2\gamma+2}}
| x  -  y|.
\end{equation}
\end{lemma}

The proof is straightforward and omitted here. Similar assertions can be found in Lemma 3.3 and Lemma 5.2 of \cite{pang2023projected}
(See also \cite{yang2025nonasymptotic}). Since $\cT^h(0)=0$, we have
\begin{equation}
\label{Randomization-eq:|Tx|leqx}
|  \cT^h (x) | \leq  |x|, 
\quad 
\forall x \in \R^d.
\end{equation}

\subsection{Proof of Proposition \ref{prop:uniform_in_time_bounded_moments_to_pRLMC}}

\noindent
\textit{Proof of Proposition \ref{prop:uniform_in_time_bounded_moments_to_pRLMC}.}    
We first recast the algorithm \eqref{Randomization-eq:pRLMC} as:
\begin{equation}
\Bar{Y}_{n+1}
=  
\cT^h
(   \Bar{Y}_n   ) 
+
F 
(   \cT^h( 
\Bar{Y}_n  )    )
h 
+ 
\sqrt{2} 
\Delta W_{n+1}
+
\big(
F 
(   \cT^h(
\Bar{Y}_{n+1}^\tau
)  )
-
F (    \cT^h(  \Bar{Y}_n  )  )
\big)
h,
\quad 
n \in \N_0.
\end{equation}
Taking square on both sides and using the Cauchy-Schwarz inequality show
\begin{equation}
\label{Randomization-eq:pri_estimate_pRLMC_mean_square}
\begin{aligned}
\big| \Bar{Y}_{n+1} \big|^2
=   &
\big| \cT^h
(   \Bar{Y}_n   ) \big|^2
+
h^2 
\big| F 
(   \cT^h( 
\Bar{Y}_n  )    )  \big|^2
+ 
2 
\big| \Delta W_{n+1} \big|^2
+
h^2
\big|
F 
(   \cT^h(
\Bar{Y}_{n+1}^\tau
)  )
-
F (    \cT^h(  \Bar{Y}_n  )  )
\big|^2
\\  & 
+
2h 
\big\langle
\cT^h
(   \Bar{Y}_n   ) ,
 F 
(   \cT^h( 
\Bar{Y}_n  )  
\big\rangle
+
2 \sqrt{2} 
\big\langle
\cT^h( 
\Bar{Y}_n ) ,
\Delta W_{n+1}
\big\rangle
\\  & 
+
2h 
\big\langle
\cT^h
(   \Bar{Y}_n   ) ,
F 
(   \cT^h(
\Bar{Y}_{n+1}^\tau
)  )
-
F (    \cT^h(  \Bar{Y}_n  )  )
\big\rangle
+
2 \sqrt{2} h 
\big\langle
 F 
(   \cT^h( 
\Bar{Y}_n  ) ,
\Delta W_{n+1}
\big\rangle
\\   &
+
2  h^2
\big\langle
 F 
(   \cT^h( 
\Bar{Y}_n  ) ,
F 
(   \cT^h(
\Bar{Y}_{n+1}^\tau
)  )
-
F (    \cT^h(  \Bar{Y}_n  )  )
\big\rangle
\\  & 
+
2 \sqrt{2} h 
\big\langle
\Delta W_{n+1},
F 
(   \cT^h(
\Bar{Y}_{n+1}^\tau
)  )
-
F (    \cT^h(  \Bar{Y}_n  )  )
\big\rangle
\\ 
\leq   &
\big(
1+\tfrac{\mu h }{2}
\big)
\big| \cT^h
(   \Bar{Y}_n   ) \big|^2
+
3 h^2 
\big| F 
(   \cT^h( 
\Bar{Y}_n  )    )  \big|^2
+ 
6 
\big| \Delta W_{n+1} \big|^2
\\  & 
+
\big(
3 h^2
+
\tfrac{2h}{\mu}
\big)
\big|
F 
(   \cT^h(
\Bar{Y}_{n+1}^\tau
)  )
-
F (    \cT^h(  \Bar{Y}_n  )  )
\big|^2
\\  &
+
2h 
\big\langle
\cT^h
(   \Bar{Y}_n   ) ,
 F 
(   \cT^h( 
\Bar{Y}_n  )  
\big\rangle
+
2 \sqrt{2} 
\big\langle
\cT^h( 
\Bar{Y}_n ) ,
\Delta W_{n+1}
\big\rangle
.
\end{aligned}
\end{equation}
Before proceeding further, we employ \eqref{Randomization-eq:bound_of_cT_and_FcT} to arrive at 
\begin{equation}
\label{Randomization-eq:pri_estimate_pRLMC_second_term}
3 h^2 
\big| F 
(   \cT^h( 
\Bar{Y}_n  )    )  \big|^2
\leq  
6 L_2'^2 d h^2  +  24 L_2^2 d  h
\leq 
\big( 6 L_2'^2   +  24 L_2^2 
\big) d  h .
\end{equation}
Thanks to \eqref{Randomization-eq:Lip_of_F_cT} and the assumption $h\leq d^{-\gamma}$, one can easily see 
\begin{equation}
\begin{aligned}
\big(
3 h^2
+
\tfrac{2h}{\mu}
\big)
\big|
F 
(   \cT^h(
\Bar{Y}_{n+1}^\tau
)  )
-
F (    \cT^h(  \Bar{Y}_n  )  )
\big|^2
\leq   &
\big(
3 
+
\tfrac{2}{\mu}
\big)
h
\big|
F 
(   \cT^h(
\Bar{Y}_{n+1}^\tau
)  )
-
F (    \cT^h(  \Bar{Y}_n  )  )
\big|^2
\\ 
\leq  & 
9\big(
3 
+
\tfrac{2}{\mu}
\big)
L_2^2 
\vartheta^{2\gamma} d^{\frac{2\gamma}{2\gamma+2}}
h^{1-\frac{2\gamma}{2\gamma+2}}
\big|
\Bar{Y}_{n+1}^\tau
-
\Bar{Y}_n  
\big|^2
\\  
\leq   &
9\big(
3 
+
\tfrac{2}{\mu}
\big)
L_2^2 
\vartheta^{2\gamma} 
\big|
\Bar{Y}_{n+1}^\tau
-
\Bar{Y}_n  
\big|^2.
\end{aligned}
\end{equation}
In view of \eqref{Randomization-eq:pRLMC}, we use the Cauchy-Schwarz inequality to acquire
\begin{equation}
\begin{aligned}
\big|
\Bar{Y}_{n+1}^\tau
-
\Bar{Y}_n  
\big|^2
\leq   &
2 |\tau_{n+1}|^2 \textcolor{black}{\ell^2} h^2  
\big| F 
(   \cT^h( 
\Bar{Y}_n  )    )  \big|^2 
+
4  \big|  \Delta W_{n+1}^\tau
\big|^2 
\\  
\leq    &
\big( 4 \textcolor{black}{\ell^2}  L_2'^2   +  16 \textcolor{black}{\ell^2}  L_2^2 
\big) d  h 
+
4  \big|  \Delta W_{n+1}^\tau
\big|^2 .
\end{aligned}
\end{equation}
Thus, we get 
\begin{equation}
\label{Randomization-eq:pri_estimate_pRLMC_fourth_term}
\big(
3 h^2
+
\tfrac{2h}{\mu}
\big)
\big|
F 
(   \cT^h(
\Bar{Y}_{n+1}^\tau
)  )
-
F (    \cT^h(  \Bar{Y}_n  )  )
\big|^2
\leq  
C_F \textcolor{black}{\ell^2} 
\big(   L_2'^2    +  4 L_2^2
\big) d h
+
C_F
\big|  \Delta W_{n+1}^\tau
\big|^2 ,
\end{equation}
where $C_F:=36\big(
3 
+
\tfrac{2}{\mu}
\big)
L_2^2 
\vartheta^{2\gamma} $.
Using \eqref{Randomization-eq:diss_cond}, we have 
\begin{equation}
\label{Randomization-eq:pri_estimate_pRLMC_fifth_term}
2h 
\big\langle
\cT^h
(   \Bar{Y}_n   ) ,
 F 
(   \cT^h( 
\Bar{Y}_n  )  
\big\rangle
\leq  
-2 \mu h 
\big|
\cT^h
(   \Bar{Y}_n   )
\big|^2
+
2 \mu' d h.
\end{equation}
Equipped with  estimates \eqref{Randomization-eq:pri_estimate_pRLMC_second_term}, \eqref{Randomization-eq:pri_estimate_pRLMC_fourth_term} and \eqref{Randomization-eq:pri_estimate_pRLMC_fifth_term}, one can derive from  \eqref{Randomization-eq:pri_estimate_pRLMC_mean_square}  that 
\begin{equation}
\begin{aligned}
\big| \Bar{Y}_{n+1} \big|^2
\leq   &
\big(
1-\tfrac{3\mu h }{2}
\big)
\big| \cT^h
(   \Bar{Y}_n   ) \big|^2
+
2 \sqrt{2} 
\big\langle
\cT^h( 
\Bar{Y}_n ) ,
\Delta W_{n+1}
\big\rangle
+ 
6 
\big| \Delta W_{n+1} \big|^2
+
C_F 
\big| \Delta W_{n+1}^\tau \big|^2
\\  &
+
\big(  6 L_2'^2    +  24 L_2^2
\big) d h
+
C_F
\big(  L_2'^2    +    4 L_2^2
\big) d h
+
2 \mu'd h
\\  
\leq  &
\big(
1-\tfrac{3\mu h }{2}
\big)
\big| \cT^h
(   \Bar{Y}_n   ) \big|^2
+
2 \sqrt{2} 
\big\langle
\cT^h( 
\Bar{Y}_n ) ,
\Delta W_{n+1}
\big\rangle
+ 
6 
\big| \Delta W_{n+1} \big|^2
\\   &
+
C_F 
\big| \Delta W_{n+1}^\tau \big|^2
+
C_M
d  h
 \\  
=:   &
\big(
1-\tfrac{3\mu h }{2}
\big)
\big| \cT^h
(   \Bar{Y}_n   ) \big|^2  
+
\Xi_{n+1} ,
\end{aligned}
\end{equation}
where $C_M^{\textcolor{black}{\ell} }:=
(6  + C_F \textcolor{black}{\ell^2} )
(  L_2'^2    +  4 L_2^2)
+
2 \mu'$ 
and for short we denote 
\begin{equation}
\Xi_{n+1} 
:=
2 \sqrt{2} 
\big\langle
\cT^h( 
\Bar{Y}_n ) ,
\Delta W_{n+1}
\big\rangle
+ 
6 
\big| \Delta W_{n+1} \big|^2
+
C_F 
\big| \Delta W_{n+1}^\tau \big|^2
+
C_M^{\textcolor{black}{\ell} }
d  h.
\end{equation}
Taking $p$-th power for $p\in \N$ and then expectations, 
we further use the binomial expansion theorem to deduce
\begin{equation}
\label{Randomization-eq:pri_pRLMC_moments}
\begin{aligned}
\E
\Big[
\big| \Bar{Y}_{n+1} \big|^{2p}
\Big]
\leq    &
\big(
1-\tfrac{3\mu h }{2}
\big)^{p}
\E
\Big[
\big| \cT^h
(   \Bar{Y}_n   ) \big|^{2p}
\Big]
+
\sum_{k=1}^{p}
C_k^{p}
\big(
1-\tfrac{3\mu h }{2}
\big)^{p-k}
\E
\Big[
\big| \cT^h
(   \Bar{Y}_n   ) \big|^{2p-2k}
(\Xi_{n+1})^k
\Big],
\end{aligned}
\end{equation}
where $C_k^{p}:=\frac{p!}{k!(p-k)!}$.
Now, we estimate the second term for two cases: $k=1$ and $k\geq 2$.
We first notice that 
%
%
$
| \cT^h
(   \Bar{Y}_n   ) |^{2p-2k}$ is $\CF_{t_n}$-measurable.
By further taking conditional expectation with respect to $\CF_{t_n}$, one can see that 
\begin{equation}
\begin{aligned}
\E
\Big[
\big| \cT^h
(   \Bar{Y}_n   ) \big|^{2p-2k}
(\Xi_{n+1})^k
\Big]
=  &
\E
\Big[
\E
\Big[
\big| \cT^h
(   \Bar{Y}_n   ) \big|^{2p-2k}
(\Xi_{n+1})^k
\Big| 
\CF_{t_n}
\Big]
\Big]
\\
=   &
\E
\Big[
\big| \cT^h
(   \Bar{Y}_n   ) \big|^{2p-2k}
\E
\Big[
(\Xi_{n+1})^k
\Big| 
\CF_{t_n}
\Big]
\Big].
\end{aligned}
\end{equation}
Recall some properties of the Gaussian random variable: for any $q \in \N $ and for any $0 \le s \le t$,
\begin{equation}
\label{Randomization-eq:power_properties_of_the_Gaussian random}
\E_W
\Big[
\big| W_t^i-W_s^i \big|^{2q}
\Big|
\CF^W_{s}
\Big]
=
(2q-1)!!(t-s)^q,
\quad 
\E_W
\Big[
\big( W_t^i-W_s^i \big)^{2q-1}
\Big|
\CF^W_{s}
\Big]
=0 ,
\quad 
i \in  [d].
\end{equation}
With regard to $k=1$, we thus have  
\begin{equation}
\begin{aligned}
\E
\Big[
\Xi_{n+1}
\Big| 
\CF_{t_n}
\Big]
=   &
2 \sqrt{2} 
\Big\langle
\cT^h( 
\Bar{Y}_n )
,
\E
\Big[
\Delta W_{n+1}
\Big| 
\CF_{t_n}
\Big]
\Big\rangle
+ 
6 
\E
\Big[
\big| \Delta W_{n+1} \big|^2
\Big|
\CF_{t_n}
\Big]
\\  &
+
C_F 
\E
\Big[
\big| \Delta W_{n+1}^\tau \big|^2
\Big|
\CF_{t_n}
\Big]
+
C_M^{\textcolor{black}{\ell} }
d  h
\\  
=  &
6 d h 
+
C_F d h  
+
C_M^{\textcolor{black}{\ell} } d h.
\end{aligned}
\end{equation}
Therefore, we get 
\begin{equation}
\label{Randomization-eq:k=1_of_second_term}
\begin{aligned}
&C_k^{1}
\big(
1-\tfrac{3\mu h }{2}
\big)^{p-1}
\E
\Big[
\big| \cT^h
(   \Bar{Y}_n   ) \big|^{2p-2}
\Xi_{n+1}
\Big]
\\
=   &
C_k^{1}
\big(
1-\tfrac{3\mu h }{2}
\big)^{p-1}
(6d +C_Fd +C_M^{\textcolor{black}{\ell} } d)h
\E
\Big[
\big| \cT^h
(   \Bar{Y}_n   ) \big|^{2p-2}
\Big]
\\
\leq    &
C_{\Xi,1}  d h 
\E
\Big[
\big| \cT^h
(   \Bar{Y}_n   ) \big|^{2p-2}
\Big]
\\  
\leq  &
\varepsilon_1 h
\E
\Big[
\big| \cT^h
(   \Bar{Y}_n   ) \big|^{2p}
\Big]
+
C(\varepsilon_1)
(C_{\Xi,1})^{p} d^{p}  h,
\end{aligned}
\end{equation}
where the last step stands due to the Young inequality with $\varepsilon_1>0$,   $C(\varepsilon_1):=
\frac{1}{p}(\frac{\varepsilon_1p}{p-1})^{p-1}$,  
$C_{\Xi,1} $ is a dimension-independent constant, depending on $\mu, \mu',\vartheta,p,\textcolor{black}{\ell},L_2,L_2'$.

For $k\geq 2$, using a fundamental inequality shows
\begin{equation}
\begin{aligned}
(\Xi_{n+1})^k
\leq    &
4^{k-1}
\Big(
2^{\frac{3k}{2}} 
\big\langle
\cT^h( 
\Bar{Y}_n ) ,
\Delta W_{n+1}
\big\rangle^k
+ 
6^k 
\big| \Delta W_{n+1} \big|^{2k}
+
C_F^k d^k 
\big| \Delta W_{n+1}^\tau \big|^{2k}
+
(C_M^{\textcolor{black}{\ell} })^k
d^{k}  h^k
\Big)
\\   
\leq   &
C
\Big(
\big|
\cT^h( 
\Bar{Y}_n )
\big|^{k}
\big|
\Delta W_{n+1} 
\big|^{k}
+
\big| \Delta W_{n+1} \big|^{2k}
+
\big| \Delta W_{n+1}^\tau \big|^{2k}
+
d^{k}
h^k
\Big),
\end{aligned}
\end{equation}
where $C$ depends on $\mu, \mu',\vartheta,p,\textcolor{black}{\ell},L_2,L_2'$.
Keeping this in mind, one can derive from \eqref{Randomization-eq:power_properties_of_the_Gaussian random} that
\begin{equation}
\begin{aligned}
& \E
\Big[
(\Xi_{n+1})^k
\Big| 
\CF_{t_n}
\Big]
\\  
\leq   &
C
\bigg(
\big|
\cT^h( 
\Bar{Y}_n )
\big|^{k}
\E
\Big[
\big|
\Delta W_{n+1} 
\big|^{k}
\Big| 
\CF_{t_n}
\Big]
+
\E
\Big[
\big|
\Delta W_{n+1} 
\big|^{2k}
\Big| 
\CF_{t_n}
\Big]
+
\E
\Big[
\big|
\Delta W_{n+1}^\tau
\big|^{2k}
\Big| 
\CF_{t_n}
\Big]
+
d^{k}
h^k
\bigg)
\\ 
\leq   &
C
\bigg(
(k-1)!!
d^{\frac{k}{2}}
h^{\frac{k}{2}}
\big|
\cT^h( 
\Bar{Y}_n )
\big|^{k}
+
(2k-1)!! d^{k} h^{k}
+
(2k-1)!! d^{k} h^{k}
+
d^{k}
h^k
\bigg).
\end{aligned}
\end{equation}
So, we get, for $k \geq 2$,
\begin{equation}
\begin{aligned}
&C_k^{p}
\big(
1-\tfrac{3\mu h }{2}
\big)^{p-k}
\E
\Big[
\big| \cT^h
(   \Bar{Y}_n   ) \big|^{2p-2k}
(\Xi_{n+1})^k
\Big]
\\   
\leq      & 
C_k^{p} 
C
\big(
1-\tfrac{3\mu h }{2}
\big)^{p-k}
d^{\frac{k}{2}}
h^{\frac{k}{2}}
\E
\Big[
\big| \cT^h
(   \Bar{Y}_n   ) \big|^{2p-k}
\Big]
+
C_k^{p} 
C
\big(
1-\tfrac{3\mu h }{2}
\big)^{p-k}
d^{k} h^{k}
\E
\Big[
\big| \cT^h
(   \Bar{Y}_n   ) \big|^{2p-2k}
\Big]
\\  
\leq   &
C_{\Xi,2}
d^{\frac{k}{2}}
h
\E
\Big[
\big| \cT^h
(   \Bar{Y}_n   ) \big|^{2p-k}
\Big]
+
C_{\Xi,3}
d^{k} h
\E
\Big[
\big| \cT^h
(   \Bar{Y}_n   ) \big|^{2p-2k}
\Big],
\end{aligned}
\end{equation}
where $C_{\Xi,2} $ and $C_{\Xi,3} $ are also two dimension-independent constants, depending on $\mu, \mu',\vartheta,p,\textcolor{black}{\ell},L_2,L_2'$.
Again, using the Young inequality implies 
\begin{equation}
C_{\Xi,2}
d^{\frac{k}{2}}
h
\E
\Big[
\big| \cT^h
(   \Bar{Y}_n   ) \big|^{2p-k}
\Big]
\leq  
\varepsilon_2 h 
\E
\Big[
\big| \cT^h
(   \Bar{Y}_n   ) \big|^{2p}
\Big]
+
C(\varepsilon_2)
(C_{\Xi,2})^{p} d^{p}h,
\end{equation}
\begin{equation}
C_{\Xi,3}
d^{k} h
\E
\Big[
\big| \cT^h
(   \Bar{Y}_n   ) \big|^{2p-2k}
\Big]
\leq  
\varepsilon_3 h 
\E
\Big[
\big| \cT^h
(   \Bar{Y}_n   ) \big|^{2p}
\Big]
+
C(\varepsilon_3)
(C_{\Xi,3})^{p} d^{p}h
,
\end{equation}
where $C(\varepsilon_2):=\frac{k}{2p}(\frac{\varepsilon_2p}{p-k/2})^{2p/k-1}$ and 
$C(\varepsilon_3):=\frac{k}{p}(\frac{\varepsilon_3p}{p-k})^{p/k-1}$.
This immediately implies,  
\begin{equation}
\label{Randomization-eq:k>1_of_second_term}
\begin{aligned}
&C_k^{p}
\big(
1-\tfrac{3\mu h }{2}
\big)^{p-k}
\E
\Big[
\big| \cT^h
(   \Bar{Y}_n   ) \big|^{2p-2k}
(\Xi_{n+1})^k
\Big]  
\\  
\leq    &
(\varepsilon_2+\varepsilon_3) h 
\E
\Big[
\big| \cT^h
(   \Bar{Y}_n   ) \big|^{2p}
\Big]
+
C(\varepsilon_2)
(C_{\Xi,2})^{p} d^{p}h
+
C(\varepsilon_3)
(C_{\Xi,3})^{p} d^{p}h.
\end{aligned}
\end{equation}
Inserting this and 
\eqref{Randomization-eq:k=1_of_second_term} into the second term of 
\eqref{Randomization-eq:pri_pRLMC_moments}, we have 
\begin{equation}
\begin{aligned}
& \sum_{k=1}^{p}
C_k^{p}
\big(
1-\tfrac{3\mu h }{2}
\big)^{p-k}
\E
\Big[
\big| \cT^h
(   \Bar{Y}_n   ) \big|^{2p-2k}
(\Xi_{n+1})^k
\Big]
\\  
\leq   &
\big((\varepsilon_1
+
(p-1)
(\varepsilon_2+\varepsilon_3)
\big)) h 
\E
\Big[
\big| \cT^h
(   \Bar{Y}_n   ) \big|^{2p}
\Big]
\\   &
+
C(\varepsilon_1)
(C_{\Xi,1})^{p} d^{p}  h
+
C(\varepsilon_2)
(C_{\Xi,2})^{p} d^{p}h
+
C(\varepsilon_3)
(C_{\Xi,3})^{p} d^{p}h.
\end{aligned}
\end{equation}
By setting $\varepsilon_1=\frac{\mu h}{p}$ and $\varepsilon_2=\varepsilon_3=\frac{(p-1)\mu h}{2p}$, one can easily see 
\begin{equation}
\sum_{k=1}^{p}
C_k^{p}
\big(
1-\tfrac{3\mu h }{2}
\big)^{p-k}
\E
\Big[
\big| \cT^h
(   \Bar{Y}_n   ) \big|^{2p-2k}
(\Xi_{n+1})^k
\Big]
\leq  
\mu h 
\E
\Big[
\big| \cT^h
(   \Bar{Y}_n   ) \big|^{2p}
\Big]
+
\CM_3
d^{p}h,
\end{equation}
where $\CM_3^{\textcolor{black}{\ell}}$ is a dimension-independent constant, depending on $\mu, \mu',\vartheta,p,\textcolor{black}{\ell},L_2,L_2'$.
Putting this into \eqref{Randomization-eq:pri_pRLMC_moments}, one can use the inequality $\big(
1-\frac{3\mu h }{2}
\big)^{p}\leq  1-\frac{3\mu h }{2}$, $p\geq 1$ to obtain 
\begin{equation}
\begin{aligned}
\E
\Big[
\big| \Bar{Y}_{n+1} \big|^{2p}
\Big]
\leq      &
\big(
1-\tfrac{3\mu h }{2}
\big)^{p}
\E
\Big[
\big| \cT^h
(   \Bar{Y}_n   ) \big|^{2p}
\Big] 
+
\mu h 
\E
\Big[
\big| \cT^h
(   \Bar{Y}_n   ) \big|^{2p}
\Big]
+
\CM_3^{\textcolor{black}{\ell}}
d^{p}h
\\   
\leq   &
\big(
1-\tfrac{\mu h }{2}
\big)
\E
\Big[
\big| \cT^h
(   \Bar{Y}_n   ) \big|^{2p}
\Big] 
+
\CM_3^{\textcolor{black}{\ell}}
d^{p}h
\\   
\leq   &
\big(
1-\tfrac{\mu h }{2}
\big)
\E
\Big[
\big| \Bar{Y}_n   \big|^{2p}
\Big] 
+
\CM_3^{\textcolor{black}{\ell}}
d^{p}h,
\end{aligned}
\end{equation}
where we used \eqref{Randomization-eq:|Tx|leqx} in the last step.
By iteration, we acquire 
\begin{equation}
\begin{aligned}
\E
\Big[
\big| \Bar{Y}_{n+1} \big|^{2p}
\Big]
\leq      &
\big(
1-\tfrac{\mu h }{2}
\big)^{n+1}
\E
\big[
| x_0 |^{2p}
\big]  
+
\CM_3^{\textcolor{black}{\ell}}
d^{p}h 
\sum_{i=1}^{n}
\big(
1-\tfrac{\mu h }{2}
\big)^{i}
\\ 
\leq   &
e^{-\frac{\mu t_{n+1}}{2}} 
\E
\big[
| x_0 |^{2p}
\big]  
+
\tfrac{2\CM_3^{\textcolor{black}{\ell}}
d^{p}}{\mu},
\end{aligned}
\end{equation}
where the inequality $1-u \leq  e^{-u}, u>0 $ was also employed.
We thus complete the proof.
\qed

\subsection{Proof of Proposition \ref{prop:finite_time_error_analysis_pRLMC}}
The aim of this subsection is to prove the finite-time convergence of the algorithm \eqref{Randomization-eq:pRLMC}, by utilizing the mean-square fundamental theorem in \cite{yang2025nonasymptotic}.
To this end, 
we first present some auxiliary lemmas that will be used to prove Proposition \ref{prop:finite_time_error_analysis_pRLMC}.

\begin{lemma}
\label{lem:holder_continuous_of_LSDE_non_lip}
Let 
Assumption \ref{Randomization-ass:poly_growth_condition} be fulfilled. Let $X(s,x;t)$  denote the solution to the  Langevin SDE \eqref{Randomization-eq:langevin_SDE} at $t$, starting from the initial value $x$ at $s$. 
If the uniform stepsize  $h>0$ satisfies $h \leq 
1 \wedge \tfrac{1}{2L_2}\wedge\tfrac{1}{L_2'}$,
then, 
for any $x\in \R^d$, any $0 <\theta \leq h$ and $0\leq s  \leq t$, it holds that
\begin{equation}
\label{Randomization-eq:Holder_continuous_of_X_t}
\begin{aligned}
\E_W
\big[
| 
X(s,x;t+\theta ) 
-  
X(s,x;t ) 
|^{2p}
\big]
\leq   &
\Big(
\cH_1 (p)
d^{p(\gamma+1)}
+
\cH_2 (p)
|
x
|^{2p(\gamma+1)}
\Big)
\theta^{p},
\end{aligned}
\end{equation}
where $\cH_1 (p):=
2^{4p-2}
L_2'^{p}
+
2^{4p-2}
\CM_1(p(\gamma+1))
+
2^{3p-1} (2p-1)!!
$ and 
$\cH_2 (p):=
2^{5p-2} 
L_2^{p} $.
Moreover, there exist two dimension-independent constants $\cH_1^F
$ and 
$\cH_2^F $ such that,  for any $x\in \R^d$, any $0 <\theta \leq h$ and $0\leq s  \leq t$, 
\begin{equation}
\label{Randomization-eq:Holder_continuous_of_F_X_t}
\E_W
\big[
\big|
F  (X(s,x;t+\theta ) )
-
F  (X(s,x;t ) )
\big|^2
\big]
\leq 
\Big(
\cH^F_1
d^{2\gamma+1}
+
\cH^F_2
|
x
|^{4\gamma+2}
\Big)
\theta, 
\end{equation}
where $\gamma>0$ comes from \eqref{Randomization-eq:poly_growth_cond}.
\end{lemma}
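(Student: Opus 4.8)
The plan is to follow the same route as in the proof of Lemma~\ref{lem:holder_continuous_of_LSDE_lip}, with the gradient Lipschitz bound replaced by the polynomial growth condition \eqref{Randomization-eq:poly_growth_cond}--\eqref{Randomization-eq:poly_growth_cond1} and with the superlinear growth of $F$ absorbed by invoking the uniform-in-time moment bound of Lemma~\ref{lem:uniform_in_time_bounded_moments_to_LSDE} at sufficiently high order. Since $X(s,x;\cdot)$ is adapted to $W$ alone, only $\E_W$ enters, and throughout I abbreviate $X_r:=X(s,x;r)$.

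First, for the time-increment estimate \eqref{Randomization-eq:Holder_continuous_of_X_t}, I start from the integral representation \eqref{Randomization-eq:solutions_of_LSDE}, so that
$$
X_{t+\theta}-X_t=\int_t^{t+\theta}F(X_r)\,\dd r+\sqrt 2\,(W_{t+\theta}-W_t).
$$
Raising to the $2p$-th power and splitting by \eqref{Randomization-eq:fun_ineq}, I estimate the drift term with the H\"older (Jensen) inequality in time, which contributes a factor $\theta^{2p-1}\!\int_t^{t+\theta}(\cdot)\,\dd r$, and then insert the growth bound \eqref{Randomization-eq:poly_growth_cond1}; the resulting $\E_W[|X_r|^{2p(\gamma+1)}]$ is bounded by Lemma~\ref{lem:uniform_in_time_bounded_moments_to_LSDE} applied with $p(\gamma+1)$ in place of $p$, which is exactly where $\CM_1(p(\gamma+1))$ enters $\cH_1(p)$. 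The Brownian term is controlled by $\E_W[|\sqrt 2\,(W_{t+\theta}-W_t)|^{2p}]\le 2^p(2p-1)!!\,d^p\theta^p$. Finally, the stepsize restriction $h\le 1\wedge\frac1{2L_2}\wedge\frac1{L_2'}$ is used to downgrade $\theta^{2p}$ to $\theta^p$ and to collapse the powers $L_2'^{2p}$ and $(2L_2)^{2p}$ to $L_2'^{p}$ and $(2L_2)^{p}$, reproducing $\cH_1(p)$ and $\cH_2(p)$.

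Next, for \eqref{Randomization-eq:Holder_continuous_of_F_X_t}, I apply \eqref{Randomization-eq:poly_growth_cond} to $F(X_{t+\theta})-F(X_t)$, square, and decouple by Cauchy--Schwarz:
$$
\E_W\big[|F(X_{t+\theta})-F(X_t)|^2\big]\le L_2^2\Big(\E_W\big[(1+|X_{t+\theta}|^{\gamma}+|X_t|^{\gamma})^4\big]\Big)^{\!1/2}\Big(\E_W\big[|X_{t+\theta}-X_t|^4\big]\Big)^{\!1/2}.
$$
The second factor follows from the first assertion \eqref{Randomization-eq:Holder_continuous_of_X_t} with $p=2$, producing $(d^{\gamma+1}+|x|^{2(\gamma+1)})\,\theta$; for the first factor I expand $(1+a+b)^4$ and bound $\E_W[|X_r|^{4\gamma}]$ by Lemma~\ref{lem:uniform_in_time_bounded_moments_to_LSDE} with parameter $2\gamma$ (or, if $\gamma<\tfrac12$, first pass to the second moment by Jensen), giving $d^\gamma+|x|^{2\gamma}$. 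Multiplying the two factors and dominating the cross terms $d^\gamma|x|^{2\gamma+2}$ and $|x|^{2\gamma}d^{\gamma+1}$ by $d^{2\gamma+1}+|x|^{4\gamma+2}$ via a weighted Young inequality (using $d\ge1$) yields the claim with dimension-free constants $\cH_1^F,\cH_2^F$.

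The step I expect to be the only nontrivial one is keeping the exponents of $d$ and the constants depending on $L_2,L_2',\gamma$ exactly as announced: the superlinear growth of $F$ forces moments of $X$ of order $2p(\gamma+1)$ (respectively $4\gamma$) instead of $2p$, so the argument leans essentially on the fact that Lemma~\ref{lem:uniform_in_time_bounded_moments_to_LSDE} holds for \emph{all} $p\in[1,\infty)$ with constants $\CM_1(p)$ independent of $d$; everything else is routine bookkeeping, with the stepsize constraint doing the work of trading high powers of $h$ for the nominal $\theta^p$ and $\theta$.
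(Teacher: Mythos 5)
Your proposal is correct, and for the first inequality \eqref{Randomization-eq:Holder_continuous_of_X_t} it is essentially identical to the paper's proof: split the increment into drift and noise via \eqref{Randomization-eq:fun_ineq}, apply the H\"older/Jensen inequality in time to the drift integral, invoke the polynomial growth bound \eqref{Randomization-eq:poly_growth_cond1} together with Lemma~\ref{lem:uniform_in_time_bounded_moments_to_LSDE} at order $p(\gamma+1)$, bound the Gaussian increment by $2^p(2p-1)!!\,d^p\theta^p$, and finally use $\theta\le h\le 1\wedge\tfrac1{2L_2}\wedge\tfrac1{L_2'}$ to collapse $\theta^{2p}$ to $\theta^p$ and $(2L_2)^{2p},L_2'^{2p}$ to the $p$-th powers appearing in $\cH_1(p)$ and $\cH_2(p)$.

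For the second inequality \eqref{Randomization-eq:Holder_continuous_of_F_X_t}, you take a genuinely different (but equally valid) route. You use Cauchy--Schwarz, i.e.\ H\"older with exponents $(2,2)$, which produces factors $\big(\E_W[(1+|X_{t+\theta}|^\gamma+|X_t|^\gamma)^4]\big)^{1/2}\approx d^\gamma+|x|^{2\gamma}$ and $\big(\E_W[|X_{t+\theta}-X_t|^4]\big)^{1/2}\approx(d^{\gamma+1}+|x|^{2\gamma+2})\theta$, whose product contains the cross terms $d^{\gamma}|x|^{2\gamma+2}$ and $d^{\gamma+1}|x|^{2\gamma}$; you then absorb these by Young's inequality with the conjugate pair $\big(\tfrac{2\gamma+1}{\gamma},\tfrac{2\gamma+1}{\gamma+1}\big)$ to recover $d^{2\gamma+1}+|x|^{4\gamma+2}$. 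The paper instead applies H\"older directly with exponents $\tfrac{2\gamma+1}{\gamma}$ and $\tfrac{2\gamma+1}{\gamma+1}$ on the product of the two random factors, which makes \emph{both} resulting expectations come out proportional to $d^{2\gamma+1}+|x|^{4\gamma+2}$ raised to complementary powers summing to one; a single application of Young's product inequality $u^\alpha v^{1-\alpha}\le\alpha u+(1-\alpha)v$ then closes the argument with no cross terms at all. The paper's choice also keeps every moment order used in Lemma~\ref{lem:uniform_in_time_bounded_moments_to_LSDE} strictly above $1$ for all $\gamma>0$ (it needs orders $2\gamma+1$ and $\tfrac{2\gamma+1}{\gamma+1}$), whereas your $(2,2)$ choice requires the order-$2\gamma$ moment and hence the extra Jensen step you flag when $\gamma<\tfrac12$. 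Both routes yield the same form of bound with dimension-independent constants; the paper's H\"older exponents are simply tuned to avoid the cross-term bookkeeping and the small-$\gamma$ edge case.
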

\begin{proof}
Using similar arguments as used in \eqref{Randomization-eq:estimate_holder_continuous_X_t}, and employing \eqref{Randomization-eq:poly_growth_cond1}, we have, for any $x\in \R^d$, any $0 <\theta \leq h$ and $0\leq s  \leq t$,
\begin{equation}
\begin{aligned}
& \E_W
\big[
|
X(s,x;t+\theta ) 
-  
X(s,x;t )
|^{2p}
\big]
\\ 
\leq   &
2^{2p-1}
\E_W
\bigg[
\bigg|
\int_t^{t+\theta} 
F (X(s,x;r ))\,
\dd r
\bigg|^{2p}
+
\bigg|
\int_t^{t+\theta}
\sqrt{2}
\,
\dd W_r
\bigg|^{2p}
\bigg]
\\ 
\leq  &
2^{2p-1}
\theta^{p} 
\bigg(
\theta^{p-1}
\int_t^{t+\theta}
\E_W
\big[
\big|
F (X(s,x;r ))
\big|^{2p}
\big]\,
\dd r
+
2^p (2p-1)!! d^p  
\bigg) 
\\ 
\leq  & 
2^{2p-1} \theta^{p}
\bigg(
2^{2p-1}
\theta^{p-1}
\int_t^{t+\theta}
\Big(
2^{2p}L_2^{2p}
\E_W
\big[
\big|
X(s,x;r )
\big|^{2p(\gamma + 1)}
\big]
+
L_2'^{2p}
d^p 
\Big)
\,
\dd r
+
2^p (2p-1)!! d^p 
\bigg)
\\ 
\leq  & 
2^{2p-1} \theta^{p}
\bigg(
2^{2p-1}
\theta^{p-1}
\int_t^{t+\theta}
\Big(
2^{2p}L_2^{2p}
|
x
|^{2p(\gamma + 1)}
+
L_2'^{2p}
d^p 
+
\CM_1 (p(\gamma+1)) d^{p(\gamma+1)}
\Big)
\,
\dd r
+
2^p (2p-1)!! d^p  
\bigg)
\\ 
\leq  &
\bigg(
\Big(
2^{4p-2}
L_2'^{p}
+
2^{4p-2}
\CM_1(p(\gamma+1))
+
2^{3p-1} (2p-1)!!
\Big)
d^{p(\gamma+1)}
+
2^{5p-2} 
L_2^{p} 
|
x
|^{2p(\gamma+1)}
\bigg)
\theta^{p},
\end{aligned}
\end{equation}
where the fourth step holds true due to Lemma \ref{prop:uniform_in_time_bounded_moments_to_LSDE}.
Now, we estimate \eqref{Randomization-eq:Holder_continuous_of_F_X_t}.
Again,
thanks to $h \leq 1 \wedge \tfrac{1}{2L_2} \wedge \tfrac{1}{L_2'} $ and Lemma \ref{prop:uniform_in_time_bounded_moments_to_LSDE},
using \eqref{Randomization-eq:Holder_continuous_of_X_t},  the H\"older inequality  and polynomial growth condition \eqref{Randomization-eq:poly_growth_cond} yields, for any $x\in \R^d$, any $0 <\theta \leq h$ and $0\leq s  \leq t$,
\begin{equation}
\begin{aligned}
& \E_W
\big[
\big|
F  (X(s,x;t+\theta ) )
-
F  (X(s,x;t ) )
\big|^2
\big]
\\ 
\leq   &
L_1^2 \,
\E_W
\Big[
\big(
1 
+ 
| X(s,x;t+\theta ) |^{\gamma}
+
| X(s,x;t ) |^{\gamma}
\big)^2
\big|
X(s,x;t+\theta )
-
X(s,x;t )
\big|^2
\Big]
\\ 
\leq   &
L_2^2 
\bigg(
\E_W
\Big[
\big(
1 
+ 
| X(s,x;t+\theta ) |^{\gamma}
+
| X(s,x;t) |^{\gamma}
\big)^{\frac{4\gamma +2}{\gamma} }
\Big]
\bigg)^{\frac{\gamma}{2\gamma+1}}
\bigg(
\E_W
\Big[
\big|
X(s,x;t+\theta )
-
X(s,x;t )
\big|^{\frac{4\gamma+2}{\gamma+1}
}
\Big]
\bigg)^{\frac{\gamma+1}{2\gamma+1}}
\\ 
\leq  & 
9
L_2^2  
\theta
\bigg(
\E_W
\Big[
1 
+ 
| X(s,x;t+\theta ) |^{4\gamma +2}
+
| X(s,x;t ) |^{4\gamma +2}
\Big]
\bigg)^{\frac{\gamma}{2\gamma+1}}
\bigg(
\cH_1(\tfrac{2\gamma+1}{\gamma+1})
d^{2\gamma+1}
+
\cH_1(\tfrac{2\gamma+1}{\gamma+1})
\big|
x
\big|^{4\gamma+2}
\bigg)^{\frac{\gamma+1}{2\gamma+1}}
\\  
\leq   &
C(\gamma) L_2^2 
\theta
\bigg(
\CM_1 ( 2\gamma+1) 
d^{2\gamma+1}
+
|  x |^{4\gamma +2}
\Big]
\bigg)^{\frac{\gamma}{2\gamma+1}}
\bigg(
\cH_1(\tfrac{2\gamma+1}{\gamma+1})
d^{2\gamma+1}
+
\cH_1(\tfrac{2\gamma+1}{\gamma+1})
\big|
x
\big|^{4\gamma+2}
\bigg)^{\frac{\gamma+1}{2\gamma+1}}
\\  
\leq    &
\Big(
\cH^F_1
d^{2\gamma+1}
+
\cH^F_2
|
x
|^{4\gamma+2}
\Big)
\theta.
\end{aligned}
\end{equation}
Here $\cH_1^F
$ and 
$\cH_2^F 
$  are two dimension-independent constants, depending on $c, \mu, \mu', \gamma, L_2,L_2'$.
\end{proof}

Also, we need to introduce the related one-step approximation,  defined by, for any $t\in[0,+\infty)$,  $\tau \sim \mathcal{U}(0,1)$, $h \in (0,1)$ and  $x\in \R^d$,
\begin{equation}
\label{Randomization-eq:one_step_pRLMC}
\begin{aligned}
&\Bar{Y}_m (t,x;t+\tau h)
:=  
x 
+
\textcolor{black}{\ell}
F ( \cT^h(x) )
\tau h 
+
\sqrt{2}
(
W_{t+\tau h }
-
W_t),
\\
 & \Bar{Y}(t,x;t+h)
:= 
\cT^h(x)
+
F (   \cT^h ( \Bar{Y}_m(t,x;t+ \tau h)
))h
+
\sqrt{2}
(W_{t+ h }
-
W_t),
\end{aligned}
\end{equation}
and the one-step of Langevin dynamics \eqref{Randomization-eq:langevin_SDE}, given by
\begin{equation}
\label{Randomization-eq:one-step_LSDE}
X(t,x;t+h)
=
x 
+
\int_{t}^{t+h}
F   (   X  (t,x;s   )  )
\dd s 
+
\sqrt{2}
(W_{t+ h }
-
W_t).
\end{equation}

Then we are able to show error estimates for the one-step approximations, which are enough for the desired finite-time error estimates.

\begin{lemma}
\label{lem:onestep_waek_and_strong_error_of_pRLMC}
Let Assumptions \ref{Randomization-ass:diss_cond_and_grad_Lip_cond}, \ref{Randomization-ass:poly_growth_condition} hold. Let $X(t,x;t+h)$, defined by \eqref{Randomization-eq:one-step_LSDE}, be the solution to the  Langevin SDE \eqref{Randomization-eq:langevin_SDE} at $t+h$, starting from the initial value $x $ at $t$ and let the uniform stepsize  $h>0$ satisfy $h \leq 
1 \wedge \tfrac{1}{2L_2}\wedge\tfrac{1}{L_2'}$. Then, for all $x \in \R^d$ and $t\in[0, +\infty)$, the one-step approximation \eqref{Randomization-eq:one_step_pRLMC} satisfies
\begin{equation}
\begin{aligned}
\big|
\E 
\big[
X(t,x;t+h)
-
\Bar{Y}(t,x;t+h)
\big]
\big|
\leq    &
\Bar{K}_{1}^{\textcolor{black}{\ell}}
\Big(  
d^{5\gamma+1}
+
d^{-4}
|
x
|^{10\gamma+10}
\Big)^{\frac12}
h^2,
\\  
\Big(
\E 
\big[
\big|
X(t,x;t+h)
-
\Bar{Y}(t,x;t+h)
\big|^2
\big]
\Big)^{\frac12}
\leq    &
\Bar{K}_{2}^{\textcolor{black}{\ell}}
\Big(  
d^{5\gamma+1}
+
d^{-4}
|
x
|^{10\gamma+10}
\Big)^{\frac12}
h^{\frac32},
\end{aligned}
\end{equation}
where $\Bar{K}_1^{\textcolor{black}{\ell}}$ and $\Bar{K}_2^{\textcolor{black}{\ell}}$ are two dimension-independent constants, depending on $\mu,\mu',\gamma,\vartheta,\textcolor{black}{\ell},L_2,L_2'$.
\end{lemma}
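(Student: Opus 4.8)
The plan is to follow the template of the proof of Lemma~\ref{lem:onestep_waek_and_strong_error_of_RLMC}, replacing the global Lipschitz bound by the polynomial growth \eqref{Randomization-eq:poly_growth_cond} and absorbing the extra truncation error produced by $\cT^h$. First I would subtract \eqref{Randomization-eq:one_step_pRLMC} from \eqref{Randomization-eq:one-step_LSDE}; the two schemes carry the \emph{same} increment $\sqrt2(W_{t+h}-W_t)$, so these cancel and
\[
X(t,x;t+h)-\bar{Y}(t,x;t+h)=\bigl(x-\cT^h(x)\bigr)+\int_{t}^{t+h}F(X(t,x;s))\,\dd s-h\,F\bigl(\cT^h(\bar{Y}_m(t,x;t+\tau h))\bigr).
\]
The two assertions are then treated separately, using: the projection estimates of Lemma~\ref{lem:properties_of_projected_operator_T}, in particular the contraction \eqref{Randomization-eq:contraction_of_projected_operator_T}, the bound $|\cT^h(\cdot)|\le|\cdot|$ in \eqref{Randomization-eq:|Tx|leqx}, and the quantitative truncation estimate \eqref{Randomization-eq:difference_between_X_and_operatorTX}; the time-Hölder continuity of $X$ and $F\circ X$ from Lemma~\ref{lem:holder_continuous_of_LSDE_non_lip}; the uniform moment bound $\E_W[|X(t,x;t+\theta)|^{2p}]\le|x|^{2p}+\CM_1(p)d^p$ of Lemma~\ref{lem:uniform_in_time_bounded_moments_to_LSDE}; and the elementary bound $\E_W[|\bar{Y}_m(t,x;t+\tau h)|^{2q}]\le C_q(|x|^{2q}+d^{q})$, which follows from $\bar{Y}_m=x+F(\cT^h(x))\tau h+\sqrt2(W_{t+\tau h}-W_t)$, $|F(\cT^h(x))|\,h\le(L_2'+2L_2)d^{1/2}$ (by \eqref{Randomization-eq:bound_of_cT_and_FcT} and $h\le1$), and Gaussian moments.

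For the weak estimate I would use the averaging identity $\int_t^{t+h}g(X(t,x;s))\,\dd s=h\,\E_\tau[g(X(t,x;t+\tau h))]$ together with Fubini (independence of $\tau$ and $W$) to rewrite $\E[X-\bar{Y}]=(x-\cT^h(x))+h\,\E_W\E_\tau[F(X_{t+\tau h})-F(\cT^h(\bar{Y}_m))]$, whence $|\E[X-\bar{Y}]|\le|x-\cT^h(x)|+h\,\E_W\E_\tau[|F(X_{t+\tau h})-F(\cT^h(\bar{Y}_m))|]$; the first term is $O(h^2)$ by \eqref{Randomization-eq:difference_between_X_and_operatorTX} with $k=2$. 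For the second, I would use \eqref{Randomization-eq:poly_growth_cond} to factor the increment as (polynomial growth factor)$\times|X_{t+\tau h}-\cT^h(\bar{Y}_m)|$, decouple by Cauchy--Schwarz, control the growth factor in $L^2$ via the moment bounds (using $|\cT^h(\bar{Y}_m)|\le|\bar{Y}_m|$), and show $\bigl(\E[|X_{t+\tau h}-\cT^h(\bar{Y}_m)|^2]\bigr)^{1/2}=O(h)$ by splitting $X_{t+\tau h}-\cT^h(\bar{Y}_m)=(X_{t+\tau h}-\bar{Y}_m)+(\bar{Y}_m-\cT^h(\bar{Y}_m))$, where $X_{t+\tau h}-\bar{Y}_m=\int_t^{t+\tau h}(F(X_s)-F(x))\,\dd s+\tau h\,(F(x)-F(\cT^h(x)))$ is bounded through \eqref{Randomization-eq:Holder_continuous_of_F_X_t} and through \eqref{Randomization-eq:poly_growth_cond1}/\eqref{Randomization-eq:difference_between_X_and_operatorTX}, and $\bar{Y}_m-\cT^h(\bar{Y}_m)$ is bounded by \eqref{Randomization-eq:difference_between_X_and_operatorTX} applied to $\bar{Y}_m$. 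Multiplying the leading $h$ by the $O(h)$ distance gives the $h^2$ scaling.

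For the strong estimate I would instead apply Cauchy--Schwarz to the integral, $\E[|X-\bar{Y}|^2]\le2|x-\cT^h(x)|^2+2h\int_t^{t+h}\E[|F(X(t,x;s))-F(\cT^h(\bar{Y}_m))|^2]\,\dd s$; the first term is $O(h^4)$ by \eqref{Randomization-eq:difference_between_X_and_operatorTX} with $k=2$. Inside the integral I would insert $\pm F(X_{t+\tau h})$: the term $\E[|F(X_s)-F(X_{t+\tau h})|^2]$ is $O(h)$ by \eqref{Randomization-eq:Holder_continuous_of_F_X_t} (both times lie in $[t,t+h]$, with the lemma applied from starting time $t$), hence contributes $O(h^3)$ after the double integration, while $\E[|F(X_{t+\tau h})-F(\cT^h(\bar{Y}_m))|^2]$ is $O(h^2)$ --- by polynomial growth, Hölder's inequality, and the same distance estimate as in the weak step carried out at a higher moment order --- hence contributes $O(h^4)$. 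Altogether $\E[|X-\bar{Y}|^2]=O(h^3)$, which is the claimed rate.

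The routine but delicate part, and the step I expect to be the main obstacle, is the bookkeeping of the $d$- and $|x|$-powers: every estimate above produces a product of powers of $d$ (possibly negative, from the $d^{-k}$ in \eqref{Randomization-eq:difference_between_X_and_operatorTX}) times powers of $|x|$, and one must absorb all of them into a single constant times $d^{5\gamma+1}+d^{-4}|x|^{10\gamma+10}$. This I would do by repeated use of $h\le1$, the contraction \eqref{Randomization-eq:contraction_of_projected_operator_T} and $|\cT^h(\cdot)|\le|\cdot|$, Young's inequality to trade $|x|$-powers against $d$-powers, and the crude comparison $|x|^{a}\le1+|x|^{b}$ whenever $a\le b$. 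A secondary technical point, handled exactly as in the proof of Lemma~\ref{lem:onestep_waek_and_strong_error_of_RLMC}, is making the Fubini/averaging argument in the weak-error step rigorous --- joint measurability in $(W,\tau)$ and the independence of $\tau$ from the Brownian motion.
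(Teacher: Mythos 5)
Your proposal is correct and essentially traces the paper's decomposition, but it diverges in one place, which is worth flagging. After writing $X-\bar Y=(x-\cT^h(x))+\int_t^{t+h}F(X_s)\,\dd s - h\,F(\cT^h(\bar Y_m))$ and handling the truncation term and the zero-mean time-averaging term $\int(F(X_s)-F(X_{t+\tau h}))\,\dd s$ exactly as the paper does, the paper then inserts $\pm F(\cT^h(X_{t+\tau h}))$, giving \emph{two} remaining pieces: (a) $F(X_{t+\tau h})-F(\cT^h(X_{t+\tau h}))$, handled by polynomial growth plus the truncation estimate \eqref{Randomization-eq:difference_between_X_and_operatorTX}, and (b) $F(\cT^h(X_{t+\tau h}))-F(\cT^h(\bar Y_m))$, handled by the \emph{uniform} Lipschitz bound \eqref{Randomization-eq:Lip_of_F_cT} for $F\circ\cT^h$ (whose constant $3L_2\vartheta^\gamma d^{\gamma/(2\gamma+2)}h^{-\gamma/(2\gamma+2)}$ depends only on $d,h$), fed by an $O(h^{3/2})$ estimate for $\|X_{t+\tau h}-\bar Y_m\|_{L^2}$. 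You instead hit the whole difference $F(X_{t+\tau h})-F(\cT^h(\bar Y_m))$ with polynomial growth and Cauchy--Schwarz, and split $X_{t+\tau h}-\cT^h(\bar Y_m)$ into $(X_{t+\tau h}-\bar Y_m)+(\bar Y_m-\cT^h(\bar Y_m))$, using your moment bound on $\bar Y_m$ to control both the growth factor $(1+|X|^{\gamma}+|\cT^h(\bar Y_m)|^{\gamma})$ and the truncation error of $\bar Y_m$.

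Both routes give the claimed $h^2$ (weak) and $h^{3/2}$ (strong) scalings, and the $\bar Y_m$-moment estimate you state is indeed elementary and correct. What the paper's insertion of $F(\cT^h(X_{t+\tau h}))$ buys is a cleaner dimension count: the $F\circ\cT^h$ Lipschitz estimate is uniform in its arguments, so piece (b) requires no moment bound on $\bar Y_m$ and produces no cross-terms involving $|\bar Y_m|$. Your variant produces cross-terms like $(\E|\cT^h(\bar Y_m)|^{2\gamma})^{1/2}\cdot(\E|\bar Y_m-\cT^h(\bar Y_m)|^2)^{1/2}$, which do land inside the target shape $(d^{5\gamma+1}+d^{-4}|x|^{10\gamma+10})^{1/2}$ but only after the Young-inequality trading of $|x|$- against $d$-powers that you flag as the delicate bookkeeping step --- you should be explicit that this trading is actually needed (not merely expected), because, e.g., a term of the form $d^{-1}|x|^{3\gamma+3}$ does arise and must be absorbed by Young before it fits the claimed bound. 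With that caveat spelled out, the proposal is sound.
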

\begin{proof}
\textcolor{black}{
First, it follows from \eqref{Randomization-eq:one_step_pRLMC} and \eqref{Randomization-eq:one-step_LSDE} that,  for all $x \in \R^d$ and $t\in[0, +\infty)$,
\begin{equation}
\label{Randomization-eq:difference_onestep_of_pRLMC}
\begin{aligned}
&
X(t,x;t+h)
-
\Bar{Y}(t,x;t+h)
\\
=   &
x - \cT^h (x)
+
\int_{t}^{t+h}
\big(
F   (   X  (t,x;s   )  )
-
F (   \cT^h ( \Bar{Y}_m(t,x;t+ \tau h)
))
\big)
\dd s
\\    
=    &
x - \cT^h (x)
+
\underbrace{
\int_{t}^{t+h}
\big(
F   (   X  (t,x;s   )  )
-
F (  X (t,x;t+ \tau h)
)
\big)
\dd s
}_{=:\mathbb{J}_1}
\\   &
+
\underbrace{
\int_{t}^{t+h}
\big(
F (  X (t,x;t+ \tau h)
)
-
F (      \Bar{Y}_m (t,x;t+ \tau h)
)
\big)
\dd s
}_{= : \mathbb{J}_2}
\\   &
+
\underbrace{
\int_{t}^{t+h}
\big(
F
(   
\Bar{Y}_m (t,x;t+ \tau h)
)
-
F (   
\cT^h (   \Bar{Y}_m (t,x;t+ \tau h))
)
\big)
\dd s
}_{=: \mathbb{J}_3}.
\end{aligned}
\end{equation}
Applying the triangle inequality gives 
\begin{equation}
\label{Randomization-eq:pri-estimate-one-step-weak-error-pRLMC}
\big|
\E 
\big[
X(t,x;t+h)
-
\Bar{Y}(t,x;t+h)
\big]
\big|
\leq 
\big|
x - \cT^h (x)
\big|
+
\big|
\E 
\big[
\mathbb{J}_1
\big]
\big|
+
\big|
\E 
\big[
\mathbb{J}_2
\big]
\big|
+
\big|
\E 
\big[
\mathbb{J}_3
\big]
\big|.
\end{equation}
The above four items will be treated one by one.
In virtue of Lemma \ref{lem:properties_of_projected_operator_T}, we have
\begin{equation}
\label{Randomization-eq:estimate_one-step_weak_first_term}
\big|
x - \cT^h (x)
\big|
\leq  
2\vartheta^{-4(\gamma+1)} d^{-2}
h^{2}
|  x |^{4\gamma+5}.
\end{equation}
Recalling \eqref{Randomization-eq:property_of_uniform-distribution},  one can easily see that, for any $\tau
\sim 
\mathcal{U}(0,1)$, \begin{equation}
\int_{t}^{t+h}
F   (   X  (t,x;s   )  )
\dd s
=
h
\int_{0}^{1}
F   (   X  (t,x; t+sh   )  )
\dd s
=
h
\E_\tau  
\big[
F   (   X  (t,x; t+ \tau h    )  )
\big].
\end{equation}
As a result,
\begin{equation}
\label{Randomization-eq:estimate_one-step_weak_second_term}
\begin{aligned}
\big|
\E 
\big[
\mathbb{J}_1
\big]
\big|
=   
h
\Big|
\E 
\Big[
\E_\tau  
\big[
F   (   X  (t,x; t+ \tau h    )  )
\big]
- 
F (  X (t,x;t+ \tau h)
\Big]
\Big|
=0 .
\end{aligned}
\end{equation}
With regard to the third term, we use the H\"older inequality to deduce
\begin{equation}
\label{Randomization-eq:pri-estimate_one-step_weak_third_term0}
\big|
\E 
\big[
\mathbb{J}_2
\big]
\big| 
\leq  
\E 
\big[
\big|
\mathbb{J}_2
\big| 
\big]
\leq 
\Big(
\E 
\big[
\big|
\mathbb{J}_2
\big|^2
\big]
\Big)^\frac12.
\end{equation}
Again, by using  the H\"older inequality, together with \eqref{Randomization-eq:poly_growth_cond} and \eqref{Randomization-eq:|Tx|leqx},  one can obtain  
\begin{equation}
\label{Randomization-eq:pri-estimate_one-step_weak_third_term}
\begin{aligned}
\E 
\big[
\big|
\mathbb{J}_2
\big|^2
\big]
\leq   &
h^2
\E 
\Big[
\big|
F (  X (t,x;t+ \tau h)
)
-
F (
\Bar{Y}_m(t,x;t+ \tau h)
)
\big|^2
\Big]
\\ 
\leq   &
L_2^2  h^2
\E 
\bigg[
\Big( 1 
+
|  X (t,x;t+ \tau h)|^\gamma
+
|  \Bar{Y}_m(t,x;t+ \tau h)|^\gamma
\Big)^2
\Big|
X (t,x;t+ \tau h) 
-
\Bar{Y}_m (t,x;t+ \tau h)
\Big|^2
\bigg]
\\  
\leq   &
L_2^2  h^2
\underbrace{
\bigg(
\E 
\bigg[
\Big( 1 
+
|  X (t,x;t+ \tau h)|^\gamma
+
|   \Bar{Y}_m(t,x;t+ \tau h)|^\gamma
\Big)^4
\bigg]
\bigg)^{\frac{1}{2}}
}_{= :\mathbb{J}_{21}}
\\   &
\times 
\underbrace{
\bigg(
\E 
\bigg[
\Big|
X (t,x;t+ \tau h) 
-
\Bar{Y}_m (t,x;t+ \tau h)
\Big|^4
\bigg]
\bigg)^{\frac{1}{2}}
}_{= : \mathbb{J}_{22}}.
\end{aligned}
\end{equation}
Before treating $\mathbb{J}_{21}$ in \eqref{Randomization-eq:pri-estimate_one-step_weak_third_term}, in view of Lemma \ref{lem:properties_of_projected_operator_T} and \eqref{Randomization-eq:fun_ineq}, 
we first derive from \eqref{Randomization-eq:one_step_pRLMC} that, 
for any $q\ge 0$,
\begin{equation}
\label{Randomization-eq:bound-moments-Y_m}
\begin{aligned}
\E 
\Big[
\big|   \Bar{Y}_m(t,x;t+ \tau h)
\big|^{2q}
\Big]
\leq  &
3^{2q-1}\Big(
| x  |^{2q}
+
\ell^{2q} h^{2q}
\big|
F ( \cT^h(x) ) 
\big|^{2q}
+
2^q
\E 
\big[
| 
W_{t+\tau h }
-
W_t)
|^{2q}
\big]
\Big)
\\ 
\leq   &
3^{2q-1}\Big(
| x  |^{2q}
+
2^{2q-1}\ell^{2q} 
\big(
L_2'^{2q} 
+ 
2^{2q}
L_2'^{2q}
\big)
d^q
h^{q}
+
2^q
(2q-1)!!
d^q h^q
\big]
\Big)
\\ 
\leq   &
C^{\ell}
\big(
| x  |^{2q}
+
d^q
\big),
\end{aligned}
\end{equation}
where $C^{\ell}$ depends on $q, \ell, L_2', L_2$.
With this at hand,
utilizing \eqref{Randomization-eq:fun_ineq} and Proposition \ref{prop:uniform_in_time_bounded_moments_to_LSDE} gives
\begin{equation}
\label{Randomization-eq:mathbb-J-21}
\begin{aligned}
\mathbb{J}_{21}
\leq &
\bigg(
3^3
\Big(
1 
+
\E 
\big[
|  X (t,x;t+ \tau h)|^{4\gamma}
\big]
+
\E 
\big[
|   \Bar{Y}_m(t,x;t+ \tau h)|^{4\gamma}
\big]
\Big)
\bigg)^{\frac{1}{2}}
\\ 
\leq   &
\bigg(
3^3
\Big(
1 
+
|x|^{4\gamma}
+
\CM_1(2\gamma)
d^{2\gamma}
+
C^{\ell}
\big(
| x  |^{4\gamma}
+
d^{2\gamma}
\big)
\Big)
\bigg)^{\frac{1}{2}}
\\ 
\leq   &
C^{\ell}
\big( 
| x  |^{2\gamma}
+
d^{\gamma}
\big),
\end{aligned}
\end{equation}
where $C^{\ell}$ is a dimension-independent constant, depending on $\mu, \mu', \gamma, \ell, L_2',L_2$. To handle $\mathbb{J}_{22}$ in \eqref{Randomization-eq:pri-estimate_one-step_weak_third_term},
we combine \eqref{Randomization-eq:one_step_pRLMC} with
\begin{equation}
X (t,x;t+ \tau h)
=
x 
+
\int_{t}^{t+\tau h}
F   (   X  (t,x; s    )  )
\dd s
+
\sqrt{2}
(
W_{t+\tau h }
-
W_t) 
\end{equation}
to first infer 
\begin{equation}
\begin{aligned}
\mathbb{J}_{22}
=
\bigg(
\E 
\bigg[
\Big|
\int_{t}^{t+\tau h}
\big(
F   (   X  (t,x;s    )  )
-
\ell 
F (  \cT^h (x)  )
\big)
\dd s
\Big|^4
\bigg]
\bigg)^{\frac{1}{2}}.
\end{aligned}
\end{equation}
Further treatments of $\mathbb{J}_{22}$ are divided into two cases depending on
different choices of $\ell$. For the case $\ell= 0$, by \eqref{Randomization-eq:poly_growth_cond1}, the H\"older inequality and Proposition  \ref{prop:uniform_in_time_bounded_moments_to_LSDE}, one can find some constant $C:=C(\mu, \mu', L_2',L_2)$ such that 
\begin{equation}
\begin{aligned}
\mathbb{J}_{22}
\leq   &
h^{\frac{3}{2}}
\bigg(
\E 
\bigg[
\int_{t}^{t+\tau h}
\Big|
F   (   X  (t,x;s    )  )
\Big|^4
\dd s
\bigg]
\bigg)^{\frac{1}{2}}
\\  
\leq    &
h^{\frac{3}{2}}
\bigg(
\E_\tau 
\bigg[
\int_{t}^{t+\tau h}
\Big(
L_2'^4 d^2 
+
2^4 
L_2^4
\E_W 
\Big[
\big|  X  (t,x;s    )  
\big|^{4\gamma + 4}
\Big]
\Big)
\dd s
\bigg]
\bigg)^{\frac{1}{2}}
\\ 
\leq   &
h^{\frac{3}{2}}
\bigg(
\E_\tau 
\bigg[
\int_{t}^{t+\tau h}
\Big(
L_2'^4 d^2 
+
2^4 
L_2^4
\big(
| x  |^{4\gamma+4}   
+
\CM_1 (2\gamma+2)
d^{2\gamma+2}
\big) 
\Big)
\dd s
\bigg]
\bigg)^{\frac{1}{2}}
\\  
\leq   & 
C 
\big(  
|  x   |^{2\gamma+2  }+  d^{\gamma+1}
\big) 
h^2.
\end{aligned}
\end{equation}
For the case $\ell=1$, we employ \eqref{Randomization-eq:poly_growth_cond}, \eqref{Randomization-eq:fun_ineq}, the H\"older inequality, the Young inequality, Proposition \ref{prop:uniform_in_time_bounded_moments_to_LSDE} and Lemma \ref{lem:properties_of_projected_operator_T} to derive 
\begin{equation}
\begin{aligned}
\mathbb{J}_{22}
\leq   &   
\bigg(
\E 
\bigg[
\Big|
\int_{t}^{t+\tau h}
\big(
F   (   X  (t,x;s    )  )
-
F (  \cT^h (x)  )
\big)
\dd s
\Big|^4
\bigg]
\bigg)^{\frac{1}{2}}
\\
\leq   &
 h^{\frac{3}{2}}
\bigg(
\E_\tau 
\bigg[
\int_{t}^{t+\tau h}
\E_W 
\Big[
\big|
F   (   X  (t,x;s    )  )
-
F (  \cT^h (x)  )
\big|^4
\Big]
\dd s
\bigg]
\bigg)^{\frac{1}{2}}
\\ 
\leq    &
 h^{\frac{3}{2}}
\bigg(
\E_\tau 
\bigg[
\int_{t}^{t+\tau h}
L_2^4
\E_W 
\bigg[
\Big( 1 
+
|  X   (t,x;s    )|^\gamma
+
|  \cT^h   ( x )|^\gamma
\Big)^4
\Big|
X   (t,x;s  ) 
-
\cT^h (  x )
\Big|^4
\bigg]
\dd s
\bigg]
\bigg)^{\frac{1}{2}}
\\ 
\leq   &
  h^{\frac{3}{2}}
\bigg(
\E_\tau 
\bigg[
\int_{t}^{t+\tau h}
6^3  L_2^4
\E_W 
\bigg[
\Big( 1 
+
|  X   (t,x;s    )|^{4\gamma}
+
|  \cT^h   ( x )|^{4\gamma}
\Big)
\Big(
\big|
X   (t,x;s  ) 
\big|^4
+
\big|
\cT^h (  x )
\big|^4
\Big)
\bigg]
\dd s
\bigg]
\bigg)^{\frac{1}{2}}
\\ 
\leq     &
  h^{\frac{3}{2}}
\bigg(
\E_\tau 
\bigg[
\int_{t}^{t+\tau h}
6^3  L_2^4
\Big( 1 
+
\E_W 
\Big[
|  X   (t,x;s    )|^{4\gamma+4 }
\Big]
+
|   x |^{4\gamma+4 }
\Big)
\dd s
\bigg]
\bigg)^{\frac{1}{2}}
\\  
\leq    &
 h^{\frac{3}{2}}
\bigg(
\E_\tau 
\bigg[
\int_{t}^{t+\tau h}
6^3  L_2^4
\Big( 1 
+
|   x |^{4\gamma+4 }
+
\CM_1(2\gamma+2 ) 
d^{2\gamma +2 }
+
|   x |^{4\gamma+4 }
\Big)
\dd s
\bigg]
\bigg)^{\frac{1}{2}}
\\
\leq   &
C \big( 
|   x |^{2\gamma+2 }
+ 
d^{\gamma +1}
\big) h^2,
\end{aligned}
\end{equation}
where $C$ is a dimension-independent constant, depending on $\mu, \mu', \gamma,L_2$.
Collecting the cases $\ell=0$ and $\ell =1$ together, we show, for any $\ell \in \{0,1\}$ there exist some constants $C:= C(\mu, \mu', \gamma,\ell,L_2', L_2 )$ such that
\begin{equation}
\label{Randomization-eq:mathbb-J-22}
\mathbb{J}_{22}
\leq    
C \big( 
|   x |^{2\gamma+2 }
+ 
d^{\gamma +1}
\big) h^2.
\end{equation}
Inserting this and \eqref{Randomization-eq:mathbb-J-21} into \eqref{Randomization-eq:pri-estimate_one-step_weak_third_term} yields 
\begin{equation}
\label{Randomization-eq:estimate_onestep_strong_third-term}
\E 
\big[
\big|
\mathbb{J}_2
\big|^2
\big]
\leq   
C \big( 
|   x |^{4\gamma+2 }
+ 
d^{2\gamma +1}
\big) h^4,
\end{equation}
which further  implies 
\begin{equation}
\label{Randomization-eq:estimate_one-step_weak_third_term_of_pRLMC}
\big|\E 
\big[
\mathbb{J}_2
\big]\big|
\leq   
C \big( 
|   x |^{4\gamma+2 }
+ 
d^{2\gamma +1}
\big)^{\frac{1}{2}} h^2.
\end{equation}
Here $C$ is a dimension-independent constant, depending on $\mu, \mu', \gamma,\ell,L_2',L_2$.
Following the same arguments as used in  \eqref{Randomization-eq:pri-estimate_one-step_weak_third_term0} and \eqref{Randomization-eq:pri-estimate_one-step_weak_third_term},  one can treat $|\E [\mathbb{J}_3 ]|$ as follows:
\begin{equation}
\label{Randomization-eq:estimate_one-step_weak_fourth_term_of_pRLMC}
\begin{aligned}
|\E [\mathbb{J}_3 ]|
\leq &
\E \big[|\mathbb{J}_3 |\big]
\leq 
\Big(
\E 
\Big[
|  \mathbb{J}_3   |^2
\Big]
\Big)^{\frac{1}{2}}
\\  
\leq    &
h
\bigg(
\E 
\Big[
\big|
F (   
\Bar{Y}_m (t,x;t+ \tau h)
)
-
F (  
\cT^h (    \Bar{Y}_m (t,x;t+ \tau h))
)
\big|^2
\Big]
\bigg)^{\frac{1}{2}}
\\ 
\leq   &
L_2 h
\bigg(
\E 
\bigg[
\Big( 1 
+
|  \Bar{Y}_m (t,x;t+ \tau h)|^\gamma
+
|  \cT^h 
(   \Bar{Y}_m(t,x;t+ \tau h))|^\gamma
\Big)^2
\\    \times   &
\Big|
\Bar{Y}_m (t,x;t+ \tau h) 
-
\cT^h (   \Bar{Y}_m (t,x;t+ \tau h)
\Big|^2
\bigg]
\bigg)^{\frac{1}{2}}
\\  
\leq   &
2 L_2  h
\bigg(
\E 
\bigg[
\Big( 1 
+
2 |  \Bar{Y}_m (t,x;t+ \tau h)|^\gamma
\Big)^2
\vartheta^{-8\gamma-8}
d^{-4} h^4
\Big|
  \Bar{Y}_m (t,x;t+ \tau h))
\Big|^{8\gamma +10}
\bigg]
\bigg)^{\frac{1}{2}}
\\  
\leq   & 
C 
\bigg( 
d^{-4}
\E
\Big[
\big|
  \Bar{Y}_m (t,x;t+ \tau h))
\big|^{8\gamma +10}
\Big]
+
d^{-4}
\E
\Big[
\big|
  \Bar{Y}_m (t,x;t+ \tau h))
\big|^{10\gamma +10}
\Big]
\bigg)
h^3
\\ 
\leq  &
C \big(
d^{5\gamma+1}
+
d^{-4}
|x|^{10\gamma +10}
\big)^{\frac{1}{2}}
h^3, 
\end{aligned}
\end{equation}
where $C$ is a dimension-independent constant, depending on $\vartheta,\mu, \mu', \gamma,\ell,L_2',L_2$.
Putting estimates \eqref{Randomization-eq:estimate_one-step_weak_first_term}, \eqref{Randomization-eq:estimate_one-step_weak_second_term}, \eqref{Randomization-eq:estimate_one-step_weak_third_term_of_pRLMC}  and \eqref{Randomization-eq:estimate_one-step_weak_fourth_term_of_pRLMC} together and using the Young inequality, we derive from 
\eqref{Randomization-eq:pri-estimate-one-step-weak-error-pRLMC} that 
\begin{equation}
\big|
\E 
\big[
X(t,x;t+h)
-
\Bar{Y}(t,x;t+h)
\big]
\big|
\leq    
\Bar{K}_{1}^{\ell}
\Big(  
d^{5\gamma+1}
+
d^{-4}
|
x
|^{10\gamma+10}
\Big)^{\frac12}
h^2,
\end{equation}
where $\Bar{K}_1^{\ell}$ is a dimension-independent constant, depending on $\mu,\mu',\gamma,\vartheta,\ell,L_2,L_2'$.
Next we intend to obtain the one-step mean-square error bound. According to \eqref{Randomization-eq:difference_onestep_of_pRLMC}, one can use a fundamental inequality  to deduce
\begin{equation}
\begin{aligned}
& \E 
\Big[
\big|
X(t,x;t+h)
-
\Bar{Y}(t,x;t+h)
\big|^2
\Big]
\leq   
4 
\Big(
| x - \cT^h (x) |^2
+
\E 
\Big[
\big|
\mathbb{J}_1
\big|^2
\Big]
+
\E 
\Big[
\big|
\mathbb{J}_2
\big|^2
\Big]
+
\E 
\Big[
\big|
\mathbb{J}_3
\big|^2
\Big]
\Big).
\end{aligned}
\end{equation}
Owing to Lemma \ref{lem:properties_of_projected_operator_T}, one can easily see 
\begin{equation}
\label{Randomization-eq:estimate_onestep_strong_first_term}
|x - \cT^h(x)|^2
\leq  
4 \vartheta^{-8(\gamma+1)} d^{-4}
h^{4}
|  x |^{8\gamma+10}.
\end{equation}
Using the H\"older inequality and  Lemma \ref{lem:holder_continuous_of_LSDE_non_lip} yields 
\begin{equation}
\label{Randomization-eq:estimate_onestep_strong_sec-term}
\begin{aligned}
\E 
\Big[
\big|
\mathbb{J}_1
\big|^2
\Big]
\leq   
h 
\int_{t}^{t+h}
\E 
\Big[
\big|
F   (   X  (t,x;s   )  )
-
F (  X (t,x;t+ \tau h)
)
\big|^2
\Big]
\dd s
\leq   
C\big(
d^{2\gamma+1}
+
|
x
|^{4\gamma+2}
\big)
h^3.
\end{aligned}
\end{equation}
With the help of estimates \eqref{Randomization-eq:estimate_onestep_strong_third-term}, \eqref{Randomization-eq:estimate_one-step_weak_fourth_term_of_pRLMC}, \eqref{Randomization-eq:estimate_onestep_strong_first_term} and 
\eqref{Randomization-eq:estimate_onestep_strong_sec-term}, 
 we get
\begin{equation}
\E 
\Big[
\big|
X(t,x;t+h)
-
\Bar{Y}(t,x;t+h)
\big|^2
\Big]
\leq  
(\Bar{K}_2^{\ell})^2
\Big(  
d^{5\gamma+1}
+
d^{-4}
|
x 
|^{10\gamma+10}
\big]
\Big) h^3,
\end{equation}
where $\Bar{K}_2^{\ell}$ is a dimension-independent constant, depending on $\mu,\mu',\gamma,\vartheta,\ell,L_2,L_2'$. 
The proof is completed.}
\end{proof}

\noindent
\textit{Proof of Proposition \ref{prop:finite_time_error_analysis_pRLMC}.}
\textcolor{black}{In light of Theorem 3.3 of \cite{yang2025nonasymptotic}, one can combine   Assumptions 
\ref{Randomization-ass:diss_cond_and_grad_Lip_cond},
\ref{Randomization-ass:poly_growth_condition}, and Proposition \ref{prop:uniform_in_time_bounded_moments_to_pRLMC}, Lemma \ref{lem:onestep_waek_and_strong_error_of_pRLMC} to obtain
\begin{equation}
\E
\big[
|
X_{t_n} 
-
\Bar{Y}_n
|^2
\big]
\leq  
\exp
\big( (1 + 10L + 6 L_2)  T\big)
\Bar{K}^{\textcolor{black}{\ell}} 
\big(
 d^{(11\gamma+2)/2}
+
 d^{-4}
\E
\big[
|  x_0 |^{11\gamma+10}
\big]
\big)h^2,
\end{equation}
where $\Bar{K}^{\ell} := C (\Bar{K}_1^{\ell},\Bar{K}_2^{\ell}, \CM_1(\frac{11\gamma+10}{2}), \frac{\CM_3^\ell}{\mu})=
C(\mu,\mu',\gamma,\vartheta,\ell,L_2,L_2')$,
as required.}

\section{Numerical experiments}
\label{Randomization-sec:Numerical experiments}

In this section, we present several numerical experiments to validate  the non-asymptotic error bounds established above. Two types of target distributions are considered, including a Gaussian mixture distribution for the R(S)LMC algorithms, and a Gibbs distribution with double-well potential for the modified schemes pR(S)LMC.

The considered Gaussian mixture model consists of two symmetric Gaussian components with equal weights:
\begin{equation}
\pi_1 (x)  
= 
\tfrac12  \mathcal{N} (\mu_1,  I_d )  
+
\tfrac12  \mathcal{N} 
(-\mu_1, I_d ), 
\quad 
x \in \R^d,
\end{equation} 
where the mean vector is chosen as 
$\mu_1=\frac{2}{\sqrt{d}}(1,\cdots,1)^T \in \R^d$ so that $| \mu_1 | = 2$.
The Gibbs distribution with a double-well potential considered is given by
\begin{equation}
\pi_2(x)
= 
\frac{1}{Z}   \exp (- |x|^4 + |x|^2),   
\quad 
x \in  \R^d,
\end{equation}
where 
$
Z:= \int_{\R^d} \exp (- |x|^4 + |x|^2)  \, \dd x
$
is the normalization constant. 
%
%
It has been verified in Section 4 of \cite{yang2025nonasymptotic} that the Gaussian mixture model fulfills the assumptions of Theorem \ref{thm:main_thm_for_RLMC} and the Double-well potential adheres to those required in Theorem \ref{thm:main_thm_for_pRLMC}.

\textbf{Convergence rate.}
In the following experiments, we focus on target distributions in dimensions $d=10$ and $d=20$.
The "exact" solutions are identified as the numerical ones produced by the R(S)LMC and pR(S)LMC using a fine stepsize $h_{ref}= 2^{-15}$. Expectations are approximated by Monte Carlo averages over $2000$ independent trajectories.
To test the convergence rate, the schemes are implemented using various stepsizes $h\in \{2^{-6},\cdots, 2^{-11} \}$.
The root mean-square strong error is computed at the terminal time $T=2$ and plotted in Fig \ref{fig:GM_d=10} and \ref{fig:GM_d=20} against $h$ on a log–log scale, together with reference slopes of order $\tfrac12$ and order $1$. 
From Fig \ref{fig:GM_d=10} and \ref{fig:GM_d=20}, we observe that RLMC and RSLMC both exhibit an empirical convergence rate of order one. For pRLMC and pRSLMC applied to the Gibbs distribution with  double-well potential, one can observe the expected order-one convergence rate from Fig \ref{fig:DW_d=10} and \ref{fig:DW_d=20}.

\textbf{Dimension dependence.}
To examine the dimension dependence, we consider a range of dimensions $d \in \{ 40, 42, 44, 46, 48, 50 \}$.
For each dimension, we simulate $2000$ independent trajectories over the time interval $[0,2]$. The "exact" solutions are generated on a highly refined temporal grid, using the classical Euler–Maruyama with step size $h=2^{-12}$. 
For each method and each dimension, the root mean-square error is computed 
and plotted against the dimension on a log–log scale. A reference slope of $0.5$ is added to illustrate the empirical scaling rate. As shown in Fig \ref{fig:dd1} and \ref{fig:dd2}, the dimension dependence of RLMC and RSLMC under the  fixed stepsizes $h=2^{-8}$ and $h=2^{-9}$ is half-order.

\begin{figure}
\centering
\begin{minipage}{0.48\textwidth}
\centering
\includegraphics[width=\textwidth]{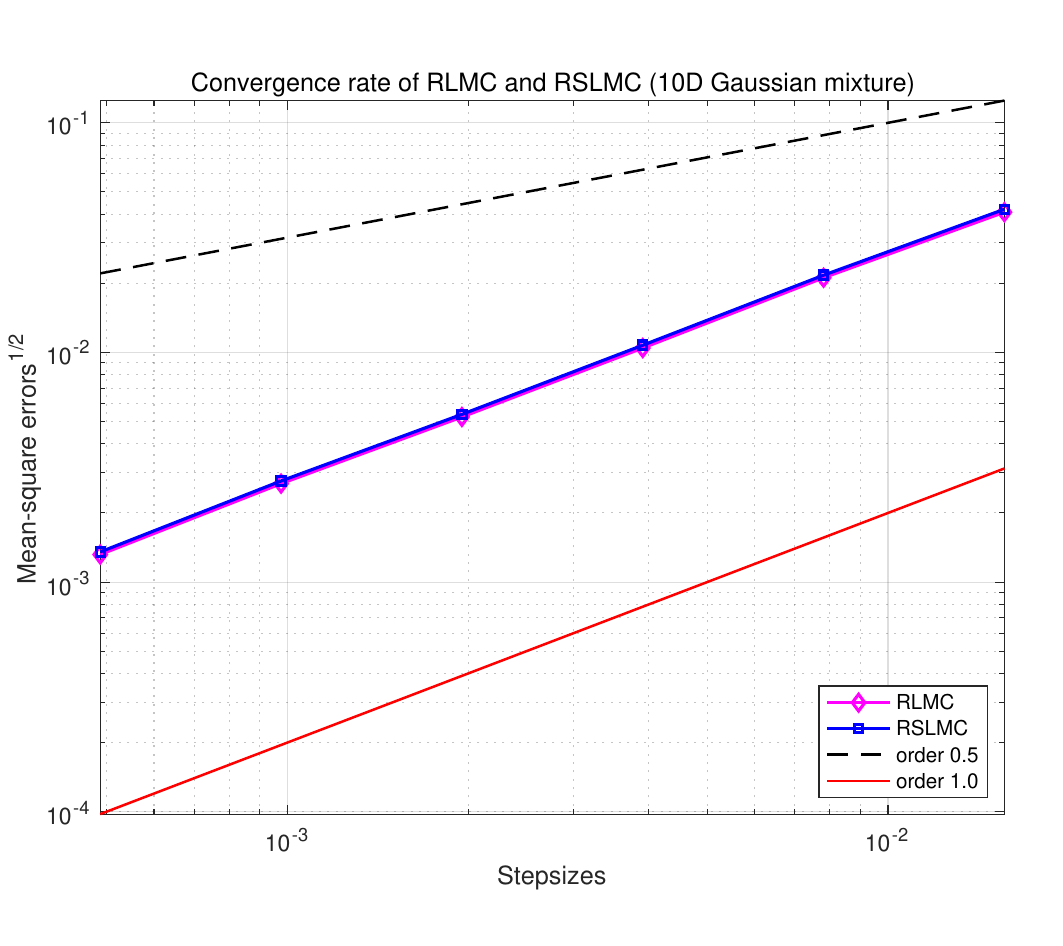}
\caption{Order-one convergence
of RLMC and RSLMC for 10-dimensional Gaussian Mixture.}
\label{fig:GM_d=10}
\end{minipage}
\hfill
\begin{minipage}{0.48\textwidth}
\centering
\includegraphics[width=\textwidth]{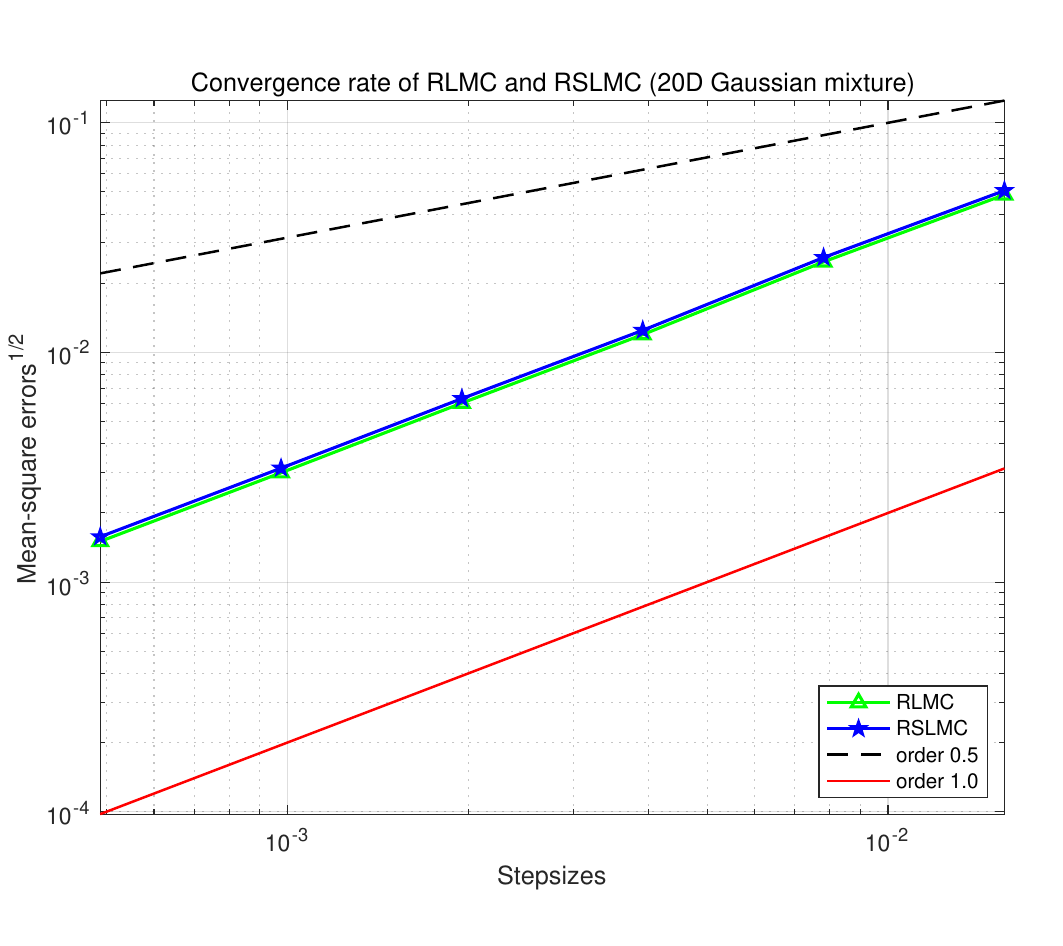}
\caption{Order-one convergence
of RLMC and RSLMC for 20-dimensional Gaussian Mixture.}
\label{fig:GM_d=20}
\end{minipage}
\end{figure}

\begin{figure}
\centering
\begin{minipage}{0.48\textwidth}
\centering
\includegraphics[width=\textwidth]{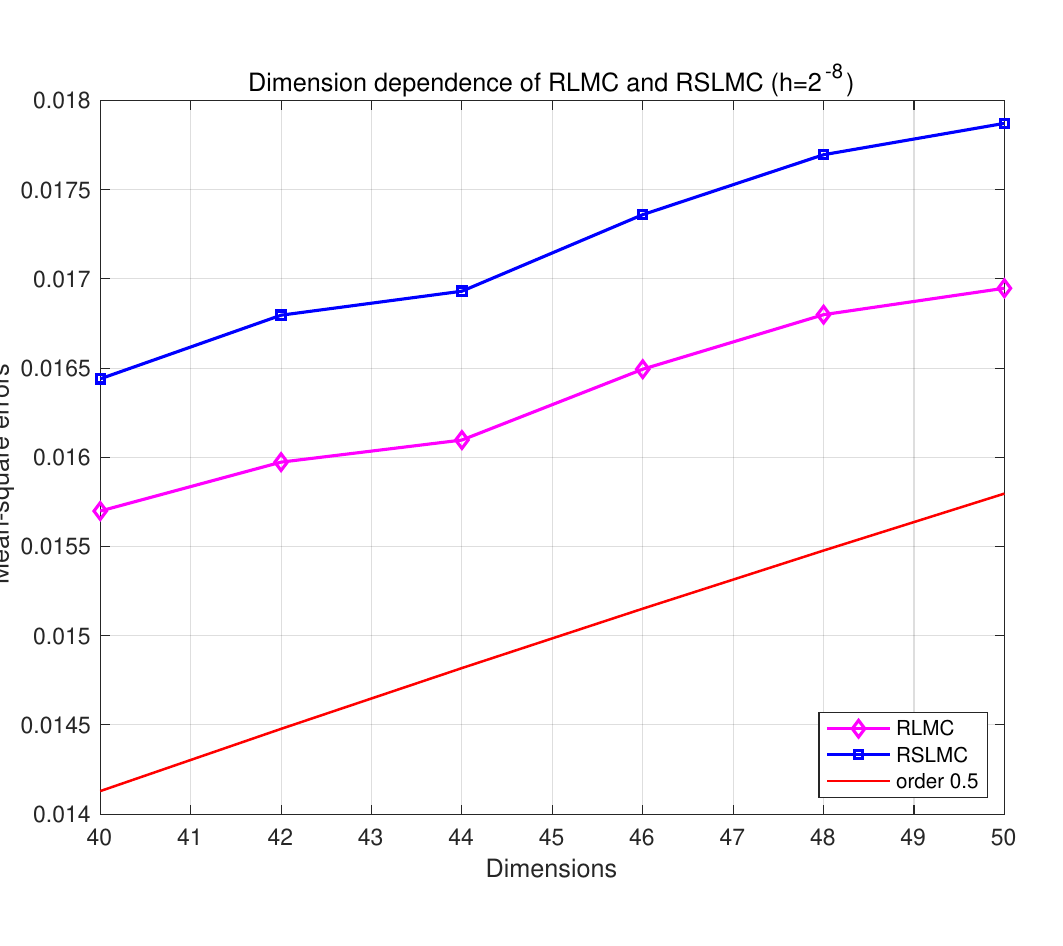}
\caption{$\sqrt{d}$ dimension dependence of RLMC and RSLMC ($h=2^{-8}$).}
\label{fig:dd1}
\end{minipage}
\hfill
\begin{minipage}{0.48\textwidth}
\centering
\includegraphics[width=\textwidth]{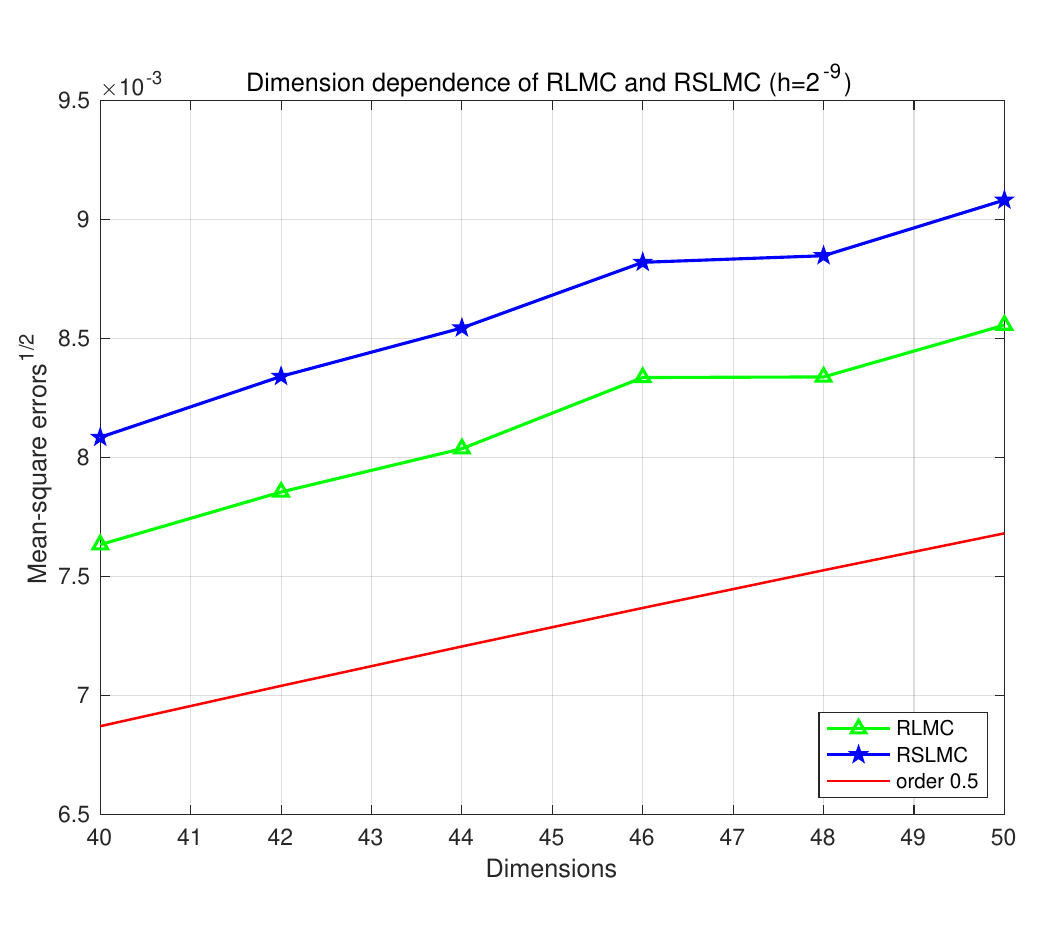}
\caption{$\sqrt{d}$ dimension dependence of RLMC and RSLMC ($h=2^{-9}$)}
\label{fig:dd2}
\end{minipage}
\end{figure}

\begin{figure}
\centering
\begin{minipage}{0.48\textwidth}
\centering
\includegraphics[width=\textwidth]{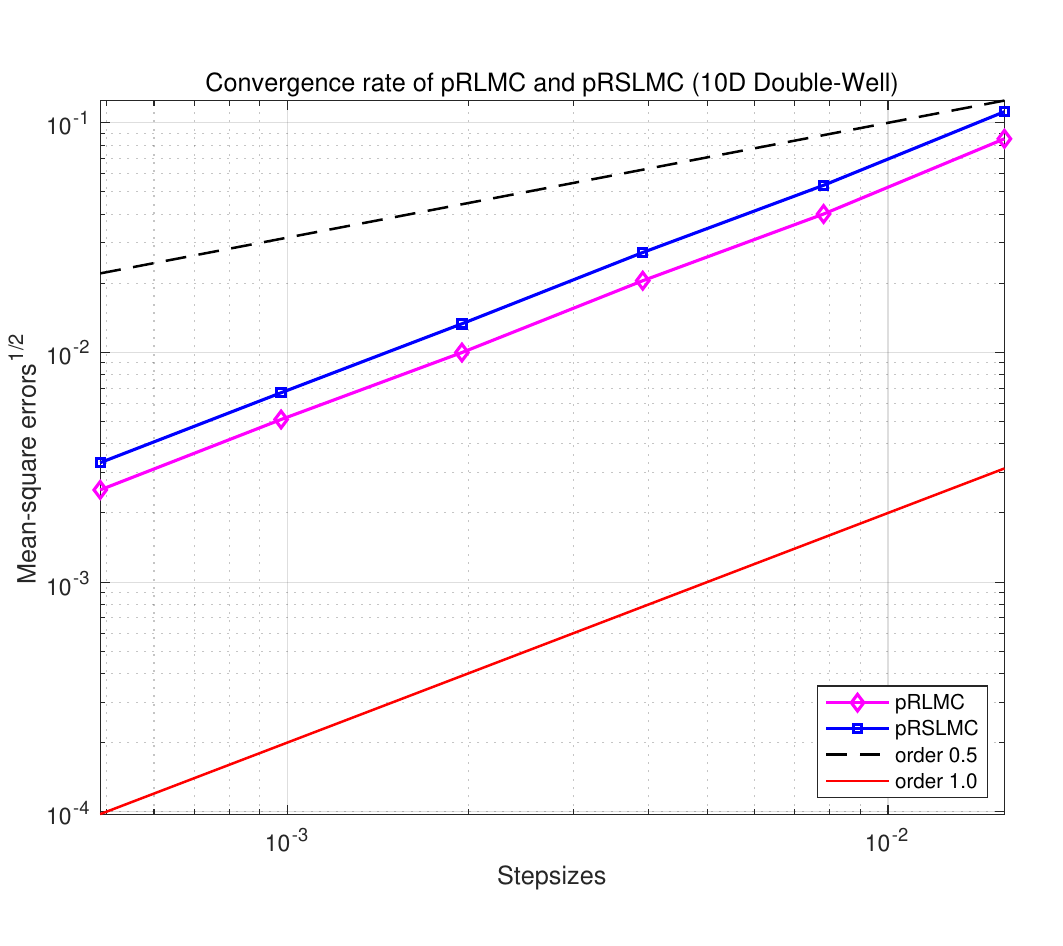}
\caption{Order-one convergence
of pRLMC and pRSLMC for 10-dimensional Double well.}
\label{fig:DW_d=10}
\end{minipage}
\hfill
\begin{minipage}{0.48\textwidth}
\centering
\includegraphics[width=\textwidth]{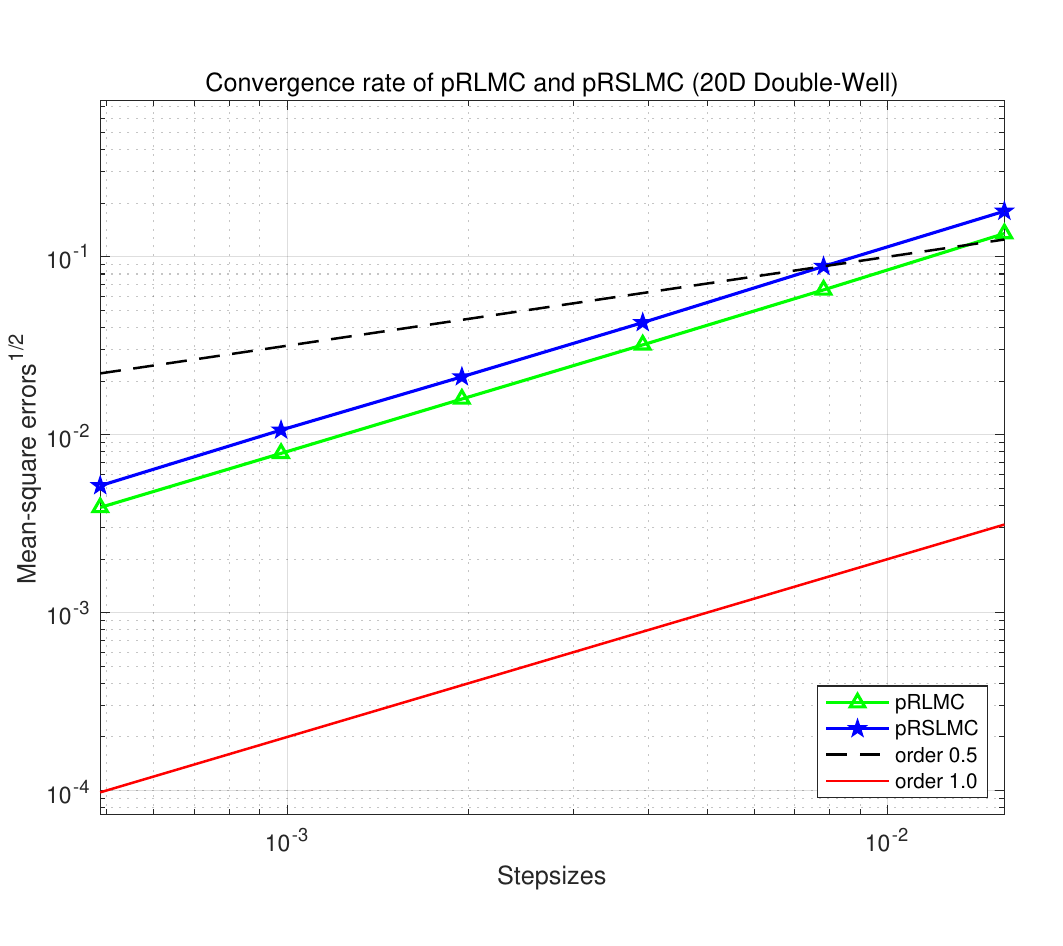}
\caption{Order-one convergence
of pRLMC and pRSLMC for 20-dimensional Double well.}
\label{fig:DW_d=20}
\end{minipage}
\end{figure}

\section{Conclusion and future work}
\label{Randomization-sec:Conclusion}
In this work, we propose a new kind of randomized splitting Langevin Monte Carlo (RSLMC) algorithm for sampling from high dimensional distributions, which  requires only one evaluation of $\nabla U$ per one time step and thus improves the existing randomized Langevin Monte Carlo (RLMC) in terms of the number of evaluations of $\nabla U$. Moreover, we establish a non-asymptotic error bound $O(\sqrt{d} h)$ in $\mathcal{W}_2$-distance for the existing RLMC and the newly proposed RSLMC in the framework of LSI, without requiring additional smoothness assumptions on $U$ other than the gradient Lipschitz condition. Also, we examine the sampling problem when the gradient of the potential $U$ is non-globally Lipschitz with superlinear growth. In this case, we propose a modified RSLMC sampler and derive a non-asymptotic error bound in $\mathcal{W}_2$-distance with convergence rates and dimension dependence revealed. 
The key idea of the non-asymptotic error analysis in
the non-convex setting is to acquire the desired uniform-in-time convergence via finite-time convergence 
combined with the exponential ergodicity of SDEs and uniform-in-time moment bounds
of algorithms.
We highlight that this approach of error analysis also applies to higher order LMC sampling algorithms \cite{li2019stochastic,sabanis2019higher} and sampling based on the underdamped Langevin dynamics \cite{yu2024langevin,shen2019randomized}, which are our ongoing works  \cite{Lv2025,wang2025Accelerating}.
In the future, we also intend to follow this idea and investigate the non-asymptotic error bound in other distances under other weaker functional inequalities, such as Poincare inequality \cite{chewi2024analysis}.

\bibliography{main}

\end{document}